\newcommand{\la}{\langle}
\newcommand{\ra}{\rangle}
\newcommand{\Norm}[1]{\left\|{#1}\right\|}
\def\commentson{0} 
\newcommand{\yv}[1]{\textcolor{red}{Yvonne: #1}}
\newcommand{\my}[1]{\textcolor{orange}{Mingyu: #1}}
\newcommand{\dnote}[1]{\textcolor{teal}{Dana: #1}}
\newcommand{\anti}[1]{\textcolor{magenta}{Antigoni: #1}}
\newcommand{\wm}[1]{\textcolor{blue}{ Min: #1}}
\newcommand{\yv}[1]{}
\newcommand{\my}[1]{}
\newcommand{\dnote}[1]{}
\newcommand{\anti}[1]{}
\newcommand{\wm}[1]{}
\renewcommand\vec{\mathbf}
\newif\ifsubmission
\newif\ifsubmissionshort
\newtheorem{theorem}{Theorem}[section]
\newtheorem{proposition}[theorem]{Proposition}
\newtheorem{lemma}[theorem]{Lemma}
\newtheorem{corollary}[theorem]{Corollary}
\newtheorem{claim}[theorem]{Claim}
\theoremstyle{definition}
\newtheorem{definition}[theorem]{Definition}
\newtheorem{assumption}[theorem]{Assumption}
\theoremstyle{remark}
\newtheorem{remark}[theorem]{Remark}}
\newtheorem{definition}{Definition}[section]
\newtheorem{theorem}{Theorem}
\newtheorem{lemma}[theorem]{Lemma}
\newtheorem{claim}[theorem]{Claim}
\newtheorem{corollary}{Corollary}
\newcommand{\acksection}{\section*{Acknowledgments and Disclosure of Funding}}
\begin{document}


\title{Bounding the Excess Risk for Linear Models Trained on Marginal-Preserving, Differentially-Private, Synthetic Data}
\author{Yvonne Zhou \\ University of Maryland \\ College Park, MD 20742 \\ \texttt{skyzhou@umd.edu} \\
\And Mingyu Liang \\ University of Maryland \\ College Park, MD 20742 \\ \texttt{mliang@umd.edu} \\
\And Ivan Brugere \\ J.P. Morgan AI Research \\ New York, NY, 10017 \\
\texttt{ivan.brugere@jpmchase.com}\\
\And Danial Dervovic \\ J.P. Morgan AI Research \\ Edinburgh, UK \\
\texttt{danial.dervovic@jpmchase.com}\\
\And Antigoni Polychroniadou \\ J.P. Morgan AI Research \\AlgoCRYPT CoE\\ New York, NY, 10017 \\
\texttt{antigoni.polychroniadou@jpmorgan.com}\\
\And Min Wu \\ University of Maryland \\ College Park, MD 20742 \\ \texttt{minwu@umd.edu}\\
\And Dana Dachman-Soled \\ University of Maryland \\ College Park, MD 20742 \\ \texttt{danadach@umd.edu} }
\maketitle

\begin{abstract}
The growing use of machine learning (ML) has raised concerns that an ML model may reveal private information about an individual who has contributed to the training dataset. To prevent leakage of sensitive data, we consider using differentially-private (DP), synthetic training data instead of real training data to train an ML model. 
A key desirable property of synthetic data is its ability to preserve the low-order marginals of the original distribution. 
Our main contribution comprises novel upper and lower bounds on the excess empirical risk of linear models trained on such synthetic data, for continuous and Lipschitz loss functions.
We perform extensive experimentation alongside our theoretical results.

\end{abstract}

\section{Introduction}

\yv{comments: }

\yv{
Anti: 1. The laser sentence seemed unfinished since we do not add any conclusion of what we draw from the experiments - like verify our theory or something like this. 2. In the contributions section $\nu$ and $\delta$ are not defined it seems. 3. I don't see anywhere some mentions to our techniques for example why we use polynomial approximation etc. it would be good to mention something about our tech in the contributions section.}

\yv{Danial: For sure, I think the Intro is the place for more detailed discussion of points 1 and 2. Here is probably safer to say we are the first to provide theoretical support for using these models .. with respect to the downstream ML model performance.., as opposed to the marginal-preserving data itself. I agree with Min’s point from the meeting on Wednesday. Mechanism 1 stands on its own as a device for proving the bounds – we don’t have to make too much noise about it being a practical synthetic data generating algorithm.}

\sloppy
 Machine learning (ML) is extensively utilized at present, but a major concern is that the trained ML model may reveal private information about an individual who has contributed to the training dataset~\citep{fredrikson2014privacy, shokri2017membership, wang2022variational}.
 In response, various differentially-private (DP) machine learning methods, which typically add noise during the \emph{training} process, have been proposed in the literature~\citep{  Abadi_2016,6979031, papernot2017semisupervised, papernot2018scalable, DP-GD_0, yu2021gradient}. We refer to these methods as Training-Based Differentially-Private Machine Learning (Training-DPML).
 In contrast, in this work, we consider
 using differentially-private, synthetic training data instead of real training data to train the machine learning model. By doing so, 
one automatically achieves the guarantee that any models trained on the synthetic data are themselves differentially-private--i.e.~the weights associated with the trained models do not leak
information about any single individual in the dataset--without adding any additional noise during training. 
We therefore refer to the methods we study in this work as Preprocessing-Based Differentially-Private Machine Learning (Pre-DPML).

Pre-DPML techniques are an attractive option as opposed to Training-DPML techniques for several reasons.
First, Training-DPML algorithms require significant trust since the original sensitive data must be stored and handled throughout the training process and can only be discarded once all training has completed. 
Second, in Training-DPML techniques the privacy budget must grow with the total number of models trained and when the budget is depleted no further computations may be performed on the data. 

In contrast, when
Pre-DPML via 
DP synthetic
data generation is employed,
the synthetic data is generated once and for all and the original sensitive data can be immediately discarded. Subsequently, one can perform any downstream task any number of times without requiring an increased privacy budget.
Further, one can safely use \emph{any} optimization algorithm out-of-the-box for training on the synthetic data (e.g.~second order methods or built-in Python optimization algorithms). 
Given the benefits of the Pre-DPML approach, our goal is to understand whether it is information-theoretically possible to generate synthetic data that achieves differential privacy and yields low excess risk in ML tasks. To answer this question, we first highlight a desirable property of DP synthetic data from the literature, known as \emph{marginal-preserving} synthetic data. The main results of this work, which we summarize in Section~\ref{sec:contributions}, provide novel upper and lower bounds on the excess empirical risk when training linear models on real versus marginal-preserving, synthetic data. To obtain a complete end-to-end analysis, we prove that DP and marginal-preserving synthetic data is attainable, whereas the marginal-preserving properties of prior DP mechanisms were heuristic.

\paragraph{Marginal-preserving synthetic data generation.} 
A $d$-th order marginal of a distribution
is the joint probability distribution of a subset of $d$ attributes.
Similarly, a $d$-th order marginal of a \emph{dataset} captures all possible statistics of the dataset for 
a subset of $d$ attributes.
Specifically, given a dataset, a marginal for a set of $d$ attributes is a vector that counts the number of occurrences of each combination of possible values of the attributes in the set.

The goal of synthetic data generation algorithms is to produce a synthetic dataset that closely matches the statistics of the original dataset.
In marginal-preserving synthetic data generation, the synthetic data preserves the statistics of a target set of marginals, as closely as possible. 

\paragraph{Marginal-preserving approach for DP synthetic data.}
Various marginal-preserving and \emph{differentially private}
synthetic data generation algorithms have been proposed in the literature, such as 
PrivBayes\citep{PrivBayes},  PrivSyn\citep{PrivSyn}, PrivMRF\citep{PrivMRF}, PEP and GEM \citep{PEP}, Private-PGM \citep{Graphical-model}, and AIM\citep{aim}.
Typically, the quality of the synthetic data has been measured in terms of the ability to accurately respond to statistical queries, even if the queries involve sets of attributes that were not contained in the target set of marginals. For example, in prior work, the synthetic data was evaluated by comparing its marginals
with the marginals of the true data for random triples of attributes, or by examining how well the
synthetic data preserved random high-order conjunctions~\citep{DBLP:journals/corr/abs-2108-04978}.

However, to our knowledge, research on the utility error of downstream tasks trained on synthetic data remains limited. While \cite{li2023statistical}'s work made some initial strides in analyzing the utility of downstream tasks, it relied on certain strong assumptions. For instance, they assumed that the data distribution can be represented as a Bayesian network with a degree no greater than $k$, in which case the variation distance stemming from high-order terms can be omitted. Alternatively, they were able to remove this assumption, but in this case the error grows exponentially to the dimension. In contrast, our bound applies to any data distribution by using a polynomial approximation of the loss function in the analysis. This essentially allows us to bound the excess risk stemming from high-order marginals, \emph{without imposing assumptions on the data distribution}. 
Additionally, they focused on training ML models with norm-bounded loss functions and utilized a specific marginal-based mechanism (PrivBayes). In contrast, our goal is to assess the quality of ML models trained on any continuous and Lipschitz loss function, and employs any marginal-based mechanisms.

\subsection{Our Contributions} \label{sec:contributions}
Our paper focuses on investigating the excess empirical risk (measured w.r.t.~the real dataset) of training linear models on marginal-preserving synthetic data
that approximately preserves the $d$-th order marginals of the real dataset. We present both theoretical and experimental results.

\begin{itemize}
     
\item In Section \ref{ssec:upperbound_loss_only}, we upper bound the excess empirical risk, as long as the low-order marginals of the synthetic data are sufficiently close to the real marginals.
We consider the setting where the 
dataset is scaled so that all $m$-dimensional datapoints lie in the $m$-dimensional unit ball and where we optimize the weights $\vec{w}$ over the unit ball.
In Theorem \ref{thm:ASD_gen}
we demonstrate that if the $\ell_1$ distance of all marginals up to order $d$ of the real and synthetic data is at most $\nu$, then for any continuous and $O(1)$-Lipschitz loss function, the difference in cost is upper-bounded by  $O(1/\sqrt{d-1} + (3m)^{d-1}\nu/n)$, where $n$ is the number of samples in both datasets.
Additionally, in Theorem \ref{thm:ASD_log}, we show that for logistic regression specifically, we achieve a tighter upper bound of $O(1/(d-1) + (3m)^{d-1}\nu/n)$.
\item 
 In Section \ref{ssec:upperbound_loss_priv}, 
we give an outline of an information-theoretic mechanism that generates $(\epsilon, \delta)$-differentially private synthetic data with a bounded $\ell_1$ difference of 
 $\frac{4 m^{d/2}l^d\sqrt{2\ln(1.25/\delta)(\ln(2)(1+\lambda)+d\ln(ml))}}{\epsilon}$ 
 except for $2^{-\lambda}$ probability, where $l$ is the maximum domain size of any attribute.
Substituting this bound into $\nu$ in the aforementioned Theorems, implies that as the size of the database $n$ goes to infinity, the excess empirical risk is dominated by $O(\frac{1}{\sqrt{d-1}})$ for general continuous and $O(1)$-Lipschitz loss functions, and dominated by $O(\frac{1}{d-1})$ for logistic regression.
In practice, various efficient DP algorithms can heuristically preserve the marginals. However, there is a lack of conclusive proof regarding the attainability of a specific $\ell_1$ bound for all input datasets. We conduct experiments and report the average $\ell_1$ distance, over selected queries,  achieved in practice for multiple datasets in Section~\ref{sec:experiment Synthetic data in varied privacy budgets}.
\item 
In Section \ref{sec:lower_bound}, we lower bound the excess empirical risk and demonstrate that for a specific range of parameter choices, we obtain a nearly tight match to the upper bound: $\Omega (\frac{1}{\ln^3(n)})$ versus $O (\sqrt{\frac{\ln(\ln(n))}{\ln(n)}})$. 
Our lower bound asserts the existence of a particular data distribution for which no marginal-preserving synthetic data algorithm, even if inefficient, can significantly outperform the upper bound. This, however, does not eliminate the possibility of better performance for real-life data distributions. Indeed,  in Section \ref{sec:exp}, our experimental results surpass the outcomes predicted by our lower bound.
 Exploring reasonable assumptions on data distributions that allow bypassing the lower bound and obtaining improved upper bounds is an interesting future direction.
\item 
 We performed extensive experimentation, and the results can be found in Section~\ref{sec:exp}. To summarize our findings, we observed that, when with $(2,\frac{1}{n^2})$-DP, 
the accuracy of the model trained on the marginal-preserving, DP synthetic data drops by less than 1\% compared to the real data, and the excess empirical risk is less than 0.02. The exception is the Heart dataset, which exhibits a 2.2\% drop in accuracy and  0.032 excess empirical risk, likely due to its considerably smaller dataset size.

\end{itemize}

\subsection{Related Work}
Private  stochastic gradient descent (SGD) was first introduced by Song et al. \citep{private_SGD_1}, and was subsequently enhanced in~\citep{private_SGD_2} and
\citep{Abadi_2016}.
DP-SGD modifies stochastic gradient descent by clipping per-sample gradients for sensitivity control and by injecting noise to aggregated batch gradients at each intermediate update.
Researchers have explored the application of DP-SGD and its variants~\citep{DP-GD_0} to various tasks~\citep{DP-SGD_FL_language, DP-SGD_NLU,DP-SGD_image,DP-SGD_medical}, and frameworks such as Distributed/Federated Learning ~\citep{DP-SGD_FL_language, DP-SGD_FL_medical,DP-SGD_fairness_decentralized}.
In contrast to DP-GD/DP-SGD, recent studies \citep{2nd_order_1, 2nd_order_2} suggest introducing noise to the \textit{Hessian} of the loss function rather than the gradient.  This technique allows the realization of differentially private optimization via second-order methods, which demonstrate a faster convergence rate than first-order methods such as gradient descent.
Another noteworthy DP-ML method is Private Aggregation of Teacher Ensembles(or PATE)~\citep{papernot2017semisupervised, papernot2018scalable}. PATE proposes training an ensemble of non-private models (teachers), obtaining their predictions on a small set of unlabeled public data, and central aggregating predictions with noise. The labeled public data points are then used to train a student model. 
It is apparent that deploying PATE would consume computational overhead for training multiple teacher models in order to train a single student model. Moreover, it crucially presupposes the availability of public, unlabeled data.

\section{Notation and Background}\label{sec:bg}
We use $[n]$ to denote $\{1,2,\dots,n\}$ and boldface variable to represent a vector, e.g., $\vec{v}$ and $\vec{h}$. Moreover, we use $\vec{v}[i]$ to denote the $i^{th}$ entry of the vector, and $\vec{v}[q] = (\vec{v}[j])_{j\in q}$ to denote the subvectors containing entries in set $q$.

\subsection{Data and Marginals}

\paragraph{Data.} A dataset $D$ is a multiset of $n$ samples, each can be represented as $\vec{v}=(\vec{x},y)\in V$, where $\vec{x} = (x_1,\dots,x_m)$ is a vector of $m$ features and $y$ is the corresponding label/class for the sample. For convenience, we may also refer to $y$ as the $(m+1)^{th}$ feature. For $j\in[m+1]$, let $\Omega_j$ denote the domain of possible values for $j^{th}$ feature and $l = \max_j|\Omega_j|$. Also, we set $y \in \{-1,1\}$. Finally, let $q\subseteq [m+1]$ be a subset of attributes, and $\Omega_q=\Pi_{j\in q} \Omega_{j}$.

\begin{definition}[Marginal of Dataset] \label{def:marginal}

The marginal of dataset $D$ on a subset of attributes $q$ is a vector $\vec{h}_q \in \mathbb{R}^{|\Omega_q|}$, indexed by domain element $\vec{t} \in \Omega_q$, such that each entry is  a count, i.e., $\vec{h}_q[\vec{t}] = \sum_{\vec{v} \in D}\mathbb{I}[\vec{v}[q]=\vec{t}]$. We let $M_q: V^n \rightarrow \mathbb{R}^{|\Omega_q|}$ denote the function that computes the marginal on $q$, i.e., $\vec{h}_q = M_q(D)$. 
\end{definition}

Given that a marginal is specified by an attribute set $q$, we also refer to $q$ as a marginal query. Moreover, for $d \leq m+1$, let $Q^m_{\leq d}$
consist of all
$q \subseteq [m+1]$ with size at most $d$. Furthermore, we say a set of marginals $\{\vec{h}_q\}_{q\in Q^m_{\leq d}}$ is \emph{consistent}, if there exists a dataset $D$, such that $M_q(D) = \vec{h}_q$ for all $q\in Q^m_{\leq d}$.

\subsection{Learning Linear Models with a Convex Loss}

We consider learning linear models for binary classification. Specifically, let $L(\vec{w},D)$ be the empirical risk of dataset $D$ on model $\vec{w}$ defined as $L(\vec{w},D) \triangleq \frac{1}{n}\sum_{(\vec{x},y)\in D} \varphi(\la\vec{w},\vec{x}\ra y))$, where $\varphi(\la\vec{w},\vec{x}\ra y)):\mathbb{R}\rightarrow\mathbb{R}$ is the loss of linear model $\vec{w}$ for sample $(\Vec{x},y)$. Throughout the paper, we consider $\varphi$ that is convex and Lipschitz.

\paragraph{Logistic Regression}
Logistic regression is a prominent representative model in learning linear models. We denote its empirical risk of dataset $D$ on model $\vec{w}$ as $\hat{L}(\vec{w},D) \triangleq \frac{1}{n}\sum_{(\vec{x},y)\in D}\hat{\varphi}(\la \vec{w},\vec{x}\ra y)$, where $ \hat{\varphi}(\la \vec{w},\vec{x}\ra y)= -\ln\left(\frac{1}{1+e^{-\la \vec{w},\vec{x}\ra y}}\right).$


\subsection{Polynomial Approximation}\label{sec:poly_approx}

Our proof of the upper bound relies on the technique of approximating the loss function with a bounded degree polynomial. Specifically, we consider the Bernstein polynomial \citep{bernstein1912proof,ROULIER1970117_bernstein, guan2009iterated}, which provides a theoretic analysis of its approximation error and the absolute values of its coefficients.

\begin{definition}[Bernstein Polynomial Approximation] Let $f$ be a function on $[a,b]$, the Bernstein polynomial approximation of degree $d$ is defined as
$$P_df(x) = \sum_{i=0}^d f\left(\frac{i}{d} \cdot (b-a)+a\right) \cdot B_{di}(x), a\leq x \leq b,$$
where $B_{di}(x) = \binom{d}{i} \cdot \left(\frac{x - a}{b - a}\right)^i \cdot \left(1 - \frac{x - a}{b - a}\right)^{d - i}$.
\end{definition}

Let $\|f\| = \max_{a\leq x \leq b} |f(x)|
$ denote the maximum absolute value when the function takes value from $[a,b]$.  
We utilize the following two error upper-bound of Bernstein polynomial approximations.

\begin{theorem}
[\citep{ROULIER1970117_bernstein}, Th. 1 and \citep{bernstein_r_0}, Th. 1.6.1]
\label{bernstein_1}
Suppose $a\leq 0<1\leq b$, and $f$ is a continuous function on $[a, b]$, for $d=1, 2, ...$,
\begin{align}
    \Norm{P_df-f}&\leq \frac{5}{4} \omega \left(f, \frac{b-a}{\sqrt{d}}\right)\label{bern_err},
\end{align}
where $\omega$ is the modulus of continuity of $f$ on $[a, b]$. Additionally, let $P_df(x)=\sum_{k=0}^d a_{dk}x^k$, then for $d=1, 2, ..., $
\begin{align}
    \sum_{k=0}^d |a_{dk}| &\leq \Norm{f}\left(1+\frac{2}{b-a}\right)^d.\label{bern_coef}
\end{align}

\end{theorem}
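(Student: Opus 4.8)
The statement packages two classical facts about Bernstein approximation, so the plan is to reconstruct their standard proofs, reducing everything to the unit interval first. Writing $u = (x-a)/(b-a)$ and $g(u) = f(a + (b-a)u)$ turns $P_d f$ into the ordinary Bernstein polynomial $B_d g(u) = \sum_{i=0}^{d} g(i/d)\,p_{di}(u)$ with $p_{di}(u) = \binom{d}{i} u^i (1-u)^{d-i}$, and rescales the modulus of continuity as $\omega(g,\eta) = \omega(f,(b-a)\eta)$. Hence the error estimate \eqref{bern_err} is equivalent to $\Norm{B_d g - g} \le \tfrac54\,\omega(g, 1/\sqrt{d})$ on $[0,1]$, which is where the real work lies; the hypothesis $a \le 0 < 1 \le b$ enters only to guarantee $b - a \ge 1$ and that the evaluation interval $[0,1]$ sits inside $[a,b]$.

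For the error bound I would use the two defining identities of the Bernstein basis: the partition of unity $\sum_i p_{di}(u) = 1$ and the second-moment identity $\sum_i (i/d - u)^2 p_{di}(u) = u(1-u)/d \le 1/(4d)$. Starting from $B_d g(u) - g(u) = \sum_i \bigl(g(i/d) - g(u)\bigr) p_{di}(u)$ and bounding each difference by $\omega(g, |i/d - u|)$, the crude route applies subadditivity $\omega(g,\lambda\eta) \le (1+\lambda)\omega(g,\eta)$ together with Cauchy--Schwarz against the second-moment identity; unfortunately this only yields the constant $3/2$.

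The main obstacle is sharpening $3/2$ to the stated $5/4$, and the trick is to avoid Cauchy--Schwarz entirely. The key observation is the uniform quadratic bound $\omega(g,t) \le \bigl(1 + t^2/\eta^2\bigr)\omega(g,\eta)$, valid for every $t \ge 0$: when $t \le \eta$ it follows from monotonicity of $\omega$, and when $t > \eta$ it follows from subadditivity together with $t/\eta \le t^2/\eta^2$. Applying this with $t = |i/d - u|$ and summing directly against the second-moment identity gives $\Norm{B_d g - g} \le \omega(g,\eta)\bigl(1 + \tfrac{1}{\eta^2}\cdot\tfrac{1}{4d}\bigr)$, and the choice $\eta = 1/\sqrt{d}$ collapses the bracket to exactly $1 + 1/4 = 5/4$; translating back through $\omega(g,1/\sqrt d) = \omega(f,(b-a)/\sqrt d)$ recovers \eqref{bern_err}.

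For the coefficient bound \eqref{bern_coef} I would work directly in $x$, using the factored form $P_d f(x) = (b-a)^{-d}\sum_i f(t_i)\binom{d}{i}(x-a)^i (b-x)^{d-i}$ with $t_i = a + \tfrac{i}{d}(b-a)$. The engine is submultiplicativity of the coefficient $\ell_1$ norm under polynomial multiplication, which is immediate from the triangle inequality applied to each convolution coefficient, combined with the exact values $\|(x-a)^i\|_1 = (1-a)^i$ and $\|(b-x)^{d-i}\|_1 = (1+b)^{d-i}$ (using $a \le 0$ and $b \ge 1$). Bounding $|f(t_i)| \le \Norm{f}$ and summing by the binomial theorem then gives $\sum_k |a_{dk}| \le \Norm{f}(b-a)^{-d}\bigl((1-a) + (1+b)\bigr)^d = \Norm{f}\bigl(1 + \tfrac{2}{b-a}\bigr)^d$, which is the claim. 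This half is essentially bookkeeping and presents no genuine difficulty.
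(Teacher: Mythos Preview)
The paper does not supply its own proof of this theorem; it is quoted as background from the cited references \citep{ROULIER1970117_bernstein} and \citep{bernstein_r_0}, so there is no in-paper argument to compare against. That said, your proposal is a faithful reconstruction of the classical proofs behind those citations. The error estimate via the quadratic majorant $\omega(g,t)\le(1+t^2/\eta^2)\,\omega(g,\eta)$ and the second-moment identity is exactly Popoviciu's sharpening that yields the constant $5/4$ (as in Lorentz's treatment underlying \citep{bernstein_r_0}), and your coefficient-sum argument---submultiplicativity of the $\ell_1$ coefficient norm applied to the factored Bernstein basis, exploiting $a\le 0$ and $b\ge 1$ to make the linear factors' coefficient sums equal $(1-a)$ and $(1+b)$---is precisely Roulier's device. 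Both halves are correct as written.
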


\begin{theorem}[\citep{bernstein_rth_order}]
\label{bernstein_2}
    Suppose $f$ is a function on $[0,1]$ with a continuous first-order derivative. For $d=1,2,\dots$,
    $$\Norm{P_df-f}\leq \frac{3}{4\sqrt{d}} \omega \left(f',\frac{1}{\sqrt{d}}\right),$$
    where $\omega$ is the modulus of continuity of $f'$, which is the first derivative of $f$.  
\end{theorem}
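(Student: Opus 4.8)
The plan is to exploit the two structural features of the Bernstein operator on $[0,1]$: that the basis functions $B_{di}(x) = \binom{d}{i}x^i(1-x)^{d-i}$ form a partition of unity, $\sum_{i=0}^d B_{di}(x) = 1$, and that they reproduce the binomial mean, $\sum_{i=0}^d \frac{i}{d} B_{di}(x) = x$. Using the first fact I would write $P_df(x) - f(x) = \sum_{i=0}^d\left(f(i/d) - f(x)\right)B_{di}(x)$, and then replace each difference by its integral form $f(i/d) - f(x) = \int_x^{i/d} f'(t)\,dt$, which is available precisely because $f$ has a continuous first derivative.

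The key idea is to peel off the linear part of $f'$. Splitting $f'(t) = f'(x) + (f'(t) - f'(x))$ inside the integral yields $f(i/d) - f(x) = f'(x)(i/d - x) + \int_x^{i/d}(f'(t) - f'(x))\,dt$. By the mean-reproducing property, the contribution of the $f'(x)(i/d - x)$ term sums to $f'(x)\sum_i (i/d - x)B_{di}(x) = 0$, so only the remainder survives:
$$P_df(x) - f(x) = \sum_{i=0}^d\left(\int_x^{i/d}(f'(t) - f'(x))\,dt\right)B_{di}(x).$$
Next I would bound the integrand by the modulus of continuity: for $t$ between $x$ and $i/d$ we have $|f'(t) - f'(x)| \le \omega(f', |i/d - x|)$, so each inner integral is at most $|i/d - x|\,\omega(f', |i/d - x|)$ in absolute value. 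To introduce the fixed scale $1/\sqrt{d}$, I would apply the standard dilation inequality $\omega(f', \lambda\delta) \le (1+\lambda)\omega(f',\delta)$ with $\delta = 1/\sqrt{d}$ and $\lambda = \sqrt{d}\,|i/d - x|$, giving $\omega(f', |i/d - x|) \le (1 + \sqrt{d}\,|i/d - x|)\,\omega(f', 1/\sqrt{d})$. Collecting terms reduces the whole estimate to controlling the first and second absolute moments of the binomial:
$$\Norm{P_df - f} \le \omega\!\left(f', \tfrac{1}{\sqrt{d}}\right)\left(\sum_i |i/d - x|\,B_{di}(x) + \sqrt{d}\sum_i (i/d - x)^2 B_{di}(x)\right).$$

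The main obstacle, and the only place where a clean closed form is unavailable, is the first absolute moment $\sum_i |i/d - x|\,B_{di}(x)$, since the absolute value obstructs the exact computation that works for the variance. I would handle it with Cauchy--Schwarz against the partition of unity, $\sum_i |i/d - x| B_{di}(x) \le \left(\sum_i (i/d - x)^2 B_{di}(x)\right)^{1/2}$, using $\sum_i B_{di}(x) = 1$. The second moment is exactly the binomial variance $x(1-x)/d \le 1/(4d)$, which bounds the first absolute moment by $1/(2\sqrt{d})$ and the $\sqrt{d}$-weighted second-moment term by $\sqrt{d}\cdot 1/(4d) = 1/(4\sqrt{d})$. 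Adding these gives $(1/2 + 1/4)/\sqrt{d} = 3/(4\sqrt{d})$, uniformly in $x$, which yields exactly the claimed bound $\frac{3}{4\sqrt{d}}\,\omega(f', 1/\sqrt{d})$.
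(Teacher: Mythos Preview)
Your argument is correct and is in fact the classical proof of this estimate (essentially the argument going back to Lorentz): subtract off the linear part via the mean-reproducing identity, bound the remainder by the modulus of continuity of $f'$, dilate to scale $1/\sqrt{d}$ via $\omega(f',\lambda\delta)\le(1+\lambda)\omega(f',\delta)$, and then control the first and second binomial moments using Cauchy--Schwarz and the exact variance $x(1-x)/d\le 1/(4d)$.

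As for comparison with the paper: the paper does not give its own proof of this theorem. It is quoted as a known result from the cited reference and used as a black box in the proof of Theorem~\ref{thm:ASD_log}. So there is nothing in the paper to compare your argument against; you have supplied the standard proof that the paper omits.
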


\subsection{Differential Privacy}
\ifsubmission
We adopt differential privacy~\cite{dwork2006} as our privacy notion, which has emerged as the dominant standard for managing the privacy risk to an individual associated with publicly sharing information about a dataset~\cite{10.1145/3219819.3226070, suriyakumar2020chasing, Fioretto_2021}. 
The central concept underlying differential privacy is that if the influence of any single substitution made to the dataset is sufficiently minimal, the resulting query outcome cannot be leveraged to infer much about any individual, thereby ensuring privacy. And we implement the Gaussian mechanism~\cite{dwork2014algorithmic} in Private-PGM to ensure the differential privacy.
See Appendix~\ref{app:DP_gaussian} for more details. 
\else
Differential privacy\cite{dwork2006} has emerged as the prevailing standard for managing the privacy risk to an individual associated with publicly sharing information about a dataset. We present the formal definition next.

\begin{definition}[$(\epsilon, \delta)$-Differential Privacy]
A randomized mechanism $\mathcal{M:D \rightarrow R}$ satisfies $(\epsilon, \delta)$-differential privacy if for any two adjacent inputs $x, x' \in \mathcal{D}$ and for any subset of outputs $\mathcal{S} \subseteq \mathcal{R}$  it holds that
$Pr[\mathcal{M}(x)\in \mathcal{S}]\leq e^{\epsilon} Pr[\mathcal{M}(x')\in \mathcal{S}]+\delta$.
\end{definition}

The
\textbf{Gaussian Mechanism}~\cite{dwork2014algorithmic} 
adds random noise drawn from a Gaussian distribution to a query output, where
the standard deviation of the noise is
proportional to the sensitivity of the query.

\begin{theorem}[Gaussian Mechanism]\label{thm:gauss}
   Let $\epsilon\in(0,1)$ and $f:D\rightarrow R^d$, be an arbitrary d-dimensional function. Define its $l_2$ sensitivity to be $\Delta_2(f)=\max_{x,x'}\Norm{f(x)-f(x')}_2$, where $x, x'$ are any adjacent inputs in $\mathcal{D}$.  Let $
   \sigma^2=\frac{2 \Delta_2(f)^2\log{(1.25/\delta)}}{\epsilon^2}$. The Gaussian mechanism that adds noises sampled from $\mathcal{N}(0, \sigma^2)$ to each of the $d$ components of $f$'s output is $(\epsilon, \delta)$-differential privacy. 
\end{theorem}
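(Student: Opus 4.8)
The plan is to control the \emph{privacy loss random variable} and invoke the standard reduction from a tail bound on this quantity to $(\epsilon,\delta)$-differential privacy. Fix two adjacent inputs $x,x'\in\mathcal{D}$ and write $v = f(x)-f(x')$, so that $\Norm{v}_2 \le \Delta_2(f)$. The mechanism outputs $M(x)=f(x)+Z$ with $Z\sim\mathcal{N}(0,\sigma^2 I_d)$, whose density at $y$ is proportional to $\exp(-\Norm{y-f(x)}_2^2/(2\sigma^2))$. Defining the privacy loss $\mathcal{L}(y) = \ln\frac{p_{M(x)}(y)}{p_{M(x')}(y)}$, it suffices to show $\Pr_{y\sim M(x)}[\mathcal{L}(y) > \epsilon] \le \delta$; a routine argument (splitting any output set $S$ into the part where $\mathcal{L}\le\epsilon$ and its complement, and bounding the two pieces separately) then yields $\Pr[M(x)\in S] \le e^\epsilon\Pr[M(x')\in S]+\delta$.

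Second, I would reduce to one dimension. Writing $y = f(x)+Z$ and expanding the squared norms gives $\mathcal{L}(y) = \frac{1}{2\sigma^2}\bigl(\Norm{Z+v}_2^2 - \Norm{Z}_2^2\bigr) = \frac{2\la Z,v\ra + \Norm{v}_2^2}{2\sigma^2}$. Since $Z$ is isotropic, only the component of $Z$ along $v$ matters, and $\la Z,v\ra \sim \mathcal{N}(0,\sigma^2\Norm{v}_2^2)$. Hence the whole problem collapses to a scalar Gaussian shifted by $\Norm{v}_2$. The event $\mathcal{L}(y)>\epsilon$ is equivalent to the projected noise coordinate $W\sim\mathcal{N}(0,\sigma^2)$ exceeding the threshold $t = \frac{\sigma^2\epsilon}{\Norm{v}_2} - \frac{\Norm{v}_2}{2}$, and since $t$ is monotone decreasing in $\Norm{v}_2$ the tail probability is monotone increasing in $\Norm{v}_2$, so the worst case is attained at $\Norm{v}_2 = \Delta_2(f)$.

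Third, I would bound the tail. With $\Norm{v}_2 = \Delta_2(f)$ I would apply the standard Gaussian tail estimate $\Pr[\mathcal{N}(0,\sigma^2) > t] \le \frac{\sigma}{\sqrt{2\pi}\,t}\exp(-t^2/(2\sigma^2))$, substitute $\sigma^2 = \frac{2\Delta_2(f)^2\ln(1.25/\delta)}{\epsilon^2}$, and simplify to check that this probability is at most $\delta$. Concretely, I would isolate the inequality $\ln\!\bigl(\tfrac{\sigma}{\sqrt{2\pi}\,t}\bigr) - \tfrac{t^2}{2\sigma^2} \le \ln\delta$, plug in the value of $\sigma$, and reduce it to an elementary inequality in $\epsilon$ and $\ln(1/\delta)$.

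The hard part is this last step: the bound is not slack, so verifying it requires the hypothesis $\epsilon\in(0,1)$ in an essential way — it is what permits discarding the lower-order terms that appear after expanding $t^2/(2\sigma^2)$ — and it is precisely the place where the constant $1.25$ inside the logarithm is calibrated to absorb those terms. This careful book-keeping with the Gaussian tail, rather than any conceptual difficulty, is the real work; the reduction to the privacy loss and to one dimension are standard.
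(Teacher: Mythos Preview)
The paper does not prove this theorem; it is stated as background and attributed to \citet{dwork2014algorithmic}. Your proposal is exactly the standard argument from that reference (privacy-loss random variable, reduction to a one-dimensional Gaussian via the projection $\la Z,v\ra$, then the Mills-ratio tail bound with the $\epsilon\in(0,1)$ hypothesis absorbing the lower-order terms), so there is nothing to compare — your outline is correct and matches the cited source.
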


Differential privacy is immune to post-processing~\cite{dwork2014algorithmic}: further computation on differentially private output will not further degrade the privacy guarantee.
\begin{theorem}[Post-Processing]\label{thm:post_proc}
    Let $\mathcal{M:D \rightarrow R}$ be a randomized algorithm that is $(\epsilon,\delta)$-differentially private. Let $f:\mathcal{R} \rightarrow \mathcal{R}'$ be arbitrary randomized mapping. Then $f \circ \mathcal{M}$ is $(\epsilon,\delta)$-differentially private.
\end{theorem}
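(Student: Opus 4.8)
The plan is to establish the inequality $\Pr[f(\mathcal{M}(x)) \in T] \leq e^{\epsilon}\Pr[f(\mathcal{M}(x')) \in T] + \delta$ for every pair of adjacent inputs $x, x' \in \mathcal{D}$ and every (measurable) output set $T \subseteq \mathcal{R}'$. First I would reduce to the case where $f$ is \emph{deterministic}. Since $f$ is an arbitrary randomized map, we can model its internal randomness by an independent random seed $\rho$ drawn from some fixed distribution, so that for each value of $\rho$ the map $f_\rho : \mathcal{R} \to \mathcal{R}'$ is deterministic and $f(r)$ is distributed as $f_\rho(r)$ with $\rho$ fresh and independent of $\mathcal{M}$. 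The point of this decoupling is that $\rho$ does not depend on the input fed to $\mathcal{M}$, which is what will let the per-$\rho$ guarantees combine cleanly at the end.

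For the deterministic case, the key observation is a preimage identity: if $f$ is deterministic, then the event $\{f(\mathcal{M}(x)) \in T\}$ is exactly the event $\{\mathcal{M}(x) \in S\}$, where $S = f^{-1}(T) = \{r \in \mathcal{R} : f(r) \in T\}$. Because $S$ is simply some subset of the output space $\mathcal{R}$ of $\mathcal{M}$, we may directly invoke the $(\epsilon,\delta)$-differential privacy of $\mathcal{M}$ with the output set $S$:
\begin{equation*}
\Pr[f(\mathcal{M}(x)) \in T] = \Pr[\mathcal{M}(x) \in S] \leq e^{\epsilon}\Pr[\mathcal{M}(x') \in S] + \delta = e^{\epsilon}\Pr[f(\mathcal{M}(x')) \in T] + \delta.
\end{equation*}
Hence the theorem holds whenever $f$ is deterministic, and notably no fresh noise or extra assumption is needed: the guarantee is inherited verbatim from $\mathcal{M}$.

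Finally I would lift the result to randomized $f$ by averaging over the seed. Conditioning on $\rho = \rho_0$, the deterministic step gives $\Pr[f_{\rho_0}(\mathcal{M}(x)) \in T] \leq e^{\epsilon}\Pr[f_{\rho_0}(\mathcal{M}(x')) \in T] + \delta$ for every $\rho_0$. Taking expectation over $\rho$ and using linearity (equivalently, integrating the inequality against the distribution of $\rho$, valid because the same multiplier $e^{\epsilon}$ and additive term $\delta$ appear uniformly for all $\rho_0$) yields the claim for the randomized $f$. The one real subtlety — and the step I would treat most carefully — is precisely this decoupling of $f$'s randomness from $\mathcal{M}$'s: one must verify that $\rho$ is independent of $\mathcal{M}(x)$ and of the choice of input, so that conditioning on $\rho$ leaves the distribution of $\mathcal{M}(x)$, and therefore the differential-privacy inequality, intact. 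Everything else is a mechanical unwinding of the definition.
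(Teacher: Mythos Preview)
Your proof is correct and is precisely the standard argument from \citet{dwork2014algorithmic}. Note, however, that the paper does not actually prove this statement: Theorem~\ref{thm:post_proc} is stated as a cited background result without proof, so there is nothing in the paper to compare against.
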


\fi

\section{Upper Bound on the Excess Empirical Risk}\label{sec:upper_bound}

We present our upper bound on the excess empirical risk for learning linear models with continuous and Lipschitz losses using synthetic data. 
In Section \ref{ssec:upperbound_loss_only}, we utilize the polynomial approximation techniques to show that the risk difference between the models trained from real and synthetic datasets can be bounded using the $\ell_1$ norm of marginal difference between real and synthetic datasets.
In Section \ref{ssec:upperbound_loss_priv}, we present an information-theoretic mechanism for generating synthetic data that is provably both marginal-preserving and DP, and we extend our theorems from Section \ref{ssec:upperbound_loss_only} to demonstrate a trade-off between privacy and loss.

Throughout this section, we let $D_r$ be the real dataset and $D_s$ be the synthetic dataset. We assume the datasets are normalized, i.e., for all $(\vec{x}, y) \in D_{r},D_{s}$ and for all $j\in[m]$, $\vec{x}[j] \in [-1,1]$. On the other hand, the label $y$ takes value from  $\{-1,1\}$,  and we may also refer to $y$ as the $(m+1)^{th}$ attribute. Given a set $q\in [m+1]$, let $\vec{h}_q^{(r)}$ and $\vec{h}_q^{(s)}$ denote the marginals of the real and synthetic datasets on $q$, i.e., $\vec{h}_q^{(r)} = M_q(D_r)$ and  $\vec{h}_q^{(s)} = M_q(D_s)$. Let $Q^m_{\leq d}$ be the set of all subsets of attributes (including label) with size no more than $d$.

\subsection{Bounding the Risk via Bounded Marginals' $\ell_1$-Distance }\label{ssec:upperbound_loss_only}

We begin by presenting a generic result, assuming only the loss function is continuous and Lipschitz.

\begin{theorem}\label{thm:ASD_gen}
Let $L(\vec{w}, D) = \sum_{(\vec{x},y)\in D} \frac{1}{n}\varphi(\la \vec{w},\vec{x} \ra y)$ such that $\varphi$ is continuous and $K$-Lipschitz. Let $\vec{w}_{r} = \mathsf{argmin}_{\vec{w},\|\vec{w}\|\leq \tau}L(\vec{w},D_r)$ and $\vec{w}_{s} = \mathsf{argmin}_{\vec{w},\|\vec{w}\|\leq \tau}L(\vec{w},D_s)$.
If for all $q \in Q^m_{\leq d}$, $\|\vec{h}_q^{(r)} - \vec{h}_q^{(s)}\|_1 \leq \nu$, then
\ifsubmissionshort{
\begin{align*}
    &|L(\vec{w}_{s},{D_{r}})-L(\vec{w}_{r},{D_{r}})|
\in O\left((K\cdot\tau\sqrt{m/(d-1)}\vphantom{+\frac{1}{n} \cdot (K\tau\sqrt{m}+ \varphi(0)) \cdot (3m \cdot \max\{1,\tau\})^{d-1} \nu}+\frac{1}{n} \cdot (K\tau\sqrt{m}+ \varphi(0)) \cdot (3m \cdot \max\{1,\tau\})^{d-1} \nu \right)
\end{align*}
}
\else{
$$ |L(\vec{w}_{s},{D_{r}})-L(\vec{w}_{r},{D_{r}})|\in O\left(K\cdot\tau\sqrt{m/(d-1)}+\frac{1}{n} \cdot (K\tau\sqrt{m}+ \varphi(0)) \cdot (3m \cdot \max\{1,\tau\})^{d-1} \nu \right).$$ }
\fi

\end{theorem}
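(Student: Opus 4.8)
The plan is to replace the loss $\varphi$ by a low-degree polynomial so that the empirical loss becomes a \emph{linear functional of the low-order marginals}, and then to chain the polynomial's approximation error against the marginal-closeness hypothesis. For any $\vec w$ with $\Norm{\vec w}\le\tau$ and any normalized $(\vec x,y)$, Cauchy--Schwarz gives $|\la\vec w,\vec x\ra y|\le\tau\sqrt m$, so the argument of $\varphi$ always lies in $[a,b]:=[-c,c]$ with $c=\max\{1,\tau\sqrt m\}$ (chosen so that $a\le 0<1\le b$, as Theorem~\ref{bernstein_1} requires). Let $P=P_{d-1}\varphi$ be the degree-$(d-1)$ Bernstein approximation on $[a,b]$, say $P(z)=\sum_{k=0}^{d-1}a_k z^k$, and define the surrogate loss $L_P(\vec w,D):=\frac1n\sum_{(\vec x,y)\in D}P(\la\vec w,\vec x\ra y)$. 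I would track two quantities: the uniform approximation error $\mathcal E:=\Norm{P-\varphi}$ and a marginal-transfer error $\mathcal M$.

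\emph{Approximation error.} Since $\varphi$ is $K$-Lipschitz, its modulus of continuity obeys $\omega(\varphi,\delta)\le K\delta$, so \eqref{bern_err} applied with degree $d-1$ gives $\mathcal E\le\frac54\,\omega\!\left(\varphi,\frac{b-a}{\sqrt{d-1}}\right)\le\frac54 K\frac{2c}{\sqrt{d-1}}=O\!\left(K\tau\sqrt{m/(d-1)}\right)$. Because $L$ and $L_P$ are averages of $\varphi$ and $P$ over the \emph{same} points, $|L(\vec w,D)-L_P(\vec w,D)|\le\mathcal E$ for every admissible $\vec w$ and every dataset. This term will supply the first summand of the bound.

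\emph{Marginals control $L_P$.} Expanding $(\la\vec w,\vec x\ra y)^k=\sum_{j_1,\dots,j_k\in[m]} w_{j_1}\cdots w_{j_k}\,x_{j_1}\cdots x_{j_k}\,y^k$, each monomial depends on at most $k$ features together with the label, hence on an attribute set $q$ with $|q|\le k+1\le d$, i.e.\ $q\in Q^m_{\le d}$. Summing such a monomial over $D$ equals $\sum_{\vec t\in\Omega_q}\tilde g(\vec t)\,\vec h_q[\vec t]$ for some $\tilde g$ with $\Norm{\tilde g}_\infty\le 1$ (as $|x_j|,|y|\le 1$), so its value shifts by at most $\Norm{\vec h_q^{(r)}-\vec h_q^{(s)}}_1\le\nu$ on passing from $D_r$ to $D_s$. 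Using $|w_j|\le\tau$, the $m^k$ monomials, the coefficient bound \eqref{bern_coef}, and $b-a\ge 2$ with $\Norm{\varphi}\le|\varphi(0)|+Kc$, I get
\[
|L_P(\vec w,D_r)-L_P(\vec w,D_s)|\le\frac{\nu}{n}\sum_{k=0}^{d-1}|a_k|(m\tau)^k=:\mathcal M,
\]
and a short estimate yields $\mathcal M=O\!\left(\tfrac1n(K\tau\sqrt m+\varphi(0))(3m\max\{1,\tau\})^{d-1}\nu\right)$, the second summand. Finally I would combine everything through the optimality inequality $L(\vec w_s,D_s)\le L(\vec w_r,D_s)$: moving $L\to L_P$ on $D_r$ at $\vec w_s$, transferring $D_r\to D_s$, returning $L_P\to L$, applying optimality, and then reversing the three moves at $\vec w_r$ produces $0\le L(\vec w_s,D_r)-L(\vec w_r,D_r)\le 4\mathcal E+2\mathcal M$, which is exactly the claimed estimate.

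The main obstacle is the marginal step. One must argue that a degree-$(d-1)$ polynomial of $\la\vec w,\vec x\ra y$ is genuinely a linear combination of order-$\le d$ marginals, and then control the two competing blow-ups --- the $m^k$ monomials weighted by the products $\tau^k$ of coordinates of $\vec w$, and the exponential growth $2^{d-1}$ of the Bernstein coefficient sum from \eqref{bern_coef} --- so that they assemble into the single clean factor $(3m\max\{1,\tau\})^{d-1}$. The tension between pushing the degree up (which shrinks $\mathcal E$ like $1/\sqrt{d-1}$) and this exponential-in-$d$ amplification of the marginal error is the real crux of the result.
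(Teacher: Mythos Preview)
Your proposal is correct and follows essentially the same approach as the paper: approximate $\varphi$ by its degree-$(d-1)$ Bernstein polynomial on $[-c,c]$, bound the uniform error via \eqref{bern_err} and the Lipschitz modulus, expand the polynomial loss into monomials each of which depends on a marginal of order $\le d$, and control the coefficient sum via \eqref{bern_coef}. Your final chaining is in fact slightly cleaner than the paper's: the paper introduces auxiliary minimizers $\vec w'_r=\mathsf{argmin}\,L_P(\cdot,D_r)$ and $\vec w'_s=\mathsf{argmin}\,L_P(\cdot,D_s)$ and threads through them, whereas you use only the single optimality inequality $L(\vec w_s,D_s)\le L(\vec w_r,D_s)$ together with the trivial observation $L(\vec w_s,D_r)\ge L(\vec w_r,D_r)$, arriving at the same $4\mathcal E+2\mathcal M$ with fewer moving parts.
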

Note that each sample in the dataset, $\vec{x}$, lies in the $m$-dimensional ball of radius $\sqrt{m}$.
If we set $\tau = \frac{1}{\sqrt{m}}$, we can view the optimization problem as consisting of datapoints contained in the $m$-dimensional unit ball and optimizing over linear models, $\vec{w}$, contained in the $m$-dimensional unit ball.
Thus, 
setting $\tau = \frac{1}{\sqrt{m}}$ and $K = O(1)$,
the above implies that as the size of the database $n$ goes to infinity, the excess empirical risk of the optimization problem is dominated by $O(\frac{1}{\sqrt{d-1}})$.

Our proof relies on the two generic upper bounds for any $\vec{w}$ such that$ \|\vec{w}\|_2 \leq \tau$. First, we construct an \emph{approximated empirical risk function} $L'$ through replacing the loss function $\varphi$ with its degree $d-1$ Bernstein polynomial approximation $P_{d-1}\varphi$. Then, we argue  $L'(\vec{w},D)\approx L(\vec{w},D)$ by invoking results on the maximum error in Bernstein polynomial approximation given in Theorem \ref{bernstein_1}. 

 Second, we bound the difference in empirical risk 
    $|L'(\vec{w}, D_s) - L'(\vec{w}, D_r)|$ between the real and synthetic datasets on \emph{any} linear model $\vec{w}$, by using the approximately marginal-preserving property of the synthetic dataset. Specifically, $P_{d-1}\varphi(\la\vec{w},\Vec{x}\ra y)$ can be expanded to a multivariate polynomial, where each monomial contains at most $d$ variables in $(\vec{x},y)$. Next, we can upper bound the risk $L'$, which is the average of this multivariate polynomial evaluated on each data sample, by a sum of the averages of individual monomials evaluated on each data sample. Then, this allows us to associate each average monomial with the marginal correponding to the set of attributes appearing in this monomial. Further, this average monomial value is fully determined given the corresponding marginal. Finally, we can apply the $\ell_1$ norm bound between the marginals of the real and synthetic datasets to bound the difference of each average monomial.

By applying the above bounds on different linear models in a sequence of inequalities, we arrive at the theorem statement.
We provide the formal proof in Appendix~\ref{app:ASD_gen}.

Next, we give a tighter bound for logistic regression. Our theorem can also extend to any loss function whose first derivative is continuous.

\begin{theorem}\label{thm:ASD_log}
Let $\hat{L}(\vec{w},D) = \frac{1}{n}\sum_{(\vec{x},y) \in D}\hat{\varphi}(\la \vec{w},\vec{x}\ra y)$. Let $\vec{w}_{r} = \mathsf{argmin}_{\vec{w},\|\vec{w}\|\leq \tau}\hat{L}(\vec{w},D_r)$ and $\vec{w}_{s} = \mathsf{argmin}_{\vec{w},\|\vec{w}\|\leq \tau}\hat{L}(\vec{w},D_s)$.
If for all $q \in Q^m_{\leq d}$, $\|\vec{h}_q^{(r)} - \vec{h}_q^{(s)}\|_1 \leq \nu$, then
\ifsubmissionshort{
$$ |\hat{L}(\vec{w}_{s},{D_{r}})-\hat{L}(\vec{w}_{r},{D_{r}})|\in O\left(\tau\sqrt{m}/(d-1)+\frac{1}{n} \cdot \tau\sqrt{m}\cdot (2m \cdot \max\{1,\tau\})^{d-1} \nu \}\right).$$
}
\else{
$$ |\hat{L}(\vec{w}_{s},{D_{r}})-\hat{L}(\vec{w}_{r},{D_{r}})|\in O\left(\tau\sqrt{m}/(d-1)+\frac{1}{n} \cdot \tau\sqrt{m}\cdot (2m \cdot \max\{1,\tau\})^{d-1} \nu \}\right).$$ }
\fi
\end{theorem}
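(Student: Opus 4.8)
The plan is to follow the same two-part strategy used for Theorem~\ref{thm:ASD_gen}, but exploiting the extra smoothness of the logistic loss $\hat\varphi$ to gain a factor of $1/\sqrt{d-1}$ in the approximation error. The key observation is that $\hat\varphi(z) = \ln(1+e^{-z})$ has a continuous first derivative $\hat\varphi'(z) = -1/(1+e^{z})$, which is in fact $1$-Lipschitz (its derivative, the logistic density, is bounded by $1/4$). This means we can invoke Theorem~\ref{bernstein_2} rather than Theorem~\ref{bernstein_1} to control the Bernstein approximation error. First I would fix the interval. For $\|\vec w\|\le\tau$ and $\vec x$ in the ball of radius $\sqrt m$, the argument $z=\la\vec w,\vec x\ra y$ ranges over $[-\tau\sqrt m,\tau\sqrt m]$ by Cauchy--Schwarz, so I set $[a,b]=[-\tau\sqrt m,\tau\sqrt m]$ of width $b-a=2\tau\sqrt m$. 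After rescaling to $[0,1]$, Theorem~\ref{bernstein_2} gives an approximation error of order $\frac{1}{\sqrt{d-1}}\,\omega(\hat\varphi',\frac{1}{\sqrt{d-1}})\cdot(b-a)$; since $\hat\varphi'$ is Lipschitz, $\omega(\hat\varphi',t)\le t$, producing a bound that scales like $(b-a)/(d-1) = O(\tau\sqrt m/(d-1))$. This is the source of the improved $1/(d-1)$ first term.

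Second I would reuse the marginal-transfer argument verbatim. Let $\hat L'$ be the approximated risk obtained by replacing $\hat\varphi$ with its degree-$(d-1)$ Bernstein polynomial $P_{d-1}\hat\varphi$. Expanding $P_{d-1}\hat\varphi(\la\vec w,\vec x\ra y)$ as a multivariate polynomial in the coordinates of $(\vec x,y)$, every monomial involves at most $d-1$ distinct attributes (at most $d$ once we count the label), so its dataset-average is a function of the corresponding marginal of order at most $d$. Applying the hypothesis $\|\vec h_q^{(r)}-\vec h_q^{(s)}\|_1\le\nu$ to each such marginal and summing the coefficient magnitudes yields $|\hat L'(\vec w,D_s)-\hat L'(\vec w,D_r)|=O(\frac{1}{n}\cdot\|\hat\varphi\|\cdot(2m\max\{1,\tau\})^{d-1}\nu)$, where the coefficient-sum bound \eqref{bern_coef} of Theorem~\ref{bernstein_1} supplies the factor $(1+\frac{2}{b-a})^{d-1}$. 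Here I would be careful to track that the base of the exponential is $(2m\max\{1,\tau\})$ rather than the $(3m\max\{1,\tau\})$ of the general theorem: the logistic bound should get a slightly smaller constant because $\|\hat\varphi\|$ on the symmetric interval grows only linearly and the coefficient inflation combines more favorably; verifying this constant is exactly where I would slow down.

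Finally I would chain these two estimates through the standard argminimizer comparison. Writing everything at $\vec w_s$ and $\vec w_r$,
\begin{align*}
\hat L(\vec w_s,D_r)-\hat L(\vec w_r,D_r)
&\le \big(\hat L(\vec w_s,D_r)-\hat L'(\vec w_s,D_r)\big)
 + \big(\hat L'(\vec w_s,D_r)-\hat L'(\vec w_s,D_s)\big)\\
&\quad{}+\big(\hat L'(\vec w_s,D_s)-\hat L'(\vec w_r,D_s)\big)
 + \big(\hat L'(\vec w_r,D_s)-\hat L'(\vec w_r,D_r)\big)\\
&\quad{}+\big(\hat L'(\vec w_r,D_r)-\hat L(\vec w_r,D_r)\big).
\end{align*}
The middle bracket is $\le 0$ since $\vec w_s$ minimizes $\hat L(\cdot,D_s)$ and $\hat L'$ tracks $\hat L$ up to the uniform approximation error; the first and last brackets are each bounded by the Theorem~\ref{bernstein_2} approximation error, and the two marginal-difference brackets are each bounded by the $\ell_1$ estimate above. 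Combining gives the claimed $O(\tau\sqrt m/(d-1)+\frac{1}{n}\tau\sqrt m\,(2m\max\{1,\tau\})^{d-1}\nu)$. The main obstacle I anticipate is not the approximation step but the bookkeeping in the second part: bounding the sum over all monomials requires carefully counting how coefficient magnitudes distribute across marginals of each order and confirming that the combined constant really is $2m$ rather than $3m$, so that the logistic case genuinely improves on the generic bound rather than merely matching it.
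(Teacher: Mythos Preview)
Your approach matches the paper's: apply Theorem~\ref{bernstein_2} (exploiting that $\hat\varphi'$ is Lipschitz) to improve the polynomial-approximation term from $O(1/\sqrt{d-1})$ to $O(\tau\sqrt m/(d-1))$, and otherwise reuse the marginal-transfer lemma and the chaining from the proof of Theorem~\ref{thm:ASD_gen}. One point to tighten: your middle bracket $\hat L'(\vec w_s,D_s)-\hat L'(\vec w_r,D_s)$ is not literally $\le 0$, since $\vec w_s$ minimizes $\hat L(\cdot,D_s)$, not $\hat L'(\cdot,D_s)$; the paper handles this by introducing the auxiliary minimizers $\vec w'_s,\vec w'_r$ of $\hat L'$, or equivalently you can insert $\pm\hat L(\cdot,D_s)$ and pay two additional approximation-error terms, which are absorbed into the same big-$O$.
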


Setting $\tau = \frac{1}{\sqrt{m}}$,
the above implies that as the size of the database $n$ goes to infinity, the excess empirical risk is dominated by $O(\frac{1}{d-1})$ for logistic regression.

We provide the formal proof in Appendix~\ref{app:ASD_log}, wherein the primary difference is we apply a tighter Bernstein polynomial approximation bound from Theorem~\ref{bernstein_2}.

\textbf{Polynomial Approximation Error}
The term $\tau \sqrt{m}/(d-1)$ in the bound in Theorem~\ref{thm:ASD_log} comes from the error of the degree-$(d-1)$ Bernstein polynomial approximating the $\log(\text{sigmoid}(\cdot))$ function on the interval $[-\tau \cdot \sqrt{m}, \tau \cdot \sqrt{m}]$.
For a fixed degree $d$, the Bernstein polynomial approximation may not yield the best error. Replacing it with an approximation with better error immediately leads to an improvement in the upper bound.
We therefore investigate two alternative methods for polynomial approximation, namely the minimax
approximation \cite{davis1975interpolation} and an ``iterated'' Bernstein approximation \cite{bernstein1912proof,ROULIER1970117_bernstein, guan2009iterated}. 
Refer to Figure \ref{fig:approx} below for examples illustrating the quality of the approximations of the $\log(\text{sigmoid(x)})$ function by 4-degree polynomial functions obtained by using the minimax and iterated Bernstein approximations. 

\begin{figure} [tb]
\hspace{-11em}
\centering
\begin{minipage}{.25\textwidth}
\begin{subfigure}[Minimax Approximation]
{\includegraphics[scale=0.3]{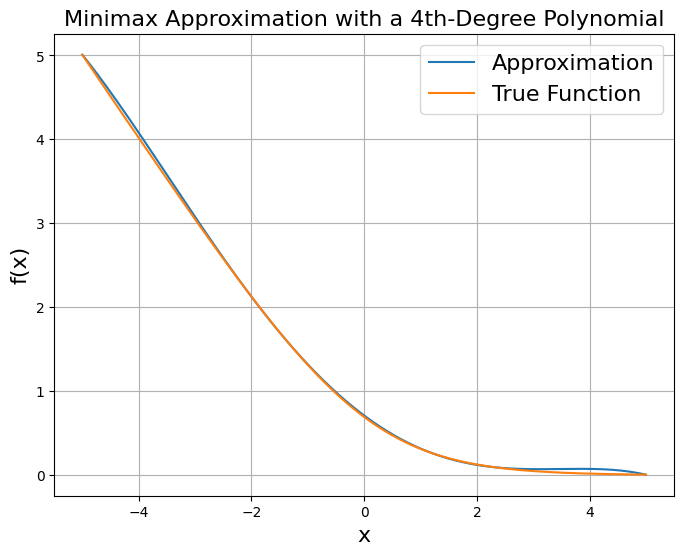}}
\label{fig:minimax}
\end{subfigure}
\end{minipage}
\hspace{12em}
\begin{minipage}{.25\textwidth}
\begin{subfigure}[Bernstein Approximation]
{\includegraphics[scale=0.3]{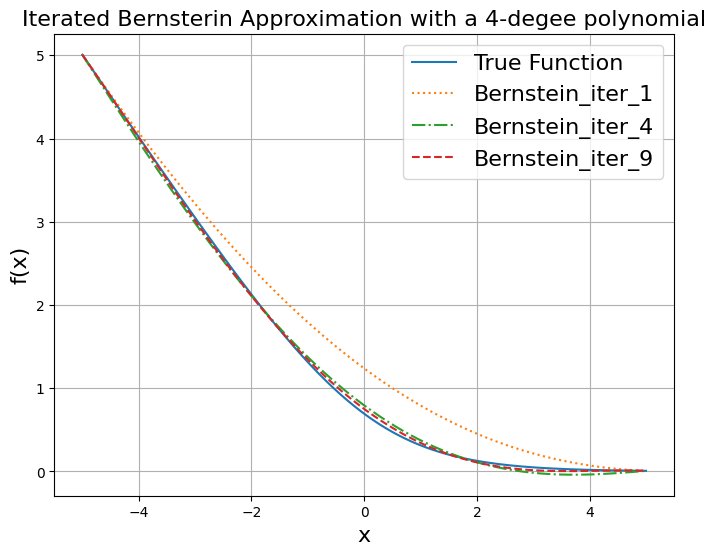}}
\label{fig:bernstein}
\end{subfigure}
\end{minipage}
\vskip 0.1in
\caption{(a) shows Minimax approximation for $\log(\text{sigmoid(x)})$ function within interval $[-5, 5]$ in 4-degree polynomial: $\log(\text{sigmoid(x)})_{minimax}\approx 0.71-0.5x+ 0.1096 x^2-0.0015 x^4 $, with an error of 0.061.
(b) shows the iterated Bernstein Approximations for $\log(\text{sigmoid(x)})$ function within interval $[-5, 5]$ in 4-degree polynomial by iterate Bernstein approximation for 1 time, 4 times, and 9 times: $\log(\text{sigmoid(x)})_{Bern_1}\approx 1.2377- 0.5x+0.0544x^2-0.0001x^4 $, with an error of 0.545;
$\log(\text{sigmoid(x)})_{Bern_4}\approx 0.7934- 0.5x+0.0812x^2-0.0005x^4 $, with an error of 0.100;
$\log(\text{sigmoid(x)})_{Bern_9}\approx 0.7504- 0.5x+0.0931x^2-0.0009x^4 $, with an error of 0.057.}
\label{fig:approx}
\end{figure}

Through observations, three significant findings emerge. Firstly, the approximation error reduces while the polynomial degree increases for both approximation methods. Secondly, the error reduces with each successive iteration of the Bernstein approximation. Thirdly, the 9th-iterated Bernstein polynomial approximation slightly outperforms the minimax polynomial approximation in our experimental results. Nevertheless, we opt for Bernstein Approximation in our subsequent analysis, which provides a theoretic analysis of its approximation error and the absolute values of its coefficients.
However, any polynomial approximation method can be used interchangeably in practical applications or in our analysis without losing generality by simply switching its approximation error bound according to the approximation method would be employed. 

\subsection{DP and marginal-preserving synthetic data}
\label{ssec:upperbound_loss_priv}


We present a DP synthetic data generating mechanism that preserves $\ell_1$ norm of all marginals with order no more than $d$ (with overwhelming probability), and analyze the end-to-end privacy and utility trade-off. 

\ifsubmissionshort{
\begingroup
\floatname{algorithm}{Mechanism}
\begin{algorithm}
   \caption{Generating Synthetic Data $\mathsf{Gen}_{d, \sigma}$}
    \label{alg:syn_data}
\begin{algorithmic}
   \STATE {\bfseries Input:} Real dataset $D_{r}$, number of samples $n$
  \STATE {\bfseries Output:} Synthetic Dataset $D_{s}$
    \STATE Measure Noise Marginals $\mathsf{Mea}(D_r)$:
   \FOR{$q \in Q^m_{\leq d}$}
   \STATE Measuring marginal: $\vec{h}_q^{(r)}=M_q(D_{r})$;
   \STATE Add noise: $\vec{\hat{h}}_q \leftarrow \vec{h}_q^{(r)}+\mathcal{N}(0, \sigma)$;
   \ENDFOR
    \STATE {\bfseries Generate synthetic data} $\mathsf{Syn}(n,\{\vec{\hat{h}}_q\}_{q \in Q^m_{\leq d}})$:
    \STATE (Brute Force) Find $D_s$ that minimizes the maximum $\ell_1$ difference with respect to marginals in $\{\vec{\hat{h}}_q\}_{q \in Q^m_{\leq d}}$, i.e, $D_s = \mathsf{argmin}_{D} \max_{q \in Q^m_{\leq d}} \|\vec{\hat{h}}_q - M_q(D)\|_1$.    
\end{algorithmic}
\end{algorithm}
\endgroup
}
\else{
\begin{algorithm}
\caption{Generating Synthetic Data $\mathsf{Gen}_{d, \sigma}$}
\label{alg:syn_data}
\DontPrintSemicolon
  \KwIn{Real dataset $D_{r}$, number of samples $n$}
  \KwOut{Synthetic Dataset $D_{s}$}

\BlankLine
  \textbf{Measure Noise Marginals $\mathsf{Mea}(D_r)$:} \;

  \For{$q \in Q^m_{\leq d}$}{Measuring marginal: $\vec{h}_q^{(r)}=M_q(D_{r})$\;
 Add noise: $\vec{\hat{h}}_q \leftarrow \vec{h}_q^{(r)}+\mathcal{N}(0, \sigma)$.\;}
\BlankLine
\textbf{Generate Synthetic Data $\mathsf{Syn}(n,\{\vec{\hat{h}}_q\}_{q \in Q^m_{\leq d}})$:}\;
     (Brute Force) Find $D_s$ that minimizes the maximum $\ell_1$ difference with respect to marginals in $\{\vec{\hat{h}}_q\}_{q \in Q^m_{\leq d}}$, i.e, $D_s = \mathsf{argmin}_{D} \max_{q \in Q^m_{\leq d}} \|\vec{\hat{h}}_q - M_q(D)\|_1$.    
\end{algorithm}
}
\fi

The differential privacy guarantee of Mechanism~\ref{alg:syn_data} follows directly from Theorem~\ref{thm:gauss} and Theorem~\ref{thm:post_proc}.
\begin{lemma}\label{lem:syn_dp}
    $\mathsf{Gen}_{d, \sigma}$ is $(\epsilon,\delta)$-DP, if $\sigma=\frac{2 m^{d/2}\sqrt{\ln(1.25/\delta)}}{\epsilon}$.
\end{lemma}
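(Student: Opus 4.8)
The plan is to recognize $\mathsf{Gen}_{d,\sigma}$ as an instance of the Gaussian mechanism followed by post-processing, and to reduce everything to a single $\ell_2$-sensitivity computation. Write $f(D) = (M_q(D))_{q \in Q^m_{\leq d}}$ for the function that concatenates all marginals of order at most $d$ into one long vector. The $\mathsf{Mea}$ step outputs $f(D_r) + \vec{g}$, where $\vec{g}$ has i.i.d.\ $\mathcal{N}(0,\sigma)$ entries, one per coordinate of $f$, and the $\mathsf{Syn}$ step is a (possibly randomized) function of $\{\vec{\hat h}_q\}_{q \in Q^m_{\leq d}}$ alone. Hence, once I show that $\mathsf{Mea}$ is $(\epsilon,\delta)$-DP, Theorem~\ref{thm:post_proc} immediately promotes this to the guarantee that the full mechanism $\mathsf{Gen}_{d,\sigma}$ is $(\epsilon,\delta)$-DP.

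The remaining task is to bound $\Delta_2(f) = \max_{D \sim D'}\Norm{f(D) - f(D')}_2$ and feed it into Theorem~\ref{thm:gauss}. I would do this in two steps. First, a per-query bound: if $D$ and $D'$ are adjacent, differing in a single record $\vec v \mapsto \vec v'$, then for any fixed $q$ the marginal $M_q$ changes only in the coordinates indexed by $\vec v[q]$ and $\vec v'[q]$, decreasing one count by $1$ and increasing the other by $1$ (and not at all when $\vec v[q]=\vec v'[q]$). Thus $\Norm{M_q(D) - M_q(D')}_2 \leq \sqrt 2$. Second, a counting step: summing the squared per-query contributions gives
$$\Norm{f(D)-f(D')}_2^2 = \sum_{q \in Q^m_{\leq d}} \Norm{M_q(D)-M_q(D')}_2^2 \leq 2\,\bigl|\{q \in Q^m_{\leq d} : q \neq \emptyset\}\bigr| \leq 2\,m^{d},$$
where the final inequality bounds the number of nonempty attribute subsets of $[m+1]$ of size at most $d$ by $m^d$. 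Therefore $\Delta_2(f) \leq \sqrt 2\, m^{d/2}$.

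Substituting this into the Gaussian-mechanism variance $\sigma^2 = 2\,\Delta_2(f)^2\ln(1.25/\delta)/\epsilon^2$ from Theorem~\ref{thm:gauss} yields $\sigma^2 \leq 4\,m^{d}\ln(1.25/\delta)/\epsilon^2$, so it suffices to take $\sigma = \frac{2\,m^{d/2}\sqrt{\ln(1.25/\delta)}}{\epsilon}$, which is exactly the stated value. This makes $\mathsf{Mea}$ an $(\epsilon,\delta)$-DP release of the noisy marginals, and the post-processing argument above closes the proof.

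I expect the only subtle point to be the counting step $|\{q \in Q^m_{\leq d}: q \neq \emptyset\}| \leq m^d$: the dominant contribution is $\binom{m+1}{d}\approx m^d/d!$, and one must check that adding the lower-order terms $\binom{m+1}{k}$, $k<d$, still stays within $m^d$, as well as confirm that the genuine worst case over adjacent pairs (the two records differing in every attribute, so that \emph{every} nonempty $q$ contributes its full $\sqrt 2$) is what the bound accounts for. Everything else---the per-query $\sqrt2$ change and the two differential-privacy black boxes from Section~\ref{sec:bg}---is routine given the results already stated.
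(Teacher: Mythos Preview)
Your proposal is correct and follows essentially the same argument as the paper: bound the $\ell_2$ sensitivity of the concatenated marginal vector by $\sqrt{2|Q^m_{\leq d}|}\leq \sqrt{2m^d}$ via the per-query $\sqrt 2$ change, invoke the Gaussian mechanism (Theorem~\ref{thm:gauss}), and then apply post-processing (Theorem~\ref{thm:post_proc}). Your write-up is in fact more careful than the paper's, which states the sensitivity bound in one line without spelling out the counting step you flag as the only subtle point.
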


We provide the formal proof in Appendix~\ref{app:syn_dp}.

Next, we bound the $\ell_1$ difference between noisy and real marginals using Chernoff bound.

\begin{lemma}\label{lem:syn_bound}
Let $D_s \leftarrow \mathsf{Gen}_{d,\sigma}$. Then
$\|\vec{h}_q^{(r)} - M_q(D_s)\|_1 \leq 2l^d \sqrt{2(\ln(2)(1+\lambda)+d\ln(ml))}\sigma$ for all $q \in Q^m_{\leq d}$ with $1-2^{-\lambda}$ probability.
\end{lemma}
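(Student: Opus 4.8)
The plan is to bound $\|\vec{h}_q^{(r)} - M_q(D_s)\|_1$ via the triangle inequality through the intermediate noisy marginal $\vec{\hat{h}}_q$, and then to control both resulting terms by the magnitude of the injected Gaussian noise. Concretely, I would write
$$\|\vec{h}_q^{(r)} - M_q(D_s)\|_1 \leq \|\vec{h}_q^{(r)} - \vec{\hat{h}}_q\|_1 + \|\vec{\hat{h}}_q - M_q(D_s)\|_1.$$
The first term is exactly the $\ell_1$ norm of the noise vector added to marginal $q$. For the second term, the key observation is that the real dataset $D_r$ is itself a feasible candidate in the brute-force minimization defining $D_s$ (it has $n$ samples). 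Since $D_s$ minimizes $\max_{q'}\|\vec{\hat{h}}_{q'} - M_{q'}(D)\|_1$ over all datasets $D$, its objective value is at most the value attained by $D_r$; but that value is $\max_{q'}\|\vec{\hat{h}}_{q'} - \vec{h}_{q'}^{(r)}\|_1$, which is again purely the noise magnitude. Hence $\|\vec{\hat{h}}_q - M_q(D_s)\|_1 \leq \max_{q'}\|\vec{\hat{h}}_{q'} - \vec{h}_{q'}^{(r)}\|_1$, and the whole left-hand side is at most twice the largest per-marginal noise norm.

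It then remains to bound the $\ell_1$ norm of each noise vector with high probability. Each difference $\vec{\hat{h}}_{q'} - \vec{h}_{q'}^{(r)}$ is a vector of $|\Omega_{q'}| = \prod_{j\in q'}|\Omega_j| \leq l^{|q'|} \leq l^d$ independent $\mathcal{N}(0,\sigma^2)$ entries. I would apply the standard Gaussian tail bound $\Pr[|Z| > t] \leq 2e^{-t^2/(2\sigma^2)}$ to every individual entry and union-bound over all entries of all marginals. The total number of noised entries is $N = \sum_{q'\in Q^m_{\leq d}}|\Omega_{q'}| \leq (ml)^d$ (up to lower-order factors), so the choice $t = \sigma\sqrt{2(\ln 2\,(1+\lambda) + d\ln(ml))}$ makes the failure probability $N\cdot 2e^{-t^2/(2\sigma^2)}$ at most $2^{-\lambda}$. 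On the complementary event every entry satisfies $|Z_i| \leq t$, whence each $\|\vec{\hat{h}}_{q'} - \vec{h}_{q'}^{(r)}\|_1 \leq l^d\, t$. Combining this with the factor of two from the previous paragraph yields the claimed bound $2l^d\sqrt{2(\ln 2\,(1+\lambda)+d\ln(ml))}\,\sigma$, holding simultaneously for all $q \in Q^m_{\leq d}$.

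The conceptual crux—and the step I would highlight—is the feasibility argument: recognizing that $D_r$ is an admissible solution to the brute-force search lets us replace the a priori hard-to-analyze term $\|\vec{\hat{h}}_q - M_q(D_s)\|_1$ by the controllable noise magnitude, with no structural assumption on $D_s$ whatsoever. The remaining work is routine concentration, where the only point demanding care is the counting step: verifying that the number of noised coordinates is small enough that $\ln N$ contributes precisely the $d\ln(ml)$ term, with the constant slack absorbed into the $\ln 2\,(1+\lambda)$ factor. A slightly loose bound on $|Q^m_{\leq d}|$ or on $|\Omega_{q'}|$ would only perturb lower-order terms and constants and would not affect the dominant behavior.
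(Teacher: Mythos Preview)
Your proposal is correct and essentially identical to the paper's proof: the paper also uses the triangle inequality through $\vec{\hat{h}}_q$, invokes the optimality of $D_s$ against the feasible candidate $D_r$ to control the second term, applies the Gaussian tail bound coordinatewise, and union-bounds over all $(ml)^d$ coordinates with the same choice of threshold. The only cosmetic difference is that the paper first union-bounds within a single marginal and then across marginals, whereas you do it in one shot; the arithmetic is the same.
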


We provide the formal proof in Appendix~\ref{app:syn_bound}.

Using Lemmas \ref{lem:syn_dp} and \ref{lem:syn_bound} allows us to represent the marginal difference $\nu$ in previous Theorems \ref{thm:ASD_gen} and \ref{thm:ASD_log} with the expression containing the privacy parameters. In particular, it yields the following corollaries.

\begin{corollary}
    Let $L(\vec{w};(\vec{x},y)) = \varphi(\la \vec{w},\vec{x} \ra y)$ such that $\varphi$ is continuous and $K$-Lipschitz. 
    Let $D_s \leftarrow \mathsf{Gen}_{d, \sigma}(D_r)$, where $\sigma =\frac{2 m^{d/2}\sqrt{\ln(1.25/\delta)}}{\epsilon}$.
    Then $D_s$ satisfies $(\epsilon,\delta)$-DP.
    Additionally, let $\vec{w}_{r} = \mathsf{argmin}_{\vec{w},\|\vec{w}\|\leq \tau}L(\vec{w},D_r)$ and $\vec{w}_{s} = \mathsf{argmin}_{\vec{w},\|\vec{w}\|\leq \tau}L(\vec{w},D_s)$.
Then
\ifsubmissionshort{
\begin{align*}
    &|L(\vec{w}_{s},{D_{r}})-L(\vec{w}_{r},{D_{r}})|\\
    &\in O\left(K\cdot\tau\sqrt{m/(d-1)}+\frac{1}{n} \cdot (K\tau\sqrt{m}+ \varphi(0)) \cdot (3m \cdot \max\{1,\tau\})^{d-1} \cdot2l^d \sqrt{2(\ln(2)(1+\lambda)+d\ln(ml))} \cdot \sigma \right),
\end{align*}
    
}
\else{
\begin{align*}
    &|L(\vec{w}_{s},{D_{r}})-L(\vec{w}_{r},{D_{r}})|\\
    &\in O\left(K\cdot\tau\sqrt{m/(d-1)}+\frac{1}{n} \cdot (K\tau\sqrt{m}+ \varphi(0)) \cdot (3m \cdot \max\{1,\tau\})^{d-1} 2l^d \sqrt{\frac{2(\lambda +1+d \lg(ml))}{\lg e}} \sigma \right),
\end{align*}
}
\fi

except with $2^{-\lambda}$ probability.
\end{corollary}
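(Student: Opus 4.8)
The plan is to assemble the corollary directly from three ingredients that are already proved: the privacy guarantee of Lemma~\ref{lem:syn_dp}, the marginal-preservation bound of Lemma~\ref{lem:syn_bound}, and the utility bound of Theorem~\ref{thm:ASD_gen}. No new analytic machinery is required; the entire task is to thread the correct quantities through these statements and to keep careful track of the single probabilistic event.

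First I would dispatch the privacy claim, which is unconditional and independent of the utility analysis. Since the mechanism is run with $\sigma = \frac{2 m^{d/2}\sqrt{\ln(1.25/\delta)}}{\epsilon}$, Lemma~\ref{lem:syn_dp} gives immediately that $\mathsf{Gen}_{d,\sigma}$ is $(\epsilon,\delta)$-DP. Because $D_s$ is produced by the brute-force $\ell_1$-minimization applied to the \emph{already-noised} marginals $\{\vec{\hat h}_q\}$, the synthetic dataset is a (randomized) post-processing of the differentially private output, so $(\epsilon,\delta)$-DP of $D_s$ follows from Theorem~\ref{thm:post_proc}.

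Next I would import the $\ell_1$ guarantee. Applying Lemma~\ref{lem:syn_bound} with this same $\sigma$, with probability at least $1-2^{-\lambda}$ the event
$$\mathcal{E}:\quad \Norm{\vec{h}_q^{(r)} - M_q(D_s)}_1 \leq 2 l^d \sqrt{2(\ln(2)(1+\lambda)+d\ln(ml))}\,\sigma \;=:\; \nu$$
holds simultaneously for every $q \in Q^m_{\leq d}$. Since $M_q(D_s) = \vec{h}_q^{(s)}$ by definition, conditioning on $\mathcal{E}$ means the hypothesis $\Norm{\vec{h}_q^{(r)} - \vec{h}_q^{(s)}}_1 \leq \nu$ of Theorem~\ref{thm:ASD_gen} is satisfied with exactly this value of $\nu$.

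Finally, conditioned on $\mathcal{E}$, I would invoke Theorem~\ref{thm:ASD_gen} verbatim to bound $|L(\vec{w}_{s},D_{r})-L(\vec{w}_{r},D_{r})|$ by $O\!\left(K\tau\sqrt{m/(d-1)}+\frac{1}{n}(K\tau\sqrt{m}+\varphi(0))(3m\max\{1,\tau\})^{d-1}\nu\right)$, and then substitute the explicit value of $\nu$ from $\mathcal{E}$ to recover the stated expression. Because the substitution is purely mechanical, the only point demanding care—and the closest thing to an obstacle—is the bookkeeping of the failure probability: the utility bound holds only on $\mathcal{E}$, i.e.\ ``except with $2^{-\lambda}$ probability,'' whereas the DP guarantee is unconditional, so the two conclusions must be reported with their respective scopes rather than merged into a single blanket statement.
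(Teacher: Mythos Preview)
Your proposal is correct and follows exactly the approach the paper takes: it states the corollary as an immediate consequence of substituting the $\nu$ from Lemma~\ref{lem:syn_bound} into Theorem~\ref{thm:ASD_gen}, with privacy coming from Lemma~\ref{lem:syn_dp}. The only minor redundancy is that your explicit post-processing step for $D_s$ is already absorbed into the statement of Lemma~\ref{lem:syn_dp} (whose conclusion is that the full mechanism $\mathsf{Gen}_{d,\sigma}$, not just the noisy marginals, is $(\epsilon,\delta)$-DP), so you can drop that sentence without loss.
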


\begin{corollary}
    Let $D_s \leftarrow \mathsf{Gen}_{d, \sigma}(D_r)$, where $\sigma =\frac{2 m^{d/2}\sqrt{\ln(1.25/\delta)}}{\epsilon}$.
    Then $D_s$ satisfies $(\epsilon,\delta)$-DP.
    Additionally, let $\vec{w}_{r} = \mathsf{argmin}_{\vec{w},\|\vec{w}\|\leq \tau}\hat{L}(\vec{w},D_r)$ and $\vec{w}_{s} = \mathsf{argmin}_{\vec{w},\|\vec{w}\|\leq \tau}\hat{L}(\vec{w},D_s)$.
Then
\ifsubmissionshort{

    \begin{align*}
    &|\hat{L}(\vec{w}_{s},{D_{r}})-\hat{L}(\vec{w}_{r},{D_{r}})|\in O\left(\tau\sqrt{m}/(d-1)+\frac{1}{n} \cdot \tau\sqrt{m}\cdot (3m \cdot \max\{1,\tau\})^{d-1} \cdot 2l^d \sqrt{2(\ln(2)(1+\lambda)+d\ln(ml))}\cdot \sigma \right),
    \end{align*}}
\else{
\begin{align*}
    &|\hat{L}(\vec{w}_{s},{D_{r}})-\hat{L}(\vec{w}_{r},{D_{r}})|\\
    &\in O\left(\tau\sqrt{m}/(d-1)+\frac{1}{n} \cdot \tau\sqrt{m}\cdot (3m \cdot \max\{1,\tau\})^{d-1} 2l^d \sqrt{\frac{2(\lambda +1+d \lg(ml))}{\lg e}} \sigma \right),
\end{align*}}
\fi
except with $2^{-\lambda}$ probability.
\end{corollary}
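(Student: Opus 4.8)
The plan is to obtain this corollary purely by composing three results already established: the privacy guarantee of Lemma~\ref{lem:syn_dp}, the high-probability marginal-accuracy guarantee of Lemma~\ref{lem:syn_bound}, and the utility bound of Theorem~\ref{thm:ASD_log}. No new machinery is needed; the work is entirely in chaining the estimates and reconciling the two forms in which the constants are written. First I would dispatch the privacy claim. Since $\mathsf{Gen}_{d,\sigma}$ is run with exactly $\sigma = \frac{2 m^{d/2}\sqrt{\ln(1.25/\delta)}}{\epsilon}$, Lemma~\ref{lem:syn_dp} immediately yields that $D_s$ is $(\epsilon,\delta)$-DP; this matches the hypothesis of that lemma verbatim, and the post-processing step of the mechanism is already folded into its statement (via Theorem~\ref{thm:post_proc}), so nothing further is required.

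Next I would control the marginal error. Applying Lemma~\ref{lem:syn_bound} with the same $\sigma$, I get that with probability at least $1-2^{-\lambda}$, simultaneously for every $q \in Q^m_{\leq d}$, $\|\vec{h}_q^{(r)} - M_q(D_s)\|_1 \leq \nu$ where $\nu := 2l^d \sqrt{2(\ln(2)(1+\lambda)+d\ln(ml))}\,\sigma$. I would then condition on this ``good'' event, which fails only with probability $2^{-\lambda}$. On this event the hypothesis ``$\|\vec{h}_q^{(r)} - \vec{h}_q^{(s)}\|_1 \le \nu$ for all $q \in Q^m_{\leq d}$'' of Theorem~\ref{thm:ASD_log} holds with this particular $\nu$, so that theorem directly gives $|\hat{L}(\vec{w}_{s},D_{r})-\hat{L}(\vec{w}_{r},D_{r})| \in O\big(\tau\sqrt{m}/(d-1) + \tfrac{1}{n}\,\tau\sqrt{m}\,(3m\max\{1,\tau\})^{d-1}\nu\big)$. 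Substituting the expression for $\nu$ then produces the claimed bound, valid except with probability $2^{-\lambda}$.

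The only genuine bookkeeping step is to reconcile the radicand $2(\ln(2)(1+\lambda)+d\ln(ml))$ produced by Lemma~\ref{lem:syn_bound} with the form $\frac{2(\lambda+1+d\lg(ml))}{\lg e}$ appearing in the statement. These agree because $\ln x = \lg x / \lg e$, so $\ln 2 = 1/\lg e$ and hence $\ln(2)(1+\lambda)+d\ln(ml) = \frac{1}{\lg e}\big((1+\lambda)+d\lg(ml)\big)$; the two expressions are literally equal, and I would simply note this identity rather than recompute anything.

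I expect the main (and only mild) obstacle to be notational rather than mathematical: keeping the constant factors consistent while tracking the probabilistic qualifier. Concretely, I would flag that the $(3m\cdot\max\{1,\tau\})^{d-1}$ written here differs from the $(2m\cdot\max\{1,\tau\})^{d-1}$ of Theorem~\ref{thm:ASD_log}, but this is harmless since it sits inside $O(\cdot)$; and I would be careful to propagate the ``except with probability $2^{-\lambda}$'' clause of Lemma~\ref{lem:syn_bound} correctly through the otherwise-deterministic conclusion of Theorem~\ref{thm:ASD_log}, so that the final statement inherits exactly this failure probability.
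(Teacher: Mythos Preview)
Your proposal is correct and mirrors the paper's own argument exactly: the paper states that the corollary follows by combining Lemmas~\ref{lem:syn_dp} and~\ref{lem:syn_bound} to obtain a value of $\nu$ and then substituting into Theorem~\ref{thm:ASD_log}. Your observations about the harmless $(3m)$-vs-$(2m)$ discrepancy inside $O(\cdot)$ and the equivalence of the two radicand forms are accurate and more explicit than the paper itself.
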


As in the previous section, setting $\tau = \frac{1}{\sqrt{m}}$ and $K = O(1)$,
the above corollaries imply that as the size of the database $n$ goes to infinity, the excess empirical risk is dominated by $O(\frac{1}{\sqrt{d-1}})$ for general continuous and $K$-Lipschitz loss functions, and dominated by $O(\frac{1}{d-1})$ for logistic regression.

\section{Lower Bound on the Excess Empirical Risk}\label{sec:lower_bound}
We next present a theorem that shows that our upper bound in the previous section is nearly tight for certain ranges of parameter settings.
Specifically, we show that there exists a 
distribution over datasets $D_r$, a convex, 2-Lipschitz cost function $L$, and a
range of parameter settings for
$n, m, d, \tau$
such that 
Theorem~\ref{thm:ASD_gen}
implies the existence of a synthetic data generation algorithm with excess risk at most
$O\left (\sqrt{\frac{\ln(\ln(n))}{\ln(n)}} \right) + O\left (\frac{\ln(n)}{n^{1/4}} \right)$.
On the other hand, we show that
for any synthetic data generation algorithm
$\mathsf{Syn}$
(of a particular form), the excess risk is at least
$\Omega\left (\frac{1}{\ln^3(n)} \right )$.
Thus, both the upper and lower bounds on the difference in loss are fixed polynomials in $\frac{1}{\ln(n)}$, where $n$ is the size of the dataset.
Although existing differentially private convex optimization methods such as gradient perturbation\citep{6979031, yu2021gradient}, output perturbation\citep{zhang2017efficient}, and objective perturbation \citep{chaudhuri2011differentially} demonstrate an error of $O(1/n)$ or less, it is crucial to highlight the primary advantages of synthetic data: the ability to execute numerous downstream tasks without compromising the privacy guarantee, along with the flexibility to employ any non-private learning algorithm out-of-the-box.

Our lower bound captures synthetic data generation algorithms that obtain noisy marginals as input and then use an arbitrary (potentially computationally unbounded), randomized algorithm to construct a synthetic dataset from these noisy marginals.
The synthetic data generation algorithms that we consider may not make any assumption about the distribution of the inputted noisy marginals, other than the fact that \emph{each noisy marginal is close (within some tolerance) to the true expectation of the data distribution}.
Our matching upper bound holds for synthetic data generation 
of this form, since Theorem~\ref{thm:ASD_gen} does not make any distributional assumption on $\vec{h}_q^{(s)}$ but only requires that $\|\vec{h}_q^{(s)} - \vec{h}_q^{(r)}\|_1 \leq \nu$.

Before presenting our Theorem and proof, we begin with some notation.
For a vector $\vec{v}=(v[j])_{j\in q}$, let $n\cdot\vec{v}=(n\cdot v[j])_{j\in q}$.
Let $\mathcal{D}_m$ be a distribution over $(\vec{x},y)$, where
input $\vec{x} \in \{-1, 1\}^m$ and label $y \in \{-1,1\}$. In our writeup, we treat $(\vec{x},y)$ as a single vector where the last entry is $y$.
We say that a set of vectors
$\{\vec{u}_q\}_{q \in Q^m_{\leq d}}$
has tolerance $\mathsf{tol}$ relative to distribution $\mathcal{D}_m$ if 
$\forall q \in Q^m_{\leq d}$, $\forall \vec{t} \in \Omega_q$, $\|\vec{u}_q(\vec{t}) -\mathbb{E}_{(\vec{x},y) \sim \mathcal{D}_m}[\mathbb{I}[(\vec{x},y)[q]=\vec{t}]]\|_\infty \leq \mathsf{tol}$, where $\mathbb{I}[(\vec{x},y)[q] = \vec{t}]$ is the indicator variable
set to $1$ if $(\vec{x},y)[q] = \vec{t}$ and set to $0$ otherwise.
Let $\mathsf{Syn}$ be a synthetic data generation algorithm that receives as input $n \in \mathbb{N}$ and a set of vectors
$\{n \cdot\vec{u}_q\}_{q \in Q^m_{\leq d}}$
and uses it in an arbitrary way to output a synthetic database $D_s$ of size $n$
with marginals $\{\vec{h}_q^{s}\}_{q \in Q^m_{\leq d}}$.

\begin{theorem} \label{th:lower_bound}
For sufficiently large $n$,
$m = m(n) = O(\ln^6(n))$
and $d = d(n) = O(\frac{\ln(n)}{\ln \ln (n)})$, 
there exists a cost function $L(\vec{w},D) := \frac{1}{n} \sum_{(\vec{x},y) \in D} \varphi(\langle \vec{w}, \vec{x} \rangle \cdot y)$ with $\varphi$ being
$\frac{1}{\ln^3(n)}$-strongly convex and 2-Lipschitz,
for which
the following hold:
\ifsubmissionshort{
\begin{itemize}
\item There exists a deterministic algorithm $\mathsf{Syn}$
such that
for all distributions $\mathcal{D}_m$ and all sets of
vectors $\{\vec{u}_q\}_{q \in Q^m_{\leq d}}$
with tolerance $\mathsf{tol} = \frac{1}{n}$ relative to $\mathcal{D}_m$,
with all but negligible probability over 
$D_r \sim \mathcal{D}_m^n$,

\begin{align*}
    |L(\vec{w}_r,D_r) -   L(\vec{w}_s,D_r)| \in O\left (\sqrt{\frac{\ln(\ln(n))}{\ln(n)}} \right) + O\left (\frac{\ln(n)}{n^{1/4}} \right),
\end{align*}

\item For every randomized algorithm $\mathsf{Syn}$,
there exists a distribution $\mathcal{D}_m$ and a set of vectors $\{\vec{u}_q\}_{q \in Q^m_{\leq d}}$
with tolerance $\mathsf{tol} = \frac{1}{n}$ relative to $\mathcal{D}_m$,
such that
with all but negligible probability over $D_r \sim \mathcal{D}_m^n$, 

\begin{align*}
\left | L(\vec{w}_r,D_r) -   \mathbb{E}_{D_s \leftarrow \mathsf{Syn}(n, \{n\cdot\vec{u}_q\}_{q \in Q^m_{\leq d}})}[L(\vec{w}_s,D_r)] \right | \in \Omega\left (\frac{1}{\ln^3(n)} \right ),
\end{align*}

\end{itemize}
}
\else{
\begin{itemize}
\item There exists a (computationally inefficient) deterministic algorithm $\mathsf{Syn}$
such that
for all distributions $\mathcal{D}_m$ and all sets of
vectors $\{\vec{u}_q\}_{q \in Q^m_{\leq d}}$
with tolerance $\mathsf{tol} = \frac{1}{n}$ relative to $\mathcal{D}_m$,
with all but negligible probability over 
$D_r \sim \mathcal{D}_m^n$,
and for $D_s = \mathsf{Syn}(n, \{n\cdot\vec{u}_q\}_{q \in Q^m_{\leq d}})$,
\begin{align*}
   \left | L(\vec{w}_r,D_r) -   L(\vec{w}_s,D_r) \right |
\in O\left (\sqrt{\frac{\ln(\ln(n))}{\ln(n)}} \right) + O\left (\frac{\ln(n)}{n^{1/4}} \right), 
\end{align*}
\item For every (even computationally inefficient) randomized algorithm $\mathsf{Syn}$,
there exists a distribution $\mathcal{D}_m$ and a set of vectors $\{\vec{u}_q\}_{q \in Q^m_{\leq d}}$
with tolerance $\mathsf{tol} = \frac{1}{n}$ relative to $\mathcal{D}_m$,
such that
with all but negligible probability over $D_r \sim \mathcal{D}_m^n$, 
\begin{align*}
\left | L(\vec{w}_r,D_r) -   \mathbb{E}_{D_s \leftarrow \mathsf{Syn}(n, \{n\cdot\vec{u}_q\}_{q \in Q^m_{\leq d}})}[L(\vec{w}_s,D_r)] \right |
\in \Omega\left (\frac{1}{\ln^3(n)} \right ),
\end{align*}
\end{itemize}
}\fi
where
$\vec{w}_s = \mathsf{argmin}_{\vec{w}} L(\vec{w},D_s)$,
and
$\vec{w}_r = \mathsf{argmin}_{\vec{w}} L(\vec{w},D_r)$.
\end{theorem}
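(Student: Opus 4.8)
The plan is to prove the two bullets separately, reusing Theorem~\ref{thm:ASD_gen} for the positive (upper) direction and an information-theoretic indistinguishability argument for the negative (lower) direction.

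For the first bullet I would instantiate $\mathsf{Syn}$ with the brute-force minimizer of Mechanism~\ref{alg:syn_data} run on the input $\{n\cdot\vec u_q\}$. First, a Chernoff bound together with a union bound over the at most $(2m)^d$ queries in $Q^m_{\leq d}$ shows that, with all but negligible probability, the empirical marginals $\vec h_q^{(r)}=M_q(D_r)$ of $D_r\sim\mathcal{D}_m^n$ deviate from $n$ times the population probabilities by at most $O(2^d\sqrt{n\ln n})$ in $\ell_1$. Combining this with the tolerance hypothesis ($\|\vec u_q-(\text{population marginal})\|_\infty\le 1/n$) and the triangle inequality bounds $\|n\cdot\vec u_q-\vec h_q^{(r)}\|_1$. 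Since $D_r$ is itself a feasible size-$n$ candidate, the output $D_s$ satisfies $\|M_q(D_s)-n\vec u_q\|_1\le\|M_q(D_r)-n\vec u_q\|_1$, so a further triangle inequality gives $\nu:=\max_q\|M_q(D_s)-\vec h_q^{(r)}\|_1=O(2^d\sqrt{n\ln n})$. Plugging this $\nu$, together with $\tau=1/\sqrt m$ and $K=2$, into Theorem~\ref{thm:ASD_gen}, the first term is $O(1/\sqrt{d-1})=O(\sqrt{\ln\ln(n)/\ln(n)})$, and after checking that $(3m)^{d-1}2^d$ stays below $n^{1/4}$ under $d=O(\ln(n)/\ln\ln(n))$ and $m=O(\ln^6 n)$, the second term is $O(\ln(n)/n^{1/4})$, as claimed.

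For the second bullet the key point is that $\mathsf{Syn}$ sees only marginals, so I would exhibit two distributions $\mathcal{D}_m^{(+)},\mathcal{D}_m^{(-)}$ with \emph{identical} order-$\le d$ marginals; taking $\vec u_q$ equal to this common marginal gives tolerance $0\le 1/n$ in both worlds, so $\mathsf{Syn}(n,\{n\vec u_q\})$ produces an identically distributed $D_s$, hence an identically distributed $\vec w_s$, in both. Writing $s_j=x_jy$, the loss surface $F_b(\vec w)=\mathbb{E}_{\mathcal{D}_m^{(b)}}[\varphi(\sum_j w_js_j)]$ depends only on the law of $\vec s$, whose Fourier coefficient at a set $T$ equals $\mathbb{E}[\chi_T(\vec x)y^{|T|}]$ and is pinned down by order-$\le d$ marginals exactly when $|T|$ is small, while the coefficient at a set $S^*$ with $|S^*|\in\{d,d+1\}$ is free. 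I would take $\Pr[\vec s]=2^{-m}(1\pm\eta\,\chi_{S^*}(\vec s))$ (valid for $\eta\le 1$, realizable via $x_j=s_jy$ with $y$ uniform), which differ only at $S^*$ and hence share all order-$\le d$ marginals. This gives $F_\pm=F_{\mathrm{unif}}\pm\eta G$ with $G(\vec w)=2^{-m}\sum_{\vec s}\chi_{S^*}(\vec s)\varphi(\langle\vec w,\vec s\rangle)$ and $G(0)=0$, so that $F_++F_-=2F_{\mathrm{unif}}$ and, since $0=\mathsf{argmin}\,F_{\mathrm{unif}}$, for \emph{any} $\vec w_s$
\[
[F_+(\vec w_s)-\min F_+]+[F_-(\vec w_s)-\min F_-]\ \ge\ 2\min F_{\mathrm{unif}}-\min F_+-\min F_-\ =:\ \delta_++\delta_-\ \ge 0 .
\]
As the right-hand side is independent of $\vec w_s$, the larger of the two population excess risks is at least $\tfrac12(\delta_++\delta_-)$; I then select the harder distribution and transfer this population statement to the empirical objects $L(\cdot,D_r)$ and to $\mathbb{E}_{D_s}[\cdot]$ using uniform concentration of the empirical loss and Hessian over a bounded ball (so $\vec w_r$ and $\min_w L(\cdot,D_r)$ track the population optimum), whose $\mathrm{poly}(\ln n)/\sqrt n$ error is negligible against the target $\Omega(1/\ln^3 n)$.

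The main obstacle is the remaining quantitative claim $\delta_++\delta_-=\Omega(1/\ln^3 n)$: I must design $\varphi$ so that a purely degree-$|S^*|\approx d$ perturbation genuinely lowers the minimum of the surface by $\Omega(\mu)$. Because $G$ has vanishing derivatives at $0$ up to order $|S^*|-1$ (its Taylor expansion begins with $\varphi^{(|S^*|)}(0)\prod_{j\in S^*}w_j$), the effect is intrinsically high-order and must be made to beat the quadratic growth $\tfrac{\varphi''(0)}{2}\|\vec w\|^2$ of $F_{\mathrm{unif}}$ on the bounded interval where $\varphi$ can simultaneously be $\tfrac{1}{\ln^3 n}$-strongly convex and $2$-Lipschitz (global coexistence of these two properties is impossible). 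Carrying this out requires choosing $\varphi$ with a prescribed high-order derivative profile, evaluating $G$ at a test point supported on $S^*$ near the boundary of the valid domain, and verifying that the constraints $m=O(\ln^6 n)$, $d=O(\ln(n)/\ln\ln(n))$, and $\eta\le 1$ are jointly satisfiable; this balancing of the strong-convexity budget against the degree-$\approx d$ Fourier mass of $\varphi'$ is where essentially all the difficulty lies.
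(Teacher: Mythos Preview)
Your treatment of the first bullet is essentially the paper's own argument: brute-force $\mathsf{Syn}$, Chernoff plus union bound over $\le(2m)^d$ indicator queries to bound $\nu$, and then plug into Theorem~\ref{thm:ASD_gen} with $\tau=1/\sqrt m$. The calculus that $(3m)^{d-1}2^d\le n^{1/4}$ under the stated regime of $m,d$ is also what the paper verifies.

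The second bullet, however, is where your proposal diverges from the paper and where it has a genuine gap. Your plan is a two-point Fourier indistinguishability argument: pick two distributions differing only in a single character $\chi_{S^*}$ with $|S^*|\approx d$, observe that $\mathsf{Syn}$ cannot tell them apart, and then argue that no single $\vec w_s$ can be near-optimal for both population losses. You yourself identify the hard step---showing $\delta_++\delta_-=\Omega(1/\ln^3 n)$---and this step is not just technically delicate, it is the entire content of the lower bound. Because $G$ has all derivatives at $0$ vanishing up to order $|S^*|-1$, and because $|\langle\vec w,\vec s\rangle|\le 1$ forces each coordinate of a test point on $S^*$ to be $O(1/d)$, the contribution of $\eta G$ scales like $d^{-d}$ against a quadratic barrier of size $\Theta(\mu\cdot d\alpha^2)$, so a naive balancing gives a gap that is far smaller than $1/\ln^3 n$. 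You have also not committed to a concrete $\varphi$, and the theorem asserts the \emph{existence} of a specific $\varphi$ that is simultaneously $1/\ln^3 n$-strongly convex and $2$-Lipschitz on the relevant interval; constructing it is not a formality.

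The paper avoids this entirely. It observes that the pipeline ``noisy marginals $\to$ $\mathsf{Syn}$ $\to$ $\vec w_s$'' is a \emph{non-adaptive statistical query algorithm} making at most $(2m)^d$ queries at tolerance $1/n$, and then invokes a black-box SQ lower bound of Dachman-Soled and Feldman (their Theorem~5) for learning large-margin halfspaces over $\{-1,1\}^m$. That result supplies both the hard distribution family and the explicit loss $\varphi(t)=\gamma\,\varphi_\gamma(t)$ (a piecewise-quadratic hinge-like function), and comes with a companion claim translating ``classification error $>1/4$'' into ``excess population loss $>\gamma/8$''. A short Hoeffding step then moves this to the empirical $D_r$, giving $\Omega(\gamma)=\Omega(1/\ln^3 n)$ directly. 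In short, the paper outsources precisely the part you flag as ``where essentially all the difficulty lies'' to a known SQ hardness result; your approach would need to, in effect, reprove a quantitative special case of that result from scratch.
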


Our main insight to achieve the above result is that
for any $\{\vec{u}_q\}_{q \in Q^m_{\leq d}}$ with tolerance $\mathsf{tol} = \frac{1}{n}$, and any 
algorithm $\mathsf{Syn}$, e.g., the subroutine used in Mechanism \ref{alg:syn_data},
the Algorithm~\ref{fig:non_adap_stat_query_alg}
can be viewed as a \emph{non-adaptive statistical query}
learning algorithm 
 that makes
$\sum_{q\in Q^m_{\leq d}} |\Omega_q| \leq  m^d \cdot 2^d$ number of statistical
queries.

\ifsubmissionshort{
\begin{algorithm}
   \caption{A non-adaptive statistical query algorithm.}
\label{fig:non_adap_stat_query_alg}
\begin{algorithmic}
   \STATE {\bfseries Let} $\{\vec{u}_q\}_{q \in Q^m_{\leq d}}$
represent the responses of a statistical query oracle on the non-adaptive queries $\vec{t} \in \Omega_q$, for every $ q\in Q^m_{\leq d}$;
   \STATE {\bfseries Set} $D_s \leftarrow \mathsf{Syn}(n, \{n\cdot\vec{u}_q\}_{q \in Q^m_{\leq d}})$;
   \STATE {\bfseries Output} $\vec{w}_s = \mathsf{argmin}_{\vec{w}} L(\vec{w},D_s)$;
\end{algorithmic}
\end{algorithm}
}
\else{
\begin{algorithm}
\begin{enumerate}
\item Let $\{\vec{u}_q\}_{q \in Q^m_{\leq d}}$
represent the responses of a statistical query oracle on the non-adaptive queries $\vec{t} \in \Omega_q$, for every $ q\in Q^m_{\leq d}$.
\item Set $D_s \leftarrow \mathsf{Syn}(n, \{n\cdot\vec{u}_q\}_{q \in Q^m_{\leq d}})$
\item Output $\vec{w}_s = \mathsf{argmin}_{\vec{w}} L(\vec{w},D_s)$
\end{enumerate}
\caption{A non-adaptive statistical query algorithm.}
\label{fig:non_adap_stat_query_alg}
\end{algorithm}
}
\fi

We can  \citep{DF20}.
Their Theorem 5 holds
even for \emph{large-margin linear separators}, where the target concept class consists of linear separators $\vec{w}$ such that for every
$(\vec{x},y)$ in the support of $\mathcal{D}$,
$\frac{\langle \vec{x}, \vec{w} \rangle y}{|\vec{x}|\cdot |\vec{w}|} \geq \gamma$.
As we will see later, this large-margin will allow us to convert the lower bound given in Theorem~\ref{th:stat_query_lb} which shows a gap in \emph{accuracy}, to a result which shows a gap in \emph{cost} (for cost function $L_{D_r}$) between the optimal linear separator and the linear separator outputted by the non-adaptive statistical query algorithm.
We provide the formal proof of Theorem~\ref{th:lower_bound} in Appendix~\ref{app:theorem 4.1 proof}.

\section{Experimental Evaluation}\label{sec:exp}
We conducted experiments\footnote{Our experiments code and datasets are available at \url{https://github.com/DPML-syn/MarginalPreserving_DP_SyntheticData}} to evaluate the performance of DP and marginal-preserving synthetic data generation on six public datasets.  We select AIM~\citep{aim}, 
the typical and notable mechanism from among the marginal-preserving methods, to generate the DP synthetic data. 
The assessment utilizes the ``Train on Synthetic, Test on Real" (TSTR) approach~\citep{TSRS}, where we train the real-data-model and synthetic-data-model (using the scikit-learn's\citep{scikit-learn} library of logistic regression with the LBFGS solver), and evaluate both models on the real test data. Furthermore, we employ two other widely recognized DPML methods, DP-SGD~\citep{Abadi_2016} and PATE learning~\citep{papernot2017semisupervised}, for comparison with our proposed marginal-preserving synthetic data training.

\subsection{Dataset} \label{sec: dataset}
For our experimental evaluation, we utilized six datasets along with data preprocessing: Adult\citep{adult}, Compas\citep{compas}, Churn\citep{churn}, Dutch\citep{dutch}, Law\citep{law} and Heart\citep{heart}, refer to the Table \ref{tbl: summary of dataset} for an overview of the datasets. 
\ifsubmissionshort{

\begin{table} [H]
\centering
\vskip 0.1in
\caption{Summary of datasets used in experiments}
\vskip 0.1in
\label{tbl: summary of dataset}
\begin{adjustbox}{center}

\begin{tabular}{llllllllllll} 
\toprule
Dataset & Adult & Compas&Churn & Heart &Law &Dutch \\
\midrule
Instances &48,842 & 7,214& 3,859& 303& 20,798 & 60,420\\
Features & 14 & 9&16&14&12 &12\\
\bottomrule     
\end{tabular}
\end{adjustbox}
\end{table}
}
\else{
\begin{table} 
\centering
\caption{Summary of datasets used in experiments}
\label{tbl: summary of dataset}
\begin{adjustbox}{center}
\begin{tabular}{llllllllllll}
\toprule
Dataset & Adult & Compas&Churn & Heart &Law &Dutch \\
\midrule
Instances &48,842 & 7,214& 3,859& 303& 20,798 & 60,420\\
Features & 14 & 9&16&14&12 &12\\
\bottomrule     
\end{tabular}
\end{adjustbox}
\end{table}
}
\fi
\subsection{Synthetic Data Generation}
Marginal-based approaches are the state-of-art method for preserving key statistical properties of the ground truth data to generate synthetic data with DP guarantees. 
In our experiments, we examined its performance in DP-ML setting.
The marginal-based approaches all align with select-measure-generate framework, which, at a high level, can be divided into three steps:
(1) Select sets of attributes, referred to as marginal queries, each containing at most $d$ attributes; (2) Using the real dataset, compute the marginal for each selected query, with injected noise;
(3) Generate synthetic data that matches the noisy marginals as closely as possible.
We opt for one of the leading marginal-based mechanisms, AIM~\citep{aim}, 
to validate the effectiveness of marginal preserving synthetic data. 
AIM is built using the core component Private-PGM~\citep{Graphical-model}, wherein, Private-PGM operates for steps 2 and 3 in the framework. Additionally, AIM incorporates a greedily and iteratively algorithm to fulfill step 1.
We defer a more detailed discussion of Private-PGM, AIM to Appendix \ref{app:syn_gen}.

\subsection{Data Preprocessing}
The raw data we use to generate synthetic data may present various challenges, including missing values or containing continuous values that require conversion to discrete numbers.
Therefore, we executed a series of data preprocessing before inputting it to the synthetic data generation mechanism: 
(1). Cleaning noisy data, by e.g.~deleting data samples that contained missing values.
(2). Converting categorical variables like gender and nationality into numerical values,  to make them suitable for machine learning algorithms. (3). Converting continuous variables, such as income, into discrete values,
while preserving the original ascending order of values.  The quantization method employed here 
is a simple bucketing approach. More sophisticated quantization methods, such as those discussed in Gersho et al.~\citep{quantization_book}, could lead to improved handling of continuous data.
(4). Feature scaling, to scale numeric features to a standard range starting from 0.

We highlight that these pre-processing steps applied to the real data do not compromise the privacy guarantee of the outputted synthetic data, since the data-preprocessing steps do not impact the sensitivity of the marginals, which determines the amount of noise added. 
Leveraging the post-processing theorem~\ref{thm:post_proc}, we can safely perform any supplementary data-preprocessing steps, e.g.~data normalization, before engaging on subsequent ML training. This augments the model's training effectiveness without degrading its privacy. 

\subsection{Evaluation Metrics}
We evaluate the performance of our approach using metrics of accuracy and ROC-AUC score, as they are commonly used and provide a comprehensive evaluation of classification performance. Accuracy measures the proportion of correctly classified samples, while ROC-AUC score indicates how well the classifier discriminates between the positive and negative classes. Additionally, we also compare the empirical risk from both models on real testing data.

\ifsubmissionshort{
To gain better insight of marginal-preserving synthetic data, we conducted comparative experiments: We generate synthetic data for six(6) dataset with each eight(8) DP parameters, $\epsilon$. 
We compared the performance of trained ML model in these synthetic data across with different $\epsilon$. 
Furthermore, as a supplementary investigation, we conducted two additional experiments: (1). We compare marginal-preserving synthetic data approach with current predominant Training-DPML approaches: PATE learning and DP-SGD (refer to Appendix~\ref{app: Comparison with Other DPML Techniques} for details). 
(2) We demonstrated the model-agnostic advantage of AIM's synthetic data by evaluating its synthetic data on training in two classifiers with distinct target labels. AIM proves to be effective without requiring prior knowledge of which features specifically correspond to the downstream classification task, and consistently maintains its performance across diverse classifiers (see in Appendix~\ref{app: Assess AIM for Different Classifiers}). This is desirable in the synthetic data setting, since the goal is to generate synthetic data once, and subsequently train many models on the same synthetic data.
}
\else{
To gain an better insight of marginal-preserving synthetic data, we conducted three comparative experiments: (1). We generate synthetic data for each dataset with eight(8) DP parameters, $\epsilon$. 
We will compare the performance of trained ML model in these synthetic data across with different privacy budgets. 
(2). We conduct a comparison of marginal-based synthetic data approach with current predominant Training-DPML approaches: PATE learning and DP-GD.  (3). 
Additional, to demonstrate the model-agnostic advantage of AIM's synthetic data, we evaluate its synthetic data on training in two classifiers with distinct target labels, AIM proves to be effective without requiring prior knowledge of which features specifically correspond to the downstream classification task, and consistently maintains its performance across diverse classifiers. This is desirable in the synthetic data setting, since the goal is to generate synthetic data once, and subsequently train many models on the same synthetic data.
}\fi

\subsection{Results} 
\ifsubmissionshort{}\else{
\subsubsection{Performance of Synthetic data with various privacy budgets}}\fi
\label{sec:experiment Synthetic data in varied privacy budgets}
We assessed the generated synthetic datasets on (1) how well they preserved the marginals and (2) the performance of ML model training on the synthetic data. 
We utilize the normalized-$\ell_1$ errors to evaluate the effectiveness of marginals preservation for different synthetic datasets generated. Here, the normalized-$\ell_1$ error for a marginal query $q \in Q_{\leq d}^m$, 
\begin{minipage}{.4\textwidth}
    is defined as $\frac{||\vec{h}^{(r)}_q - \vec{h}^{{(s)}}_q||_1}{n}$, where
$\vec{h}^{(r)}_q$ and $\vec{h}^{(s)}_q$ denote the marginals of the real and synthetic datasets on $q$, and $n$ is the size of the real dataset.  AIM mechanism reports an average normalized-$\ell_1$ error over all selected marginal queries, with the assertion that these errors serve as upper bounds for the maximum error across all marginals with at most $d$ attributes, (including both selected and non-selected ones.) This assertion is substantiated by Theorem~\ref{thm:AIM bound non-selected marginal}.
In our experiment, we set $d=4$.
The computed normalized-$\ell_1$ errors are shown in Figure~\ref{fig:l1_error}. It is easy to see that the higher the privacy budget, the less noise added into marginal measurements and so the smaller the normalized-$\ell_1$ error in synthetic data. 
\end{minipage}
\hfill
\begin{minipage}{.55\textwidth}
    \begin{figure}[H]
        \centering
\includegraphics[scale=0.3]{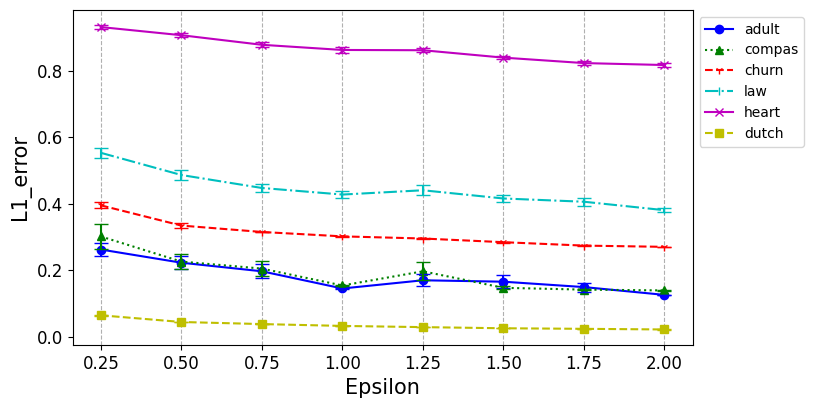}
\caption{We compare the  $L_1$ error of synthetic data using AIM mechanism for all six(6) datasets with different privacy budget.}
\label{fig:l1_error}
\vskip 0.1in
\end{figure}

\end{minipage}

In Figure~\ref{fig:performance_in_diff_eps} 
\ifsubmissionshort{(see full results in Appendix~\ref{app: Performance of synthetic data with various privacy budgets} Table~\ref{tab:performance_in_diff_eps},)}
\else{and Table~\ref{tab:performance_in_diff_eps}}
\fi we present our empirical results on the performance of ML models that are trained using marginal preserving synthetic datasets. The results show that the models acquired from training on the synthetic datasets with higher privacy budget exhibit higher accuracy, 
and lower excess empirical risk. In conjunction,
we note that higher privacy budget enables us to achieve smaller $\ell_1$ error synthetic data, leading to better synthetic data performance on ML training.  
Moreover, we observe that among all synthetic datasets, the Heart dataset has the lowest accuracy, which can likely be attributed 
to its relatively small sample size. Other than the Heart dataset, the accuracy of the models trained on the synthetic datasets, for $\epsilon=2$, drops by less than 1\% compared to the real data, and the excess empirical risk is less than 0.02.

\ifsubmissionshort{

\begin{figure} [tb]
\vskip 0.1in
\centering
\begin{minipage}{.3\textwidth}
\begin{subfigure}[Adult Dataset]
{\includegraphics[scale=0.2]{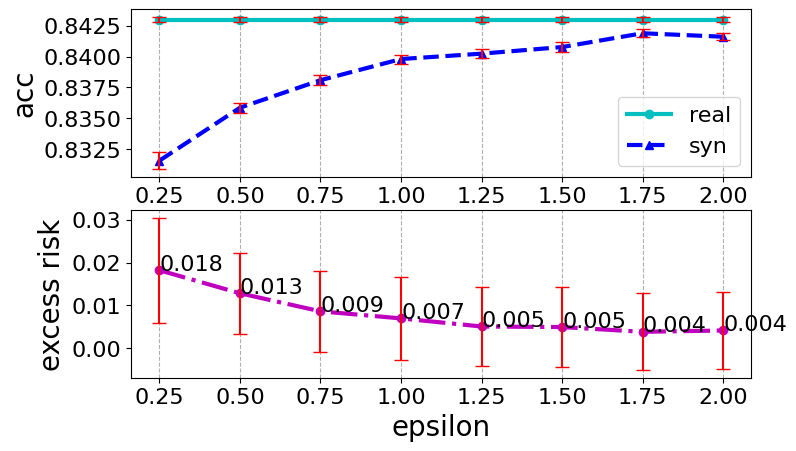}}
\label{fig:adult}
\end{subfigure}
\end{minipage}
\hfill
\begin{minipage}{.3\textwidth}
\begin{subfigure}[Compas Dataset]
{\includegraphics[scale=0.2]{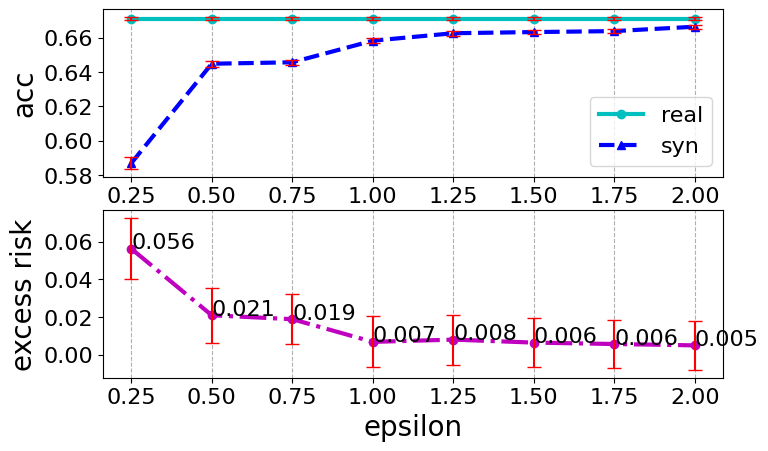}}
\label{fig:compas}
\end{subfigure}
\end{minipage}
\hfill
\begin{minipage}{.3\textwidth}
\begin{subfigure}[Law Dataset]
{\includegraphics[scale=0.2]{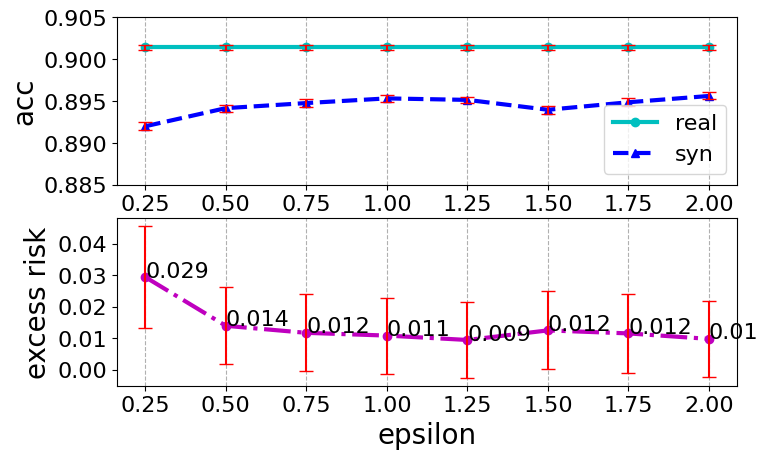}}
\label{fig:law}
\end{subfigure}
\end{minipage}

\begin{minipage}{.3\textwidth}
\begin{subfigure}[Heart Dataset]
{\includegraphics[scale=0.2]{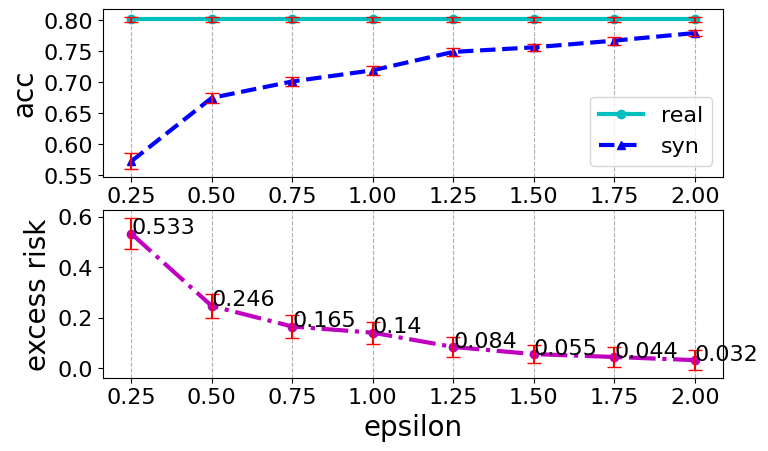}}
\label{fig:heart}
\end{subfigure}
\end{minipage}
\hfill
\begin{minipage}{.3\textwidth}
\begin{subfigure}[Dutch Dataset]
{\includegraphics[scale=0.2]{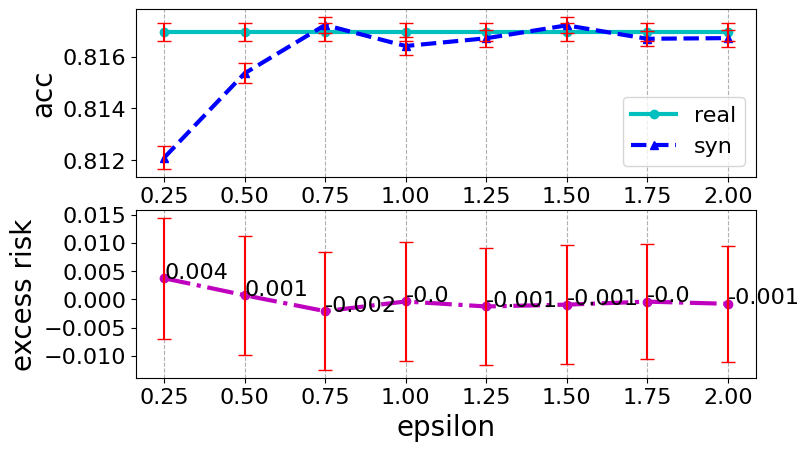}}
\label{fig:dutch}
\end{subfigure}
\end{minipage}
\hfill
\begin{minipage}{.3\textwidth}
\begin{subfigure}[Churn Dataset]
{\includegraphics[scale=0.20]{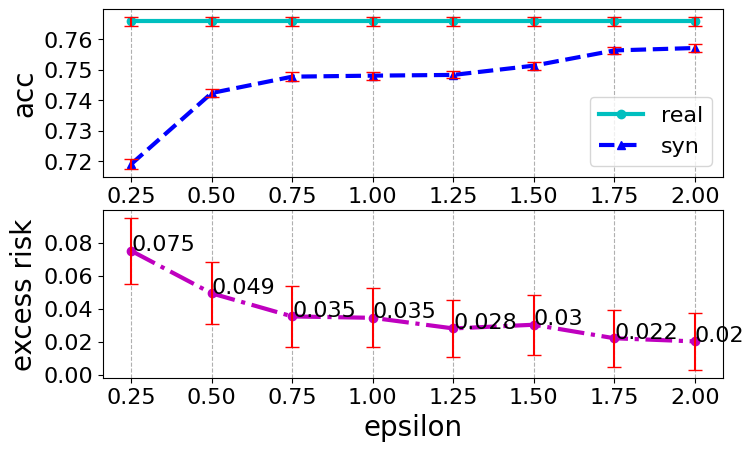}}
\label{fig:churn}
\end{subfigure}
\end{minipage}
\caption{We generated synthetic data for the six(6) datasets with $\epsilon \in(\frac{1}{4}, \frac{2}{4}, \frac{3}{4}, 
1, \frac{5}{4}, 
\frac{6}{4}, \frac{7}{4}, 2)$. We produce 10 randomized sets of synthetic data for each $\epsilon$. We assess performance by training the machine learning model 10 times with randomly split datasets to 80\% training, 20\% testing. Note that some degree of minor unpredictability is inevitable due to the limited number of trials, and this causes the slight graph oscillation.}
\label{fig:performance_in_diff_eps}
\vskip -0.1in
\end{figure}

}
\else{
\begin{figure*}[tb]
\vskip -0.1in
\centering
\begin{minipage}{.22\textwidth}
\begin{subfigure}[Adult Dataset]
{\includegraphics[scale=0.2]{image/AIM_result/adult.png}}
\label{fig:adult}
\end{subfigure}
\end{minipage}
\hfill
\begin{minipage}{.22\textwidth}
\begin{subfigure}[Compas Dataset]
{\includegraphics[scale=0.2]{image/AIM_result/compas.png}}
\label{fig:compas}
\end{subfigure}
\end{minipage}
\hfill
\begin{minipage}{.22\textwidth}
\begin{subfigure}[Law Dataset]
{\includegraphics[scale=0.2]{image/AIM_result/law.png}}
\label{fig:law}
\end{subfigure}
\end{minipage}

\begin{minipage}{.22\textwidth}
\begin{subfigure}[Heart Dataset]
{\includegraphics[scale=0.2]{image/AIM_result/heart.png}}
\label{fig:heart}
\end{subfigure}
\end{minipage}
\hfill
\begin{minipage}{.22\textwidth}
\begin{subfigure}[Dutch Dataset]
{\includegraphics[scale=0.2]{image/AIM_result/dutch.png}}
\label{fig:dutch}
\end{subfigure}
\end{minipage}
\hfill
\begin{minipage}{.22\textwidth}
\begin{subfigure}[Churn Dataset]
{\includegraphics[scale=0.2]{image/AIM_result/churn.png}}
\label{fig:churn}
\end{subfigure}
\end{minipage}
\caption{We generated synthetic data for the six(6) dataset in different privacy budget, epsilon. To address randomness, we produce 10 sets of synthetic data for each epsilon. Subsequently, we assess their performance by training the machine learning model ten times with randomly split datasets to 80\% training set and 20\% testing set. However, note that some degree of minor randomness is inevitable due to the limited number of trials, and cause the graph oscillation slightly.}
\label{fig:performance_in_diff_eps}
\vskip -0.1in
\end{figure*}
}
\fi

\section{Conclusions and Future Work}
In our study, we give both upper and lower bounds for the excess empirical risk (measured w.r.t.~the real dataset) of training linear models on marginal-preserving synthetic data. Also, we show that for specific ranges of parameter choices, there exists a data distribution such that our upper and lower bounds are nearly tight (both are $1/\mathsf{polylog}(n)$).
Moreover, we give an end-to-end privacy and excess empirical risk analysis for a synthetic data generation mechanism that preserves all $d$-th order marginals. Finally, we supplement our theoretic results with extensive experiments using the AIM mechanism~\citep{aim} to heuristically generate marginal-preserving synthetic datasets for multiple real datasets. Our experiments show that the resulting models, with $\epsilon=2$,  reduce the accuracy by at most $2.2\%$, compared to that of the (non-private) real models.

Moving forward, we believe the following directions are interesting to consider:
\ifsubmissionshort{ 
(1). Given that our experiments on real-world datasets perform significantly better than the lower bound for the worst-case data distribution, it is  interesting to explore assumptions on the data distribution that are consistent with the real-world datasets, and which may allow bypassing the lower bound. 
    (2) It will be interesting to extend our techniques to non-linear models, such as decision trees, SVM, KNN, and neural networks, etc.
    (3) Finally, it will be interesting to broaden our approach to handle data with continuous attributes, or with discrete attributes but very large cardinality. In both cases, the marginals are harder/costlier (in terms of privacy) to preserve, and it may be necessary to develop novel proof techniques.}
\else{
\begin{enumerate}
    \item Given that our experiments on real-world datasets perform much better than the lower bound over worst-case data distribution, it is thus interesting to figure out the right assumption on data distribution capturing the real-world datasets, and develop a more tailored lower bound accordingly.
    \item Additionally, it will be interesting to come up with an efficient differentially private (provably) $d$-th order marginal-preserving synthetic data generation algorithm under such assumptions. 
    \item It will be interesting to see how we can make use of our proving techniques and extend our results to other non-linear models, such as decision trees, SVM, KNN, and neural networks, etc.
    \item Finally, it will be interesting to study our problems on data with continuous attributes, or simply with discrete attributes but very large cardinality. In both cases, the marginals are harder/costlier (in terms of privacy) to preserve, and it may be necessary to develop novel proving techniques.
\end{enumerate}}\fi

\section{Ethical Aspects and Broader Impact}
This paper presents work whose goal is to advance the protection of an individual's privacy in Machine Learning (ML) applications.
Ensuring privacy is a societal concern, and is especially crucial in the ML setting where large amounts of potentially sensitive data are required for training. Furthermore, we believe that the particular methodology put forth in this work--in which differentially private (DP) synthetic data is generated for training---allows for equitable access to training data, in comparison to standard Training-DPML techniques.  Specifically, the DP synthetic data can be released publicly once generated. Further, any out-of-the box optimization algorithm can be run on the data, in contrast to Training-DP algorithms, which require specialized knowledge to properly set the parameters and to run the modified algorithms. Finally, our experiments were performed solely on publicly available data, and we anticipate no potential misuse of the outcomes derived from our research.

\begin{ack}
The authors thank the anonymous reviewers for their insightful comments and valuable feedback on this work.  Dana Dachman-Soled is supported in part by NSF grants CNS-2154705, CNS-1933033, and IIS2147276 is jointly provided with Min Wu.
We gratefully acknowledge support from the JP Morgan Chase Faculty Research Award.

\paragraph{Disclaimer} This paper was prepared for informational purposes in part by the Artificial Intelligence Research group of JPMorgan Chase \& Co and its affiliates (“J.P. Morgan”) and is not a product of the Research Department of J.P. Morgan.  J.P. Morgan makes no representation and warranty whatsoever and disclaims all liability, for the completeness, accuracy or reliability of the information contained herein.  This document is not intended as investment research or investment advice, or a recommendation, offer or solicitation for the purchase or sale of any security, financial instrument, financial product or service, or to be used in any way for evaluating the merits of participating in any transaction, and shall not constitute a solicitation under any jurisdiction or to any person, if such solicitation under such jurisdiction or to such person would be unlawful. 

© 2024 JPMorgan Chase \& Co. All rights reserved.
\end{ack}

\bibliographystyle{plainnat}

\bibliography{refs}

\newpage
\appendix

\ifsubmission
\begin{center}
{\huge Supplementary Material}
\end{center}
\fi

\sloppy

\onecolumn

\ifsubmissionshort{
\section{Proofs in Section~\ref{sec:upper_bound}}
\subsection{Proof of Theorem~\ref{thm:ASD_gen}}\label{app:ASD_gen}
\begin{proof}
Our proof relies on the following empirical risk function $L'$ that approximates $L$:
\begin{align*}
    L'(\vec{w},D)=\frac{1}{n} \sum_{(\vec{x},y)\in D} P_{d-1}\varphi(\la \vec{w},\vec{x} \ra y),
\end{align*}
where $P_{d-1}\varphi$ is the degree-$(d-1)$ Bernstein polynomial to approximate $\varphi$ within the interval $[- \tau \sqrt{m}, \tau \sqrt{m}]$ (or $[-1,-1]$ if $\tau \sqrt{m} < 1$).

\begin{lemma} \label{lem:delta_poly_gen}
For any normalized dataset $D$ and any $\vec{w}$ such that $\|\vec{w}\|_2 \leq \tau$,
\[
|L'(\vec{w},D) - L(\vec{w},D)| \in O(K\cdot\tau\sqrt{m/(d-1)}).
\]
\end{lemma}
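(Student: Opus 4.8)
The plan is to reduce everything to the uniform (sup-norm) approximation error of the Bernstein polynomial and then average over the dataset. First I would pin down the interval on which the argument of $\varphi$ actually ranges. Since $D$ is normalized, every feature vector satisfies $\vec{x}[j]\in[-1,1]$, hence $\|\vec{x}\|_2\le\sqrt{m}$; combining this with $\|\vec{w}\|_2\le\tau$ and Cauchy--Schwarz gives $|\la\vec{w},\vec{x}\ra|\le\tau\sqrt{m}$, and since $y\in\{-1,1\}$ we obtain $\la\vec{w},\vec{x}\ra y\in[-\tau\sqrt{m},\tau\sqrt{m}]$. This is exactly the interval $[a,b]=[-\tau\sqrt{m},\tau\sqrt{m}]$ (or $[-1,1]$ in the degenerate regime $\tau\sqrt{m}<1$, chosen so the hypothesis $a\le 0<1\le b$ of Theorem~\ref{bernstein_1} is satisfied) on which $P_{d-1}\varphi$ is built to approximate $\varphi$.

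Next I would invoke the Bernstein error bound of Theorem~\ref{bernstein_1} with degree $d-1$ and $b-a=2\tau\sqrt{m}$, giving
\[
\Norm{P_{d-1}\varphi-\varphi}\le\frac{5}{4}\,\omega\!\left(\varphi,\frac{2\tau\sqrt{m}}{\sqrt{d-1}}\right).
\]
Because $\varphi$ is $K$-Lipschitz, its modulus of continuity is controlled linearly, $\omega(\varphi,t)\le K t$, so the right-hand side is at most $\tfrac{5}{2}K\tau\sqrt{m}/\sqrt{d-1}\in O(K\tau\sqrt{m/(d-1)})$.

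Finally, I would transfer this uniform estimate to the empirical risks. Since each summand's argument $\la\vec{w},\vec{x}\ra y$ lies in $[a,b]$, the triangle inequality yields
\[
|L'(\vec{w},D)-L(\vec{w},D)|\le\frac{1}{n}\sum_{(\vec{x},y)\in D}\bigl|P_{d-1}\varphi(\la\vec{w},\vec{x}\ra y)-\varphi(\la\vec{w},\vec{x}\ra y)\bigr|\le\Norm{P_{d-1}\varphi-\varphi},
\]
which is $O(K\tau\sqrt{m/(d-1)})$, uniformly over all $\vec{w}$ with $\|\vec{w}\|_2\le\tau$, as claimed. There is no deep obstacle here beyond careful bookkeeping: the one point demanding attention is guaranteeing that the argument never escapes the approximation interval, so that the sup-norm bound applies simultaneously at every data point, together with handling the boundary case $\tau\sqrt{m}<1$ by enlarging the interval to $[-1,1]$ (which only inflates the constant and still fits the stated asymptotic in the regime of interest $\tau\sqrt{m}\ge 1$, e.g.\ the intended choice $\tau=1/\sqrt{m}$).
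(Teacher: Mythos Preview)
Your proposal is correct and follows essentially the same approach as the paper: bound $|\la\vec{w},\vec{x}\ra y|\le\tau\sqrt{m}$ via Cauchy--Schwarz, apply the Bernstein approximation error bound of Theorem~\ref{bernstein_1} on the resulting interval, use $K$-Lipschitzness to control the modulus of continuity, and average over the dataset. Your treatment is in fact slightly more explicit about the width $b-a=2\tau\sqrt{m}$ and the degenerate case $\tau\sqrt{m}<1$, both of which the paper handles only implicitly.
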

\begin{proof}

It suffices to show that for any $(\vec{x},y)\in D$, $|P_{d-1}\varphi(\la \vec{w},\vec{x} \ra y) - \varphi(\la \vec{w},\vec{x} \ra y)| \in O(K\cdot\tau \sqrt{m/(d-1)})$.

First,  we have $|\la \vec{w},\vec{x} \ra y|\leq \|\vec{w}\|_2\|\vec{x}\|_2 \leq \tau \sqrt{m}$, where the first inequality follows from Cauchy–Schwarz inequality and $y \in \{-1,1\}$, and the second inequality follows from $\|\vec{w}\|_2 \leq \tau$.

Next, using the approximation error of Bernstein polynomial (Theorem \ref{bernstein_1}, Eq. \ref{bern_err} ), we have the maximum error $|P_{d-1}\varphi(\la \vec{w},\vec{x} \ra y) - \varphi(\la \vec{w},\vec{x} \ra y)| \in O( \omega(\varphi,\frac{\tau \sqrt{m}}{\sqrt{d-1}}))$ for any $\la \vec{w},\vec{x} \ra y \in [-\tau \sqrt{m},\tau \sqrt{m}]$. As $\varphi$ is $K$-Lipschitz, $\omega(\varphi,\frac{\tau \sqrt{m}}{\sqrt{d-1}}) \leq K \cdot \frac{\tau \sqrt{m}}{\sqrt{d-1}}$.
\end{proof}

Next, we bound the empirical risk difference using $L'$ between the real and synthetic datasets on any $\vec{w}$.

\begin{lemma} \label{lem:delta_mar_gen}
For any  $\vec{w}$ such that $\|\vec{w}\|_2 \leq \tau$, and any datasets $D_r$ and $D_s$ such that for all $q \in Q^m_{\leq d}$, $\|\vec{h}_{q}^{(r)} - \vec{h}_{q}^{(s)}\|_1 \leq \nu$, we have

\ifsubmissionshort{
\begin{align*}
&|L'(\vec{w},D_r) - L'(\vec{w},D_s)| \\
\in 
& O\left(\frac{1}{n} \cdot (K\tau\sqrt{m}+ \varphi(0)) \cdot (3m \cdot \max\{1,\tau\})^{d-1} \nu \right).
\end{align*}
}
\else{
\begin{align*}
|L'(\vec{w},D_r) - L'(\vec{w},D_s)| \in 
O\left(\frac{1}{n} \cdot (K\tau\sqrt{m}+ \varphi(0)) \cdot (3m \cdot \max\{1,\tau\})^{d-1} \nu \right).
\end{align*}}
\fi

\end{lemma}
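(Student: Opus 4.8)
\textbf{Proof proposal for Lemma~\ref{lem:delta_mar_gen}.}

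The plan is to expand the degree-$(d-1)$ Bernstein polynomial $P_{d-1}\varphi$ applied to $\la \vec{w}, \vec{x}\ra y$ into a sum of monomials in the coordinates of $(\vec{x}, y)$, and then to match each monomial to the marginal over the set of attributes it touches. Writing $P_{d-1}\varphi(z) = \sum_{k=0}^{d-1} a_k z^k$, I would substitute $z = \la \vec{w}, \vec{x}\ra y = \sum_{j=1}^m w_j x_j y$ and apply the multinomial theorem to each power $z^k$. Because $k \leq d-1$, each resulting monomial involves at most $d-1$ of the feature coordinates $x_j$ together with the label $y$, hence at most $d$ attributes in total. This is exactly why the hypothesis is stated for all $q \in Q^m_{\leq d}$: every monomial's support is a set $q$ of size at most $d$.

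First I would observe that averaging a fixed monomial $\prod_{j \in S} x_j^{e_j} \cdot y^{e_y}$ over the dataset $D$ depends only on the marginal $\vec{h}_q^{(D)}$ for the attribute set $q = S \cup \{m+1\}$: since each attribute takes finitely many values, $\frac{1}{n}\sum_{(\vec{x},y)\in D} \prod_{j} (\vec{x},y)[j]^{e_j}$ is a fixed linear functional of the count vector $\vec{h}_q^{(D)}$, with coefficients the monomial values $\prod_j \vec{t}[j]^{e_j}$ evaluated at each domain point $\vec{t} \in \Omega_q$. Since the data is normalized so that each coordinate lies in $[-1,1]$, every such coefficient has absolute value at most $1$. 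Therefore the difference between the real and synthetic averages of this monomial is bounded by $\frac{1}{n}\|\vec{h}_q^{(r)} - \vec{h}_q^{(s)}\|_1 \leq \frac{\nu}{n}$ via Hölder's inequality (the $\ell_\infty$ norm of the coefficient vector times the $\ell_1$ norm of the marginal difference).

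Summing over monomials, $|L'(\vec{w},D_r) - L'(\vec{w},D_s)| \leq \frac{\nu}{n} \sum_k |a_k| \cdot (\text{total weight of monomials from } z^k)$. Here I would bound the combined magnitude of the coefficients: expanding $(\sum_j w_j x_j y)^k$ and collecting the multinomial coefficients times $|w_j|$ factors gives a contribution controlled by $\|\vec{w}\|_1^k \cdot m^{\text{(something)}}$; using $\|\vec{w}\|_1 \leq \sqrt{m}\,\|\vec{w}\|_2 \leq \tau\sqrt{m}$ and $\|\vec{w}\|_\infty \leq \tau$, together with the coefficient bound $\sum_k |a_k| \leq \Norm{\varphi}(1 + \tfrac{2}{b-a})^{d-1}$ from Theorem~\ref{bernstein_1} Eq.~\eqref{bern_coef} on the interval $[-\tau\sqrt m, \tau\sqrt m]$, should assemble into the factor $(3m\cdot\max\{1,\tau\})^{d-1}$. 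The $(K\tau\sqrt{m} + \varphi(0))$ term arises from bounding $\Norm{\varphi}$ on that interval: since $\varphi$ is $K$-Lipschitz, $|\varphi(z)| \leq \varphi(0) + K|z| \leq \varphi(0) + K\tau\sqrt{m}$.

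\textbf{Main obstacle.} The delicate part is the bookkeeping that turns the monomial expansion and the Bernstein coefficient bound into the clean factor $(3m\cdot\max\{1,\tau\})^{d-1}$. One must carefully track how the multinomial expansion of $(\la\vec{w},\vec{x}\ra y)^k$ interacts with the per-monomial $\ell_\infty$ coefficient bound: the number of monomials of degree $k$ is roughly $m^k$, each attached to a product of $w_j$'s, and these must be recombined against the $\sum_k|a_k|$ bound without double-counting the $m$ factors or the $\max\{1,\tau\}$ factors. Getting the base of the exponential to be exactly $3m\cdot\max\{1,\tau\}$ (rather than a looser constant) requires aligning the $(1 + \tfrac{2}{b-a})$ term from the coefficient bound with the dimension factor $m$ and the weight-norm factor $\tau$; I expect this constant-chasing, rather than any conceptual difficulty, to be where the real care is needed.
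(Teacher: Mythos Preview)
Your proposal is correct and matches the paper's proof essentially step for step: expand $P_{d-1}\varphi(\la\vec{w},\vec{x}\ra y)=\sum_k a_k\sum_{\vec{u}\in[m]^k}\prod_{u\in\vec{u}}(\vec{w}[u]\,\vec{x}[u]\,y)$, rewrite each inner sum via the marginal $\vec{h}_q$ for $q=\mathcal{S}(\vec{u})\cup\{m+1\}$, and bound the difference using $\|\vec{h}_q^{(r)}-\vec{h}_q^{(s)}\|_1\le\nu$ together with the coefficient bound from Eq.~\eqref{bern_coef}. For the constant you flagged as the obstacle, the paper avoids $\|\vec{w}\|_1$ entirely: it indexes by ordered sequences $\vec{u}\in[m]^k$, bounds each $\prod_u|\vec{w}[u]|\le\|\vec{w}\|_\infty^k\le\tau^k$, and uses $|[m]^k|=m^k$ so that $m^k\tau^k\le(m\max\{1,\tau\})^{d-1}$; combining this with $\sum_k|a_k|\le\|\varphi\|\cdot 3^{d-1}$ (since $b-a\ge 1$) and $\|\varphi\|\le K\tau\sqrt{m}+\varphi(0)$ yields the stated factor directly.
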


\begin{proof}
We start by expressing the $L'$ of dataset $D$ on $\vec{w}$ using $D$'s marginals with order no more than $d$:
\begin{align*}
    L'(\vec{w},D)=&\frac{1}{n}\sum_{(\vec{x},y)\in D}P_{d-1}\varphi(\la \vec{w},\vec{x} \ra y)\\
    =&\frac{1}{n}\sum_{(\vec{x},y)\in D}\sum_{k=0}^{d-1} a_{k} (\la \vec{w},\vec{x} \ra y)^k\\
    =&\frac{1}{n}\sum_{(\vec{x},y)\in D}\sum_{k=0}^{d-1} a_{k}  \sum_{\vec{u} \in[m]^{k}} \prod_{u \in \vec{u}}(\vec{w}[u]\cdot \vec{x}[u] \cdot y)\\
    =&\frac{1}{n}\sum_{k=0}^{d-1} a_{k}  \sum_{\vec{u} \in[m]^{k}} \sum_{\vec{t}\in\Omega_{q}, q = \mathcal{S}(\vec{u}) \cup\{m+1\}} \vec{h}_q(\vec{t}) \prod_{u \in \vec{u}}(\vec{w}[u]\cdot \vec{t}[u] \cdot \vec{t}[m+1]),\\
\end{align*}
where $\mathcal{S}(\vec{u})$ returns a set containing unique elements in $\vec{u}$ and the entries of $\vec{t}$ is indexed by the set $q$.

The above expression allows us to bound the empirical risk difference between real and synthetic datasets using the bounded difference of their marginals. Specifically, 

\begin{align*}
&|L'(\vec{w},D_r) - L'(\vec{w},D_s)|\\
    =&\left|\frac{1}{n}\sum_{k=0}^{d-1} a_{k}  \sum_{\vec{u} \in[m]^{k}} \sum_{\vec{t}\in\Omega_{q}, q = \mathcal{S}(\vec{u}) \cup\{m+1\}} (\vec{h}_q^{r}(\vec{t})-\vec{h}_q^{s}(\vec{t})) \prod_{u \in \vec{u}}(\vec{w}[u]\cdot \vec{t}[u] \cdot \vec{t}[m+1])\right|\\
    \leq&\left|\frac{1}{n}\sum_{k=0}^{d-1} a_{k}\cdot \sum_{\vec{u} \in[m]^{k}}  \|\vec{h}_q^{(r)}(\vec{t})-\vec{h}_q^{(s)}(\vec{t})\|_1\tau^k\right|\\
    \leq&\left|\frac{1}{n}\sum_{k=0}^{d-1} a_{k}\cdot \sum_{\vec{u} \in[m]^{k}} \nu \tau^k\right|\\
    \leq&\left|\frac{1}{n}\sum_{k=0}^{d-1} a_{k}\cdot m^{d-1} \nu \max\{1,\tau\}^{d-1}\right|\\
    \leq&\frac{1}{n}\sum_{k=0}^{d-1} |a_{k}|\cdot m^{d-1} \nu \max\{1,\tau\}^{d-1}\\
    \in& O\left(\frac{1}{n} \cdot (K \cdot \tau\sqrt{m}+\varphi(0))\cdot 3^{d-1} \cdot (m\cdot \max\{1,\tau\})^{d-1} \nu \right),\\
\end{align*}
where the first inequality follows from $\|\vec{w}\|_2\leq \tau$ and $\vec{t}[m+1] \in \{-1,1\}$, and the last expression follows by applying Theorem \ref{bernstein_1}, Eq. \ref{bern_coef} to bound the sum of the absolute values of the polynomial coefficients.

\end{proof}

We are ready to prove the Theorem statement by combining the results in Lemmas \ref{lem:delta_poly_gen} and \ref{lem:delta_mar_gen}. Specifically, We write $ A \stackrel{P}{\approx} B$ to denote the LHS and RHS is bounded by the error due in Lemma \ref{lem:delta_poly_gen} and write $ A\stackrel{M}{\approx} B$ to denote the LHS and RHS is bounded by the error due in Lemma \ref{lem:delta_mar_gen}.

Let $\vec{w}'_r = \mathsf{argmin}_\vec{w} L'(\vec{w},D_r)$ and $\vec{w}'_s = \mathsf{argmin}_\vec{w} L'(\vec{w},D_s)$. (In the case that there is more than one minimums, it suffices to use arbitrary tie-breaking.) Then we have:
\begin{align*}
    &L(\vec{w}_s, D_r) \stackrel{P}{\approx}  L'(\vec{w}_s, D_r)\stackrel{M}{\approx}  L'(\vec{w}_s, D_s) \stackrel{P}{\approx} L(\vec{w}_s, D_s) \leq L(\vec{w}'_s, D_s)\\
    &\quad\quad\quad\stackrel{P}{\approx}L'(\vec{w}'_s, D_s) \leq L'(\vec{w}'_r, D_s)\stackrel{M}{\approx}L'(\vec{w}'_r, D_r) \leq  L'(\vec{w}_r, D_r) \stackrel{P}{\approx}L(\vec{w}_r, D_r),
\end{align*}
where the inequalities follows the optimality of $\vec{w}'_s,\vec{w}_s,\vec{w}'_r,\vec{w}_r$. This suggests $L(\vec{w}_s, D_r) - L(\vec{w}_r, D_r) \in O\left(\frac{1}{n} \cdot (K\tau\sqrt{m}+ \varphi(0)) \cdot (3m \cdot \max\{1,\tau\})^{d-1} \nu \right)$. Similarly, we have:
\begin{align*}
    &L(\vec{w}_s, D_r) \stackrel{P}{\approx}  L'(\vec{w}_s, D_r)\stackrel{M}{\approx}  L'(\vec{w}_s, D_s) \geq L'(\vec{w}'_s, D_s) \\
    &\quad\quad\quad\stackrel{M}{\approx}  L'(\vec{w}'_s, D_r)\geq L'(\vec{w}'_r, D_r) \stackrel{P}{\approx}  L(\vec{w}'_r, D_r) \geq L(\vec{w}_r, D_r), 
\end{align*}
which suggests $L(\vec{w}_r, D_r) - L(\vec{w}_s, D_r) \in O\left(\frac{1}{n} \cdot (K\tau\sqrt{m}+ \varphi(0)) \cdot (3m \cdot \max\{1,\tau\})^{d-1} \nu \right)$.
This concludes our proof of the Theorem.

\end{proof}

\subsection{Proof of Theorem~\ref{thm:ASD_log}}\label{app:ASD_log}
\begin{proof}
The majority of the proof is the same as that of Theorem \ref{thm:ASD_gen}. By using the additional property the first derivative of $\hat{\varphi}$ is continuous and its first derivative is $1/4$-Lipschitz, we can apply Theorem \ref{bernstein_2} to give a tighter bound of polynomial approximation error. Note that while Theorem \ref{bernstein_2} only considers functions defined over $[0,1]$, we can shrink any function defined over $[a,b]$ into this range. In the case of $\hat{\varphi}$, this results in the Lipschitz constant of its first derivative multiplied by $(b-a) = \tau \sqrt{m}$. Therefore, we have
\begin{align*}
    |\hat{L}'(\vec{w},D) - \hat{L}(\vec{w},D)| \in O(K\tau\sqrt{m}/(d-1)).
\end{align*}
Finally, by plugging in $K=1$ and $\|\hat{\varphi}\| \leq \ln(2)+ \tau\sqrt{m}$ for logistic loss yields the result.

\end{proof}

\subsection{Proof of Lemma~\ref{lem:syn_dp}}\label{app:syn_dp}
\begin{proof}
Recall each marginal is a vector of counts, where altering a single data point can, at most, result in a difference of 1 in two counts. Therefore, the $\ell_2$ sensitivity of the concatenated marginals is $\sqrt{2|Q^m_{\leq d}|} \leq \sqrt{2 m^d}$. By Theorem \ref{thm:gauss}, the noisy marginals satisfies $(\epsilon,\delta)$-DP. As the synthetic data is exclusively generated using these noisy marginals, it also satisfies $(\epsilon, \delta)$-DP through post-processing (Theorem \ref{thm:post_proc}).

\end{proof}

\subsection{Proof of Lemma~\ref{lem:syn_bound}}\label{app:syn_bound}
\begin{proof}
Note that for any $q \in Q^m_{\leq d}$ and any $\vec{t} \in \Omega_q$, $\vec{\hat{h}}_q(\vec{t}) - \vec{h}_q^{(r)}(\vec{t}) \sim \mathcal{N}(0,\sigma)$. Using Chernoff bound, $|\vec{\hat{h}}_q(\vec{t}) - \vec{h}_q^{(r)}(\vec{t})| \leq k\sigma$ with $1 - 2 e^{-k^2/2}$ probability. Using Union bound, $\|\vec{\hat{h}}_q - \vec{h}_q^{(r)}\|_1 = \sum_{t \in \Omega_q}|\vec{\hat{h}}_q(\vec{t}) - \vec{h}_q^{(r)}(\vec{t})|\leq l^dk\sigma$ with $1 - 2 l^d e^{-k^2/2}$ probability. 

By definition of $D_s$, we have $\max_{q \in Q^m_{\leq d}} \|\vec{\hat{h}}_q - M_q(D_s)\|_1 \leq \max_{q \in Q^m_{\leq d}} \|\vec{\hat{h}}_q - M_q(D_r)\|_1$. Therefore, using triangle inequality, we have $\max_{q \in Q^m_{\leq d}} \|\vec{h}_q^{(r)} - M_q(D_s)\|_1 \leq 2 l^dk\sigma$ with $1 - 2 l^d e^{-k^2/2}$ probability. Finally, using union bound, we have the above inequality holds for all $q\in Q^m_{\leq d}$ with $1-2 (ml)^d e^{-k^2/2}$.

By setting $k=\sqrt{2(\ln(2)(1+\lambda)+d\ln(ml))}$
concludes our proof.
\end{proof}

\section{Proof of Theorem~\ref{th:lower_bound}}
\label{app:theorem 4.1 proof}
\begin{proof}
We begin by setting parameters
$r, n, m, d, \gamma, \tau$ as follows:
\begin{definition}[Parameter Settings] \label{def:params}
We set parameters as follows:
\begin{itemize}
\item Set $r = 5/6$.
\item Set $m > 2e$.
\item Set $\gamma = (m/2)^{\frac{-5}{10-2r}}$.
\item Set $d = \frac{c' \cdot \gamma^{-2r/5}}{-\ln(\gamma)}$, for $c'=\min\{\frac{1}{5},\frac{c}{8}\}$, where $c$ is a constant depending only on $r$ (See Theorem \ref{th:stat_query_lb}).

\item Set $n = \exp(\gamma^{-2r/5})$.
\item Set $\tau = \frac{1}{\sqrt{m}}$.

\end{itemize}
\end{definition}

We next define the loss function which will be used for both the upper bound and the lower bound.

Consider the following convex loss function
$\varphi_\gamma : [-1, 1] \to \mathbb{R}$
defined in~\cite{DF20}:

\begin{equation}
    \varphi_\gamma(t) = \frac{(1-t)^2}{8} +
    \begin{cases*}
      1-2t/\gamma & $-1 \leq t \leq 0$ \\
      (t-\gamma)^2/\gamma^2 & $0 \leq t \leq \gamma$\\
      0 & $\gamma \leq t \leq 1$.
    \end{cases*}
  \end{equation}

The loss function $\varphi$ from Theorem~\ref{th:lower_bound}
is set to be $\varphi(t) := \gamma \cdot \varphi_\gamma(t)$.

\begin{claim} \label{claim:approx_f}
Let $P_d\varphi(x)$ be the Bernstein polynomial of order $d$ of $\varphi$
on $[-1, 1]$. Then
\[
||P_d\varphi - \varphi|| \leq \frac{5}{\sqrt{d}}.
\]
\end{claim}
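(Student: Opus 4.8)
The plan is to apply the general Bernstein approximation error bound from Theorem~\ref{bernstein_1}, specialized to the interval $[-1,1]$, and to control the modulus of continuity of $\varphi$ there. Since $\varphi(t) = \gamma \cdot \varphi_\gamma(t)$ and the bound is stated for a continuous function on $[a,b]$ with $a \leq 0 < 1 \leq b$, I would set $a = -1$, $b = 1$, so that $b - a = 2$. Equation~\eqref{bern_err} then gives $\|P_d\varphi - \varphi\| \leq \frac{5}{4}\,\omega\!\left(\varphi, \frac{2}{\sqrt{d}}\right)$, and the whole task reduces to showing $\omega\!\left(\varphi, \frac{2}{\sqrt{d}}\right)$ is at most roughly $\frac{4}{\sqrt{d}}$, i.e.\ that $\varphi$ is Lipschitz on $[-1,1]$ with a modulus of continuity satisfying $\omega(\varphi, h) \leq L\cdot h$ for a constant $L$ that, after multiplying by $\frac{5}{4}$ and substituting $h = 2/\sqrt{d}$, yields the clean $\frac{5}{\sqrt{d}}$ bound.

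The key step is therefore to bound the Lipschitz constant of $\varphi$ on $[-1,1]$. First I would recall that $\varphi$ is already asserted to be $2$-Lipschitz in the statement of Theorem~\ref{th:lower_bound}, so $\omega(\varphi, h) \leq 2h$ for all $h$. Substituting $h = 2/\sqrt{d}$ gives $\omega(\varphi, 2/\sqrt{d}) \leq 4/\sqrt{d}$, and hence $\|P_d\varphi - \varphi\| \leq \frac{5}{4}\cdot\frac{4}{\sqrt{d}} = \frac{5}{\sqrt{d}}$, which is exactly the claim. If one instead wants to verify the Lipschitz constant directly rather than cite it, I would differentiate $\varphi_\gamma$ piecewise: on $[-1,0]$ the derivative of $\varphi_\gamma$ is $-\frac{1-t}{4} - \frac{2}{\gamma}$, on $[0,\gamma]$ it is $-\frac{1-t}{4} + \frac{2(t-\gamma)}{\gamma^2}$, and on $[\gamma,1]$ it is $-\frac{1-t}{4}$; then since $\varphi = \gamma\cdot\varphi_\gamma$, the factor $\gamma$ cancels the $1/\gamma$ and $1/\gamma^2$ blowups, leaving $\varphi'$ bounded in absolute value by a small constant (the dominant contribution being $\gamma \cdot \frac{2}{\gamma} = 2$ from the middle term), confirming the $2$-Lipschitz bound.

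The main obstacle, such as it is, is bookkeeping rather than conceptual: I must make sure the constant $b - a = 2$ is tracked correctly through Equation~\eqref{bern_err} and that the piecewise derivative computation genuinely shows the $\gamma$-dependence cancels, so that the Lipschitz constant is an absolute constant independent of $\gamma$ (and hence of $n$, $m$, $d$). One subtlety worth checking is continuity of $\varphi$ and matching of the pieces at $t = 0$ and $t = \gamma$, which is needed to invoke $\omega(\varphi, h) \leq L h$ globally on $[-1,1]$; the definition is continuous at these junctions, so the Lipschitz bound holds across the whole interval. With these verifications in place, the chain $\|P_d\varphi - \varphi\| \leq \frac{5}{4}\,\omega(\varphi, 2/\sqrt{d}) \leq \frac{5}{4}\cdot 2 \cdot \frac{2}{\sqrt{d}} = \frac{5}{\sqrt{d}}$ closes the argument.
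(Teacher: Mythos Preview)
Your proposal is correct and follows essentially the same approach as the paper: the paper's proof simply cites Theorem~\ref{bernstein_1}, Eq.~\eqref{bern_err} together with the fact that $\varphi$ is $2$-Lipschitz (noting $\gamma<1$), which is exactly the chain $\|P_d\varphi-\varphi\|\le \frac{5}{4}\,\omega(\varphi,2/\sqrt{d})\le \frac{5}{4}\cdot 2\cdot \frac{2}{\sqrt{d}}=\frac{5}{\sqrt{d}}$ you wrote out. Your additional piecewise derivative check and continuity verification are more detail than the paper provides, but the argument is the same.
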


The claim follows from Theorem~\ref{bernstein_1}, Eq.~\ref{bern_err}
and the fact that $\varphi$ is $2$-Lipschitz. Note that $\gamma <1$.\\

We now turn to the upper bound (the first item in Theorem~\ref{th:lower_bound}).
For the upper bound, the algorithm $\mathsf{Syn}(n, \{n\cdot\vec{u}_q\}_{q \in Q^m_{\leq d}})$ will return the database $D_s$ of size $n$ in the support of $\mathcal{D}^n_m$ that minimizes $\max_{q \in Q^m_{\leq d}}\|\vec{h}_q^{(s)} - n\cdot\vec{u}_q\|_1$,
where $\{\vec{h}_q^{(s)}\}_{Q^m_{\leq d}}$ are the marginals computed with respect to $D_s$.
In the following we show that if 
$\{\vec{u}_q\}_{q \in Q^m_{\leq d}}$
has tolerance $\mathsf{tol} = \frac{1}{n}$,
then with all but negligible probability over choice of 
$D_r \sim \mathcal{D}_m^n$,
$\max_{q \in Q^m_{\leq d}}\|\vec{h}_q^{(r)} - n \cdot \vec{u}_q\|_\infty \in
O(\ln^2(n) \cdot \sqrt{n})$.
This implies that the optimal $\{\vec{h}_q^{(s)}\}_{q \in Q^m_{\leq d}}$ must also satisfy
$\max_{q \in Q^m_{\leq d}}\|\vec{h}_q^{(s)} - n\cdot\vec{u}_q\|_\infty \in
O(\ln^2(n) \cdot \sqrt{n})$, which in turn implies that
$\max_{q \in Q^m_{\leq d}}\|\vec{h}_q^{(s)} - \vec{h}_q^{(r)}\|_\infty \in
O(\ln^2(n) \cdot \sqrt{n})$. Finally, the $\ell_1$ norm of any marginals can be bounded $\max_{q \in Q^m_{\leq d}}\|\vec{h}_q^{(s)} - \vec{h}_q^{(r)}\|_1 \in
O(2^d \cdot \ln^2(n) \cdot \sqrt{n})$.

We next show that  with all but negligible probability over choice of 
$D_r \sim \mathcal{D}_m^n$,
$\max_{q \in Q^m_{\leq d}}\|\vec{h}_q^{(r)} - n \cdot \vec{u}_q\|_\infty \in
O(\ln^2(n) \cdot \sqrt{n})$.
By Chernoff bounds and the tolerance guarantee, for a particular $q \in Q^m_{\leq d}$ and $\vec{t}\in\Omega_q$,
$\Pr[|n \cdot \vec{u}_q[\vec{t}] - \vec{h}_q^{(r)}[\vec{t}]| > \beta] \leq 2 \cdot \exp(-2(\beta-1)^2/n)$. We set 
$\beta = \ln^2(n) \cdot \sqrt{n}$
for this probability to be negligible in $n$.
Since we have also set parameters such that
$\sum_{q\in Q^m_{\leq d}}|\Omega_q| \leq n$
, after taking a union bound over all $q \in Q^m_{\leq d}$ and $\vec{t} \in \Omega_q$,
we have that
with all but negligible probability over choice of 
$D_r \sim \mathcal{D}_m^n$,
$\max_{q \in Q^m_{\leq d}}\|\vec{h}_q^{(r)} - n \cdot \vec{u}_q\|_\infty \in
O(2^d \cdot\ln^2(n) \cdot \sqrt{n})$.

Using the parameter settings in Definition~\ref{def:params} we invoke Theorem~\ref{thm:ASD_gen} to obtain the upper bound:

\begin{align*}
|L(\vec{w}_s,D_r) - L(\vec{w}_r,D_r)| &\in  O(\frac{1}{\sqrt{d}})
+ O\left(\frac{(K\tau\sqrt{m}+ \varphi(0)) \cdot (3m \cdot \max\{1,\tau\})^{d-1} \nu}{n}  \right)\\
&\in  O(\frac{1}{\sqrt{d}})
+ O \left ( \gamma (3m)^{d-1} \cdot \frac{2^d \cdot \ln^2(n) \cdot \sqrt{n}}{n} \right)\\
&\in  O(\frac{1}{\sqrt{d}})
+ O \left (\ln^{-1}(n)(6 \ln^5 n)^{d-1} \cdot \frac{2^d \cdot \ln^2(n) \cdot \sqrt{n}}{n} \right)\\
&\in  O\left(\sqrt{\frac{-\ln(\gamma)}{\gamma^{-2r/5}}} \right )
+ O \left ((\ln^5 n)^{d} \cdot \frac{12^d \cdot \ln(n) \cdot \sqrt{n}}{n} \right)\\
&\in  O\left(\sqrt{\frac{\ln(\ln(n))}{\ln(n)}}\right )
+ O \left (\frac{n^{1/4} \cdot \ln(n) \cdot \sqrt{n}}{n} \right)\\
&\in  O\left(\sqrt{\frac{\ln(\ln(n))}{\ln(n)}}\right )
+ O \left (\frac{ \ln(n)}{n^{1/4}} \right)\\
\end{align*}

\vspace{3mm}

We now turn to the lower bound (the second item in Theorem~\ref{th:lower_bound}).
For the lower bound, we utilize the following lower bound on the accuracy of non-adaptive statistical query algorithms, where the accuracy is measured by the \emph{classification error}: $\mathsf{err}_{f^*, \mathcal{D}_m}(\hat{f})\triangleq \Pr_{(\vec{x},y)\sim \mathcal{D}_m}[f^*(\vec{x}) \neq \hat{f}(\vec{x})]$. (Looking forward, we consider $\mathcal{D}_m$ being linearly separable, and $f^*$ is one of the linear separators. Therefore, $f^*(\vec{x}) = y$ for any $(\vec{x},y)$ in the support of $\mathcal{D}_m$.

\begin{theorem}[Theorem 5 in \cite{DF20}] \label{th:stat_query_lb}
Let $r \in (0,1)$, $\gamma \in (0, 2^{-1/(1-r)})$,
$m \geq 2 \cdot \gamma^{-2-2r/5}$
and define $\eta = \gamma^{1-r}$.
Let $\mathcal{A}$ be a non-adaptive statistical
query algorithm such that for any linear separator $f^*$
and distribution $\mathcal{D}_m$ over $X = \{-1, 1\}^{m}$ with margin
$\gamma(f^*, \mathcal{D}_m) \geq \gamma$, returns a hypothesis
$\hat{f}$ with $\mathbb{E}_{\mathcal{A}}[\mathsf{err}_{f^*, \mathcal{D}_m}(\hat{f})] \leq 1/2 - \eta$.
If $\mathcal{A}$ has access to statistical queries with tolerance $\mathsf{tol} \geq \exp(-c\gamma^{-2r/5})$, then 
$\mathcal{A}$ requires at least $\exp(c\gamma^{-2r/5})$
queries, where $c > 0$ is a constant depending only on $r$.
\end{theorem}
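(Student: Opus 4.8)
The plan is to establish this as a correlational statistical-query (SQ) lower bound via the near-orthogonality paradigm. First I would reduce the weak-learning guarantee to a statement about correlations: writing each target as a $\pm 1$ labeling and setting $\langle g, h\rangle_{\mathcal{D}} = \mathbb{E}_{x\sim\mathcal{D}}[g(x)h(x)]$, the error bound $\mathbb{E}_{\mathcal{A}}[\mathsf{err}_{f^*,\mathcal{D}_m}(\hat{f})] \leq 1/2 - \eta$ is equivalent to $\mathbb{E}_{\mathcal{A}}[\langle \hat{f}, f^*\rangle_{\mathcal{D}_m}] \geq 2\eta$. Thus the algorithm must output a hypothesis noticeably correlated with the target, and the goal becomes showing that a non-adaptive SQ algorithm with few queries and coarse tolerance cannot produce such a hypothesis against a carefully chosen family of hard margin-$\gamma$ instances.

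The core of the argument is the construction of a large, near-orthogonal family of margin-$\gamma$ halfspaces. Concretely, I would build a family of concept-distribution pairs $\{(\mathcal{D}_i, f_i)\}_{i=1}^N$, each with margin at least $\gamma$, sharing a common reference behavior, and such that the pairwise correlations satisfy $|\langle f_i, f_j\rangle| \leq \beta$ for $i\neq j$ with $\beta$ small. A natural route is to draw normal vectors $w_i$ from a set of nearly orthogonal directions in $\mathbb{R}^m$ (obtainable from random $\pm 1$ vectors or a good code), set $f_i(x) = \mathrm{sign}(\langle w_i, x\rangle)$, and choose each $\mathcal{D}_i$ to be supported on the region where $|\langle w_i, x\rangle|/(\|w_i\|\,\|x\|) \geq \gamma$, which guarantees the margin. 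The dimension budget $m \geq 2\gamma^{-2-2r/5}$ is what permits simultaneously packing $N = \exp(\Omega(\gamma^{-2r/5}))$ such directions, keeping the pairwise correlations below threshold, and maintaining margin $\gamma$; balancing these three quantities is where the exponent $\gamma^{-2r/5}$ and the choice $\eta = \gamma^{1-r}$ originate.

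Given the family, the non-adaptive lower bound follows from a counting argument built on a Bessel-type lemma: if $f_1,\dots,f_N$ are unit-norm with pairwise correlation at most $\beta$, then for any query $\phi$ with $\|\phi\|_\infty \leq 1$, the number of indices $i$ with $|\langle\phi, f_i\rangle| \geq \tau$ is at most $1/(\tau^2-\beta)$ whenever $\tau^2 > \beta$; this is proved by bounding $\|\sum_{i\in I} f_i\|^2 \leq |I| + \beta|I|^2$ and applying Cauchy–Schwarz to $\langle \phi, \sum_{i\in I} f_i\rangle$. Fixing a reference under which labels are uncorrelated with every query, the oracle may answer query $\phi_j$ with its reference value whenever $|\langle\phi_j, f_i\rangle| \leq \tau$, so a fixed non-adaptive set of $Q$ queries forces a non-reference answer for at most $Q/(\tau^2-\beta)$ of the concepts, while the single output $\hat{f}$ is $2\eta$-correlated with at most $1/(4\eta^2-\beta)$ of them by the same lemma. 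If $N$ exceeds $Q/(\tau^2-\beta) + 1/(4\eta^2-\beta)$, some instance is both undetected and unserved, forcing a fixed $\hat{f}$ that fails the $2\eta$-correlation requirement and contradicting weak learning; rearranging gives $Q \geq (\tau^2-\beta)\bigl(N - 1/(4\eta^2-\beta)\bigr)$, which under $\tau = \exp(-c\gamma^{-2r/5})$ and $N = \exp(\Omega(\gamma^{-2r/5}))$ yields the claimed $Q \geq \exp(c\gamma^{-2r/5})$.

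The main obstacle is the construction of the second step: one must exhibit margin-$\gamma$ halfspaces that are simultaneously numerous and pairwise near-orthogonal, pushing the correlation $\beta$ below the squared tolerance $\tau^2$ while keeping the margin at $\gamma$ in dimension $m \approx \gamma^{-2-2r/5}$. Getting these parameters to cohere — most delicately, bounding $\langle f_i, f_j\rangle$ between two margin halfspaces measured under their own hard distributions rather than under a benign common product measure — requires the careful Fourier/Hermite or geometric estimates that drive the specific exponents and constitute the technical heart of the result. The reduction in the first step and the counting in the third step are routine by comparison.
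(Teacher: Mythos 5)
This statement is not proved in the paper at all: it is imported verbatim as Theorem~5 of \cite{DF20}, so there is no internal proof to compare against. Judged on its own merits, your blind attempt contains a genuine gap, and it is located exactly where you deferred the difficulty: the near-orthogonal family you need cannot exist, and the counting argument you build on it is invalid in the per-distribution setting you actually propose.

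The decisive problem is that your Bessel-type counting argument makes no essential use of non-adaptivity. The standard pairwise-correlation (SQ-dimension) argument --- answer each query with its reference value unless it detects the true concept, so that each query eliminates at most $1/(\tau^2-\beta)$ concepts --- applies verbatim to \emph{adaptive} SQ algorithms; this is precisely how the classical Blum et al.\ lower bounds are proved. But margin-$\gamma$ halfspaces \emph{are} learnable by adaptive SQ algorithms with $\mathrm{poly}(1/\gamma)$ queries at tolerance $\mathrm{poly}(\gamma)$ (e.g.\ by an SQ simulation of the margin perceptron); indeed, exhibiting this adaptive/non-adaptive separation is the entire point of the DF20 result. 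If a family of $N=\exp(\Omega(\gamma^{-2r/5}))$ margin-$\gamma$ concepts with pairwise correlations $\beta\le\exp(-\Omega(\gamma^{-2r/5}))$ under a \emph{common} distribution existed, your counting would equally yield an $N\cdot\gamma^{O(1)}$ lower bound against adaptive algorithms at tolerance $\gamma^{O(1)}$ (since then $\tau^2\gg\beta$), contradicting the adaptive upper bound. So under a common distribution the construction is impossible, not merely delicate. In the variant you actually sketch --- each $f_i$ paired with its own distribution $\mathcal{D}_i$ supported on the margin region of $w_i$ --- the counting lemma itself breaks: a statistical query $\psi(x,y)$ may depend on $x$ alone, and its expectation then distinguishes $\mathcal{D}_i$ from the reference through the $x$-marginal, with no bound of the form ``at most $1/(\tau^2-\beta)$ concepts detected per query''; the inner products $\langle f_i,f_j\rangle$ are not even taken under a common measure, so Cauchy--Schwarz on $\bigl\langle \phi, \sum_{i\in I} f_i \bigr\rangle$ has no meaning across the family. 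Distribution-specific SQ hardness requires genuinely different machinery (e.g.\ Feldman's discrimination norm or randomized statistical dimension), and the DF20 proof does not proceed through pairwise near-orthogonality: its hardness exploits non-adaptivity in an essential way, via target-dependent distributions whose informative examples carry exponentially small probability mass, which is why the stated tolerance threshold $\exp(-c\gamma^{-2r/5})$ appears and why adaptive algorithms (which can home in on the rare informative region) evade the bound. Your first and third steps are fine as routine reductions, but the plan as a whole proves a statement that is false for adaptive algorithms, so the architecture, not just the missing construction, must change.
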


\begin{corollary} \label{cor:lower_bound_fixed_params}
For the parameter settings given in Definition~\ref{def:params}, for any (even computationally inefficient) algorithm $\mathsf{Syn}$,
the algorithm defined in Algorithm~\ref{fig:non_adap_stat_query_alg} has error at least 
$\mathbb{E}[\mathsf{err}_{f^*, D}(\hat{f})] > 1/4$.
\end{corollary}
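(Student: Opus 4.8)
The plan is to recognize Algorithm~\ref{fig:non_adap_stat_query_alg} as a non-adaptive statistical query learner and then invoke the contrapositive of the SQ lower bound in Theorem~\ref{th:stat_query_lb}. Concretely, for each $q \in Q^m_{\leq d}$ and each $\vec{t} \in \Omega_q$ the algorithm queries the statistic $\mathbb{E}_{(\vec{x},y)\sim\mathcal{D}_m}[\mathbb{I}[(\vec{x},y)[q]=\vec{t}]]$, receives an answer $\vec{u}_q(\vec{t})$ within $\mathsf{tol} = 1/n$ of the truth, feeds $\{n\cdot\vec{u}_q\}_{q \in Q^m_{\leq d}}$ into the (arbitrary, possibly inefficient) subroutine $\mathsf{Syn}$ to obtain $D_s$, and outputs the minimizer $\vec{w}_s$ of $L(\cdot, D_s)$. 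Since the query set is fixed in advance this is non-adaptive, and its query complexity is $\sum_{q \in Q^m_{\leq d}} |\Omega_q| \leq m^d 2^d = (2m)^d$. The idea is that this is \emph{too few} queries, at the permitted tolerance, to learn a margin-$\gamma$ separator to accuracy better than $1/2 - \eta$, so some hard instance must force classification error above $1/4$.

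First I would check the three structural hypotheses of Theorem~\ref{th:stat_query_lb} for the parameters of Definition~\ref{def:params}: that $r = 5/6 \in (0,1)$; that $\gamma < 2^{-1/(1-r)}$, which holds once $n$ (hence $m$) is large since $\gamma = (m/2)^{-\Theta(1)} \to 0$; and the margin precondition $m \geq 2\gamma^{-2-2r/5}$, which the defining exponent of $\gamma$ is designed to make hold. Then I would verify the two quantitative conditions that make the lower bound bite. For the query count, taking logarithms reduces $(2m)^d < \exp(c\gamma^{-2r/5})$ to $d\ln(2m) < c\gamma^{-2r/5}$; substituting $d = c'\gamma^{-2r/5}/(-\ln\gamma)$ and using $-\ln\gamma = \Theta(\ln m)$ gives $d\ln(2m) = \Theta(c'\gamma^{-2r/5})$, and the choice $c' = \min\{1/5, c/8\}$ supplies the constant slack needed to stay strictly below $c\gamma^{-2r/5}$. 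The tolerance condition $1/n \geq \exp(-c\gamma^{-2r/5})$ likewise follows from the choice $n = \exp(\gamma^{-2r/5})$, placing $1/n$ in the oracle-accuracy regime covered by the theorem.

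With both conditions established, the contrapositive of Theorem~\ref{th:stat_query_lb} applies: if our algorithm achieved expected error at most $1/2 - \eta$ on \emph{every} margin-$\gamma$ instance it would be forced to make at least $\exp(c\gamma^{-2r/5})$ queries, contradicting $(2m)^d < \exp(c\gamma^{-2r/5})$. Hence there exist a linear separator $f^*$ and a distribution $\mathcal{D}_m$ over $\{-1,1\}^m$ with margin at least $\gamma$ for which the expected error exceeds $1/2 - \eta$, where $\eta = \gamma^{1-r} = \gamma^{1/6}$. To conclude I would bound $\eta < 1/4$: since $\gamma \to 0$ as $n$ grows, for sufficiently large $n$ we have $\gamma^{1/6} < 1/4$, so $\mathbb{E}[\mathsf{err}_{f^*,\mathcal{D}_m}(\hat{f})] > 1/2 - \eta > 1/4$, which is exactly the corollary (with $\hat{f}$ the classifier induced by $\vec{w}_s$).

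The main obstacle I anticipate is the joint parameter bookkeeping rather than any single inequality. The degree $d$ must be large enough to be useful in the companion upper-bound argument yet small enough that $(2m)^d$ stays strictly below the query threshold $\exp(c\gamma^{-2r/5})$, while $n$ must be large enough to drive $\gamma$ and hence $\eta$ below the accuracy gap but constrained so that the tolerance $1/n$ remains coarser than $\exp(-c\gamma^{-2r/5})$. Reconciling these competing demands is precisely what fixes the unusual exponents of Definition~\ref{def:params}, and pinning down the constant $c'$ so that $d\ln(2m)$ lands below $c\gamma^{-2r/5}$ is the delicate step.
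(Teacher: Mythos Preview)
Your proposal is correct and follows essentially the same route as the paper: identify Algorithm~\ref{fig:non_adap_stat_query_alg} as a non-adaptive SQ learner, verify that the parameter choices of Definition~\ref{def:params} satisfy all hypotheses of Theorem~\ref{th:stat_query_lb} (namely $r\in(0,1)$, $\gamma<2^{-1/(1-r)}$, $m\ge 2\gamma^{-2-2r/5}$, $\eta=\gamma^{1-r}\le 1/4$, $\mathsf{tol}=1/n\ge \exp(-c\gamma^{-2r/5})$, and query count $(2m)^d<\exp(c\gamma^{-2r/5})$), and then apply the contrapositive. The paper's writeup is terser but checks exactly the same list of conditions, including the bound $2^d m^d < e^{8c'\gamma^{-2r/5}} \le \exp(c\gamma^{-2r/5})$ that your $c'=\min\{1/5,c/8\}$ discussion is targeting.
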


The corollary follows by noting that for the parameter settings of
$r, n, m, \gamma, d$ in Definition~\ref{def:params}, all of the following hold:
$r \in (0,1)$, $\gamma \in (0, 2^{-1/(1-r)})$,
$m \geq 2 \cdot \gamma^{-2-2r/5}$,
$\eta = \gamma^{1-r} \leq \frac{1}{4}$,
$\mathsf{tol} = \frac{1}{n} \geq \exp(-c\gamma^{-2r/5})$,
and $\sum_{q\in Q^m_{\leq d}}|\Omega_q| \leq 2^d \cdot m^d <e^ {8c'\gamma^{-2r/5}}\leq \exp(c\gamma^{-2r/5})$.
Since the algorithm defined in Algorithm~\ref{fig:non_adap_stat_query_alg} is a non-adaptive statistical query algorithm with tolerance $\mathsf{tol} = \frac{1}{n}$ and making less than $\exp(c\gamma^{-2r/5})$ number of statistical queries, Theorem~\ref{th:stat_query_lb} implies that
its error must be at least $1/4$.

The above corollary gives a bound on the error of linear separator $\vec{w}_s$ outputted by Algorithm~\ref{fig:non_adap_stat_query_alg}, whereas we need a bound on the difference in loss between $\vec{w}_s$ and the optimal linear separator. The following Claim allows us to relate the error and the loss.
\begin{claim} \label{claim:accuracy_loss}
Let the loss function $L''(\vec{w},\mathcal{D}_m) := \mathbb{E}_{(\vec{x}, y) \sim \mathcal{D}_m}[\gamma \cdot \varphi_\gamma(y \langle \vec{w}, \vec{x} \rangle )]$, where the expectation is taken with respect to distribution $\mathcal{D}_m$.
Let $\hat{\vec{w}}$ be any vector of norm at most $\tau$.
Let $\vec{w}^*$ be the optimal linear separator with respect to $L''(\vec{w},\mathcal{D}_m)$.

Let $A$ be any algorithm.
If $\mathbb{E}_{\hat{\vec{w}} \leftarrow A}[L''(\hat{\vec{w}},\mathcal{D})] \leq L''(\vec{w}^*,\mathcal{D}) + \frac{\gamma}{8}$, then
$\mathbb{E}_{\hat{\vec{w}} \leftarrow A}[\mathsf{err}_{\mathcal{D}_m}(\hat{\vec{w}})] \leq 1/4$.
\end{claim}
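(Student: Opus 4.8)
The plan is to sandwich the population loss $L''$ so that a small excess loss forces a small classification error: I would bound $L''(\hat{\vec{w}},\mathcal{D}_m)$ below by a multiple of the error, bound the optimum $L''(\vec{w}^*,\mathcal{D}_m)$ above by a small quantity, and combine. Throughout I write $t = y \langle \vec{w}, \vec{x} \rangle$. Under the parameter setting $\tau = \frac{1}{\sqrt{m}}$ and because $\vec{x} \in \{-1,1\}^m$ has $\|\vec{x}\|_2 = \sqrt{m}$, Cauchy--Schwarz gives $|t| \leq \|\vec{w}\|_2 \|\vec{x}\|_2 \leq 1$, so every argument fed to $\varphi_\gamma$ lands in $[-1,1]$ where the function is defined; this domain check is a small but genuine point to verify before anything else.

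First I would establish the lower bound $L''(\hat{\vec{w}},\mathcal{D}_m) \geq \gamma \cdot \mathsf{err}_{\mathcal{D}_m}(\hat{\vec{w}})$, which reduces to the pointwise inequality $\varphi_\gamma(t) \geq \mathbb{I}[t \leq 0]$. On $[-1,0]$ the defining formula gives $\varphi_\gamma(t) = \frac{(1-t)^2}{8} + 1 - 2t/\gamma \geq 1$, since both summands are nonnegative for $t \leq 0$; for $t > 0$ the function is a sum of nonnegative pieces and hence $\geq 0$. A misclassification of $(\vec{x},y)$ is precisely the event $t \leq 0$, so taking expectations over $\mathcal{D}_m$ and multiplying by $\gamma$ yields $L''(\hat{\vec{w}},\mathcal{D}_m) \geq \gamma \cdot \mathsf{err}_{\mathcal{D}_m}(\hat{\vec{w}})$.

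Second I would exhibit a feasible comparator of small loss, giving an upper bound on $L''(\vec{w}^*,\mathcal{D}_m)$. Since $\mathcal{D}_m$ is linearly separable with margin at least $\gamma$, there is a separator $\vec{w}$; rescaling it to norm exactly $\tau = \frac{1}{\sqrt{m}}$ leaves the scale-invariant normalized margin $\frac{y\langle \vec{w},\vec{x}\rangle}{\|\vec{w}\|_2\|\vec{x}\|_2} \geq \gamma$ unchanged, so $t \geq \gamma \|\vec{w}\|_2 \|\vec{x}\|_2 = \gamma$ on the entire support. On $[\gamma,1]$ we have $\varphi_\gamma(t) = \frac{(1-t)^2}{8} \leq \frac{1}{8}$, hence $L''(\vec{w},\mathcal{D}_m) \leq \frac{\gamma}{8}$, and by optimality of $\vec{w}^*$ we get $L''(\vec{w}^*,\mathcal{D}_m) \leq \frac{\gamma}{8}$.

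Finally I would combine the three facts. The claim's hypothesis together with the upper bound gives $\mathbb{E}_{\hat{\vec{w}} \leftarrow A}[L''(\hat{\vec{w}},\mathcal{D}_m)] \leq \frac{\gamma}{8} + \frac{\gamma}{8} = \frac{\gamma}{4}$, and the lower bound gives $\gamma \cdot \mathbb{E}_{\hat{\vec{w}} \leftarrow A}[\mathsf{err}_{\mathcal{D}_m}(\hat{\vec{w}})] \leq \mathbb{E}_{\hat{\vec{w}} \leftarrow A}[L''(\hat{\vec{w}},\mathcal{D}_m)] \leq \frac{\gamma}{4}$; dividing through by $\gamma$ produces $\mathbb{E}_{\hat{\vec{w}} \leftarrow A}[\mathsf{err}_{\mathcal{D}_m}(\hat{\vec{w}})] \leq \frac{1}{4}$, as required. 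I do not expect a serious obstacle here, since the geometry does all the work; the only delicate points are confirming $t \in [-1,1]$ so that $\varphi_\gamma$ is evaluated in range, and verifying the two pointwise bounds on $\varphi_\gamma$, both of which are engineered to be clean by the $\frac{(1-t)^2}{8}$ offset and the margin normalization.
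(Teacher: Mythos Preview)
Your argument is correct and in fact more complete than the paper's own proof. The paper's proof is essentially two lines: it divides the hypothesis through by $\gamma$ to obtain the inequality $\mathbb{E}_{\hat{\vec{w}} \leftarrow A}[L'(\hat{\vec{w}},\mathcal{D}_m)] \leq L'(\vec{w}^*,\mathcal{D}_m) + \frac{1}{8}$ for the unscaled loss $L'(\vec{w},\mathcal{D}_m) = \mathbb{E}[\varphi_\gamma(y\langle \vec{w},\vec{x}\rangle)]$, and then invokes Claim~3 of \cite{DF20} as a black box to conclude the error bound. You instead unpack that black box directly, via the two pointwise observations $\varphi_\gamma(t) \geq \mathbb{I}[t \leq 0]$ and $\varphi_\gamma(t) \leq \frac{1}{8}$ on $[\gamma,1]$, together with the margin property of $\mathcal{D}_m$ (which is implicit from the surrounding context of Theorem~\ref{th:stat_query_lb}). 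Your route is more elementary and self-contained, at the cost of making explicit the margin assumption that the paper leaves buried in the cited claim; the paper's route is terser but opaque without access to \cite{DF20}. Substantively the two are aligned, since your two inequalities are exactly the ingredients that make the cited claim go through.
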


\begin{proof}
Assume $\mathbb{E}_{\hat{\vec{w}} \leftarrow A}[L''(\hat{\vec{w}},\mathcal{D})] \leq L''(\vec{w}^*,\mathcal{D}) + \frac{\gamma}{8}$. Then this implies that
$\mathbb{E}_{\hat{\vec{w}} \leftarrow A}[L'(\hat{\vec{w}},\mathcal{D}_m)] \leq L'(\vec{w}^*,\mathcal{D}_m) + \frac{1}{8}$,
where $L'_{\mathcal{D}_m}$ is the cost function $L'(\vec{w},\mathcal{D}_m) := \mathbb{E}_{(\vec{x}, y) \sim \mathcal{D}_m}[\varphi_\gamma(y \langle \vec{w}, \vec{x} \rangle )]$.
By Claim 3 in \cite{DF20}, this implies that
$\mathbb{E}_{\hat{\vec{w}} \leftarrow A}[\mathsf{err}_{\mathcal{D}_m}(\hat{\vec{w}})] \leq 1/4$.
\end{proof}

Taking Corollary~\ref{cor:lower_bound_fixed_params} and Claim~\ref{claim:accuracy_loss} together, 
we have that for every algorithm $\mathsf{Syn}$
there exists a distribution $\mathcal{D}_m$
and a set of vectors $\{\vec{u}_q\}_{q \in Q^m_{\leq d}}$ of tolerance $\mathsf{tol} = \frac{1}{n}$ such that
\begin{equation} \label{eq:loss_bound}
\mathbb{E}_{D_s \leftarrow \mathsf{Syn}(n,  \{n \cdot\vec{u}_q\}_{q \in Q^m_{\leq d}})}[L''(\vec{w}_s,\mathcal{D}_m)] \geq L''(\vec{w}^*,\mathcal{D}_m) + \frac{\gamma}{8}.
\end{equation}
We must now convert the expected loss given above to \emph{excess empirical risk w.r.t.~the real training data}.
To do so, we note that
for every $(\vec{x}, y)$ in the support of $\mathcal{D}_m$,
$\varphi(y \langle \vec{w}, \vec{x} \rangle)$
is lower bounded by $0$ and upper bounded by $1$ and therefore so is
$\mathbb{E}_{D_s \leftarrow \mathsf{Syn}(n,  \{n \cdot\vec{u}_q\}_{q \in Q^m_{\leq d}})}[\varphi(y \langle \vec{w}_s, \vec{x} \rangle)]$.
Recall that
\begin{equation} \label{eq:hoeff_1}
L''(\vec{w}^*,\mathcal{D}_m)
= \mathbb{E}_{(\vec{x},y) \leftarrow \mathcal{D}_m}[\varphi(y \langle \vec{w}^*, \vec{x} \rangle)].
\end{equation}
By linearity of expectation, we also have that
\begin{equation} \label{eq:hoeff_2}
\mathbb{E}_{D_s \leftarrow \mathsf{Syn}(n, \{n \cdot\vec{u}_q\}_{q \in Q^m_{\leq d}})}[L''(\vec{w}_s,\mathcal{D}_m)]
= \mathbb{E}_{(\vec{x},y) \leftarrow \mathcal{D}_m}[\mathbb{E}_{D_s \leftarrow \mathsf{Syn}(n, \{n \cdot\vec{u}_q\}_{q \in Q^m_{\leq d}})}[\varphi(y \langle \vec{w}_s, \vec{x} \rangle)]].
\end{equation}
Since our setting of parameters implies that $\frac{n}{\ln^2(n)} \geq \frac{800}{\gamma^2}$,
we have by (\ref{eq:hoeff_1}), (\ref{eq:hoeff_2}) and by
standard Hoeffding bounds that with all but negligible probability over choice of
$D_r$,
\begin{equation} \label{eq:hoeffding_loss}
\mathbb{E}_{D_s \leftarrow \mathsf{Syn}(n, \{n \cdot\vec{u}_q\}_{q \in Q^m_{\leq d}})}[L''(\vec{w}_s,\mathcal{D}_m)] - \mathbb{E}_{D_s \leftarrow \mathsf{Syn}(n, \{n \cdot\vec{u}_q\}_{q \in Q^m_{\leq d}})}[L(\vec{w}_s,D_r)] \leq \frac{\gamma}{20} \quad \mbox{ and } \quad
L(\vec{w}^*,D_r) - L''(\vec{w}^*,\mathcal{D}_m) \leq \frac{\gamma}{20}.
\end{equation}
Therefore, combining (\ref{eq:loss_bound}), (\ref{eq:hoeffding_loss}), and by the optimality of $\vec{w}_r$,
\begin{equation} \label{eq:loss_gamma}
\mathbb{E}_{D_s \leftarrow \mathsf{Syn}(n, \{n \cdot\vec{u}_q\}_{q \in Q^m_{\leq d}})}[L(\vec{w}_s,D_r)] \geq L(\vec{w}^*,D_r) + \frac{\gamma}{16} \geq L(\vec{w}_r,D_r) + \frac{\gamma}{16}.
\end{equation}
Substituting $\gamma = \frac{1}{\ln^3(n)}$ into (\ref{eq:loss_gamma}) we obtain
\[
|\mathbb{E}_{D_s \leftarrow \mathsf{Syn}(n, \{n \cdot\vec{u}_q\}_{q \in Q^m_{\leq d}})}[L(\vec{w}_s,D_r)] - L(\vec{w}_r,D_r)| \in \Omega(\frac{1}{\ln^3(n)}),
\]
which concludes the proof of the theorem.
\end{proof}

\section{More on Synthetic Data Generation}
\label{app:syn_gen}
\subsection{Private-PGM} 
\label{sec:private_PGM}
The core of Private-PGM is to fit a graphical model to the sensitive data in a differentially-private way, and then use the graphical model to generate the synthetic data.
The high-level steps involve computing noisy marginals of the sensitive data for selected sets of attributes of small size. Secondly, executing an optimization problem to identify a probability distribution that "best explains" these noisy marginal measurements, representing it as a probabilistic graphical model. Finally, generate synthetic data that closely matches the estimated distribution. 
Please refer to Algorithm~\ref{alg:PGM_data} for the pseudocode for generating synthetic data using Private-PGM.

\ifsubmissionshort{
\begin{algorithm}
   \caption{$f_\mathsf{PPGM}$ Generating Synthetic Data using Private PGM~\citep{DBLP:journals/corr/abs-2108-04978}}
    \label{alg:PGM_data}
\begin{algorithmic}
   \STATE {\bfseries Input:} Real dataset $D_{r}\in \mathbb{R}^{n \times (m+1)}$,  marginals queries $Q$, noise scale $\sigma$
  \STATE {\bfseries Output:} Synthetic Dataset $D_{s}$
   \FOR{$q\in Q$}
   \STATE Measuring marginal: $\vec{h}_q=M_q(D_{r})$, where $M_q$ is the algorithm for measuring marginals;
   \STATE {\bfseries Add noise:} $\vec{\hat{h}}_q=\vec{h}_q+\mathcal{N}(0, \sigma)$;
   \ENDFOR
   
   \STATE {\bfseries Generate graphical model} $P_\theta$ with weight vector $\theta$: $ \operatorname{argmin}_\theta \sum_{q\in Q} \Norm{M_q(P_\theta)-\vec{\hat{h}}_q}^2_2$ ;
\STATE {\bfseries Generate synthetic data} $D_{s}$ using $P_\theta$ using Algorithm~\ref{alg:synthetic_data} and Algorithm~\ref{alg:synthetic_culumn}
   (See Below);
\end{algorithmic}
\end{algorithm}
}

\else{
\begin{algorithm}
\caption{$f_\mathsf{PPGM}$ Generating Synthetic Data using Private PGM}
\label{alg:PGM_data}
\DontPrintSemicolon
  \KwIn{Real dataset $D_{r}\in \mathbb{R}^{n \times (m+1)}$,  marginals queries $Q$, noise scale $\sigma$}
  \KwOut{Synthetic Dataset $D_{s}$}

\BlankLine
  \textbf{Measure Noise Marginals:} \;
  
  \For{$q\in Q$}{Measuring marginal: $\vec{h}_q=M_q(D_{r})$, where $M_q$ is the algorithm for measuring marginals.\;
  \textcolor{red}{Add noise: $\vec{\hat{h}}_q=\vec{h}_q+\mathcal{N}(0, \sigma)$}
}
\BlankLine
  \textbf{Synthesize data}\;{
     i). Generate graphical model $P_\theta$ with weight vector $\theta$: $ \operatorname{argmin}_\theta \sum_{q\in Q} \Norm{M_q(P_\theta)-\vec{\hat{h}}_q}^2_2$ \;
ii). Generate synthetic data $D_{s}$ using $P_\theta$ using Algorithm~\ref{alg:synthetic_data} and Algorithm~\ref{alg:synthetic_culumn}
  } (See Appendix~\ref{app:algorithem})  
\end{algorithm}
}\fi

The algorithm~\ref{alg:PGM_data} above makes use of the following two subroutines to generate synthetic data from the graphical model: algorithm~\ref{alg:synthetic_data} and algorithm~\ref{alg:synthetic_culumn}.

\ifsubmissionshort{
\begin{algorithm}
\caption{Synthetic data generation}
\label{alg:synthetic_data}
\begin{algorithmic}
   \STATE {\bfseries Input:} graphical model (see Algorithm~\ref{alg:PGM_data})
   \STATE {\bfseries Output:} dataset (synthetic dataset)
   
   \STATE Initialize the set of processed attributes to the empty set;
   \FOR{each attribute $i$}
   \STATE Let C be the set of all neighbors of $i$ in the graphical model, intersected with the set of
processed attributes;
\STATE Group data by C, and
\FOR{each group in C}
   \STATE Calculate $\mu$ from the graphical model, the vector of fractional counts for every possible value of attribute $i$, for the given group of other attributes;
   \STATE Generate synthetic column for this group using Algorithm~\ref{alg:synthetic_culumn};
   \STATE Add this partial column to the grouped rows in the dataset;
   \ENDFOR
   \STATE Add $i$ to the set of processed attributes;
   \ENDFOR       
\end{algorithmic}
\end{algorithm}

\begin{algorithm}
\caption{Synthetic column}
\label{alg:synthetic_culumn}
\begin{algorithmic}
   \STATE {\bfseries Input:} $\mu$ (vector of fractional counts), $n$ (total number of samples to generate)
   \STATE {\bfseries Output:} column (synthetic column of data)
  \STATE Generate $\lfloor \mu_t \rfloor$ items with value t and add to column for each t in domain;
 \STATE Calculate remainders: $p_t = \mu_t-\lfloor \mu_t \rfloor $ ;
  \STATE Sample $n -\sum_t \lfloor \mu_t \rfloor$ items (without replacement) from distribution proportional to $p_t$, and add to column;
  \STATE Shuffle values in column;
  
\end{algorithmic}
\end{algorithm}
}
\else{
\begin{algorithm}[H]
\caption{Synthetic data generation}
\label{alg:synthetic_data}
\DontPrintSemicolon
  \KwIn{graphical model (see Algorithm~\ref{alg:PGM_data})} 
  \KwOut{dataset (synthetic dataset)}

\BlankLine
  Initialize the set of processed attributes to the empty set\;
  \For{each attribute $i$}{Let C be the set of all neighbors of $i$ in the graphical model, intersected with the set of
processed attributes\;
  Group data by C, and \For {each group in C}{
  Calculate $\mu$ from the graphical model, the vector of fractional counts for every possible
value of attribute $i$, for the given group of other attributes\;
Generate synthetic column for this group using Algorithm \ref{alg:synthetic_culumn}\;
Add this partial column to the grouped rows in the dataset\;
  } 
  Add $i$ to the set of processed attributes\;}
\end{algorithm}

\begin{algorithm}[H]
\caption{Synthetic column}
\label{alg:synthetic_culumn}
\DontPrintSemicolon
  \KwIn{$\mu$ (vector of fractional counts), $n$ (total number of samples to generate)}
  \KwOut{column (synthetic column of data)}

\BlankLine
  Generate $\lfloor \mu_t \rfloor$ items with value t and add to column for each t in domain\;
  Calculate remainders: $p_t = \mu_t-\lfloor \mu_t \rfloor $ \;
  Sample $n -\sum_t \lfloor \mu_t \rfloor$ items (without replacement) from distribution proportional to $p_t$, and add to column\;
  Shuffle values in column\;
\end{algorithm}
}
\fi

In the Private-PGM approach, differential privacy is achieved by applying a noise mechanism to the marginal measurements.
In our experiments, we use the Gaussian mechanism~\citep{dwork2014algorithmic}, as it is reliable and widely used noise mechanisms for enforcing differential privacy. 
For any single individual's data is altered, it can affect up to two queries by 1 in each marginal measurement.
This results in an sensitivity  of $\sqrt{2|Q|}$ for all measurements. 
Invoking Theorem~\ref{thm:gauss}, adding a Gaussian noise to each query with variance,
\ifsubmissionshort{$\sigma^2=\frac{2\sqrt{2|Q|}^2\log(1.25/\delta)}{\epsilon^2}$, }
\else{
\begin{align} \label{eq:simga}
\sigma^2=\frac{2\sqrt{2|Q|}^2\log(1.25/\delta)}{\epsilon^2}, 
\end{align} }\fi
we have that the collection of noisy marginals outputted in step
\ifsubmissionshort{\textit{Add noise}}
\else{ 4 }
\fi
of Algorithm 3 achieves $(\epsilon, \delta)$-differential privacy. Since the inputs to 
\ifsubmissionshort{\textit{Generate graphical model}}
\else{step 5} \fi
of Algorithm 3 are differentially private, then the synthetic data finally outputted by Algorithm 3 must also be $(\epsilon, \delta)$-differentially private.
Any subsequent analyses, including the model training on synthetic data, and further analysis using the trained model, are considered as post-processing. According to Theorem~\ref{thm:post_proc}, these analyses will continue to uphold $(\epsilon, \delta)$-DP.

\subsection{AIM}
\label{sec:AIM}
The optimal choice of marginal queries/attribute sets to be captured by the synthetic data can be difficult to determine, and can itself leak private information. Therefore, AIM uses an adaptive and iterative algorithm to ``automatically'' select marginal query that best reduces the distance between the real and synthetic data. 

More specifically, AIM allows the user to pre-specify a privacy budget $\rho$ and a collection $Q$ of marginal queries to be selected from. For instance, $Q$ can be the collection of all 3-order marginal queries. 
The algorithm starts with an initial synthetic data distribution $\mathcal{\hat{D}}_0$.
In each iteration $i=1,2,\dots$, it randomly selects a marginal query $q_i$ from $Q$ with probability proportional to $q_i$'s \emph{quality score} that captures the distance between its real marginal and its marginal evaluated from the current estimated synthetic data distribution $\mathcal{\hat{D}}_{i-1}$.
This randomness in the selection process ensures differential privacy and the method is formally known as the exponential mechanism \citep{MT07} in DP literature.
Then, AIM uses the Gaussian mechanism to measure the marginal of the selected query, followed by using Private-PGM to estimate data distribution $\mathcal{\hat{D}}_{i}$ from all noisy marginals measured so far. 
Finally, to terminate, AIM keeps track of the privacy parameter and the junction tree size corresponding to the selected marginals and makes sure they do not exceed their limits.

To handle composition easily, AIM uses zero-concentrated differential privacy (zCDP) and formally claims the following theorem. 
\begin{theorem}
     For any $T \geq m$, where $T$ is a user-specified limit on the number of iterations, and $\rho \geq 0$, AIM satisfies $\rho$-zCDP.
\end{theorem}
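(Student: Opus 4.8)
The plan is to decompose AIM into the per-iteration primitives that actually touch the sensitive data, bound each of these in the language of zero-concentrated differential privacy (zCDP), and then invoke adaptive composition together with AIM's built-in budget tracker to conclude a total privacy cost of at most $\rho$. First I would recall the three standard facts about zCDP that drive the argument: (i) the Gaussian mechanism with $\ell_2$ sensitivity $\Delta$ and noise variance $\sigma^2$ is $\frac{\Delta^2}{2\sigma^2}$-zCDP; (ii) the exponential mechanism built from a quality score of sensitivity $\Delta$ with parameter $\epsilon$ is $\frac{\epsilon^2}{8}$-zCDP; and (iii) zCDP composes adaptively by summing parameters and is closed under post-processing, so that the adaptive composition of a $\rho_1$-zCDP step followed by a (possibly data-dependent) $\rho_2$-zCDP step is $(\rho_1+\rho_2)$-zCDP.

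Next I would isolate a single iteration $i$ of AIM. Exactly two operations in each iteration depend on $D_r$: the selection of the marginal query $q_i$ via the exponential mechanism, which by (ii) costs some $\rho^{\mathsf{sel}}_i$, and the measurement of the selected marginal $M_{q_i}(D_r)$ via the Gaussian mechanism, which by (i) costs some $\rho^{\mathsf{meas}}_i$. Everything else in the iteration, namely fitting the graphical model $\hat{\mathcal{D}}_i$ with Private-PGM and updating the candidate synthetic distribution, is a randomized function of the noisy measurements gathered so far, and therefore by (iii) incurs no additional cost. Hence iteration $i$ is $(\rho^{\mathsf{sel}}_i + \rho^{\mathsf{meas}}_i)$-zCDP. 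I note that the hypothesis $T \geq m$ is a well-definedness condition ensuring the mandatory initialization (measuring all one-way marginals) fits within the iteration horizon, rather than a privacy-critical assumption.

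I would then chain the iterations by adaptive composition. Because the query chosen and the budget spent in round $i$ depend on the outputs of rounds $1,\dots,i-1$, the per-round mechanisms must be treated as adaptively selected, which is exactly what the adaptive form of (iii) permits, yielding that the first $k$ iterations together are $\left(\sum_{i=1}^{k}(\rho^{\mathsf{sel}}_i + \rho^{\mathsf{meas}}_i)\right)$-zCDP. The crux is the budget accounting: AIM maintains a running tally of $\sum_i (\rho^{\mathsf{sel}}_i + \rho^{\mathsf{meas}}_i)$ and halts once continuing would exceed the remaining budget, so the tally never surpasses the pre-specified $\rho$. I expect this bookkeeping to be the main obstacle, for two reasons. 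First, AIM uses a budget-annealing schedule that can double the per-round allocation when progress stalls, so one must verify that the allocation rule provably keeps the cumulative spend at or below $\rho$ in the worst case. Second, the stopping time is itself data-dependent, so care is needed to confirm that the adaptive composition bound applies to this randomized, data-dependent number of rounds rather than to a fixed horizon. Once the total zCDP parameter of the selection-and-measurement transcript is shown to be at most $\rho$, a final appeal to closure under post-processing in (iii), applied to the synthetic-data-generation subroutines of Algorithms~\ref{alg:synthetic_data} and~\ref{alg:synthetic_culumn}, gives that the released synthetic dataset is $\rho$-zCDP, completing the proof.
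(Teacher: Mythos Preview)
The paper does not actually prove this theorem: it is quoted verbatim from the AIM paper \citep{aim} and presented without argument, so there is no in-paper proof to compare against. That said, your outline is precisely the standard argument that the AIM paper itself uses---per-iteration exponential-mechanism selection plus Gaussian-mechanism measurement, adaptive zCDP composition, budget accounting that never overshoots $\rho$, and post-processing for the Private-PGM and synthesis steps---so your proposal is correct and aligned with the original source.
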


This can be converted to the standard DP guarantee using the following proposition:
\begin{proposition}[zCDP to DP~\citep{zCDP_to_DP}] If a mechanism $M$ satisfies $\rho$-zCDP, it also satisfies $(\epsilon, \delta)$-differential privacy for all $\epsilon\geq 0$ and
$\delta=\min_{\alpha>1}\frac{\exp((\alpha-1)(\alpha \rho-\epsilon)}{\alpha-1}(1-\frac{1}{\alpha})^\alpha$.   
\end{proposition}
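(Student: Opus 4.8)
The plan is to prove the proposition by the standard route from a Rényi-divergence bound to a tail bound on the privacy loss, optimizing over the Rényi order $\alpha$. Recall that $M$ satisfying $\rho$-zCDP means precisely that for every pair of adjacent inputs $x,x'$ and every $\alpha>1$, the Rényi divergence obeys $D_\alpha(M(x)\,\|\,M(x'))\le\alpha\rho$. I would fix one such adjacent pair, write $P=M(x)$ and $Q=M(x')$, and let $Z=\ln(dP/dQ)$ be the privacy-loss random variable. First I would recall the standard reformulation of approximate DP via the hockey-stick divergence: the pair $(P,Q)$ satisfies the $(\epsilon,\delta)$ guarantee in one direction exactly when $\mathbb{E}_{o\sim P}\!\left[\left(1-e^{\epsilon-Z(o)}\right)_+\right]\le\delta$, where $(\cdot)_+=\max\{\cdot,0\}$. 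The reverse direction follows by symmetry, since the zCDP hypothesis is stated for \emph{all} adjacent pairs and hence applies with the roles of $x$ and $x'$ exchanged.

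The heart of the argument is a pointwise exponential bound on the integrand. For any $\alpha>1$ I would establish that
\[
\left(1-e^{\epsilon-t}\right)_+\le c_\alpha\cdot e^{(\alpha-1)(t-\epsilon)}\qquad\text{for all }t,
\]
where $c_\alpha=\sup_{s\ge 0}\left(1-e^{-s}\right)e^{-(\alpha-1)s}$. An elementary calculus computation — setting the derivative of $s\mapsto(1-e^{-s})e^{-(\alpha-1)s}$ to zero yields the stationary point $e^{-s^\ast}=1-\tfrac{1}{\alpha}$, i.e.\ $s^\ast=\ln(\alpha/(\alpha-1))\ge 0$ — shows that this supremum equals $\frac{1}{\alpha}\left(1-\tfrac{1}{\alpha}\right)^{\alpha-1}=\frac{(1-1/\alpha)^\alpha}{\alpha-1}$, which is exactly the prefactor appearing in the statement.

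Having fixed the constant, the remaining step is to integrate and invoke the moment bound. Using the identity $\mathbb{E}_{o\sim P}\!\left[e^{(\alpha-1)Z(o)}\right]=\mathbb{E}_{o\sim Q}\!\left[(dP/dQ)^\alpha\right]=e^{(\alpha-1)D_\alpha(P\|Q)}$ (a change of measure followed by the definition of $D_\alpha$), one obtains
\[
\mathbb{E}_{o\sim P}\!\left[\left(1-e^{\epsilon-Z(o)}\right)_+\right]\le c_\alpha\,e^{-(\alpha-1)\epsilon}\,\mathbb{E}_{o\sim P}\!\left[e^{(\alpha-1)Z(o)}\right]\le\frac{(1-1/\alpha)^\alpha}{\alpha-1}\,e^{(\alpha-1)(\alpha\rho-\epsilon)},
\]
where the last inequality substitutes the zCDP bound $D_\alpha(P\|Q)\le\alpha\rho$. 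Since this holds for every $\alpha>1$, taking the minimum over $\alpha$ yields the claimed value of $\delta$. I expect the main obstacle to be the pointwise optimization that determines $c_\alpha$: verifying that the stationary point $s^\ast$ lies in the feasible region and is a maximizer, and then algebraically collapsing $\frac{1}{\alpha}(1-1/\alpha)^{\alpha-1}$ into the exact form $(1-1/\alpha)^\alpha/(\alpha-1)$ that matches the statement. The surrounding steps — the hockey-stick characterization, the change of measure, and the substitution of the moment bound — are routine.
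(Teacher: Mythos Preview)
The paper does not supply its own proof of this proposition: it is stated as a known result and attributed via citation, with no argument given in the text or the appendices. There is therefore no ``paper's proof'' to compare your proposal against.

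That said, your proposal is correct and is precisely the standard derivation of this sharp conversion (the one appearing in the cited source). The three ingredients you identify---the hockey-stick characterization $\mathbb{E}_{P}[(1-e^{\epsilon-Z})_{+}]\le\delta$, the pointwise bound $(1-e^{-s})_{+}\le c_\alpha\,e^{(\alpha-1)s}$ with $c_\alpha=\tfrac{1}{\alpha}(1-1/\alpha)^{\alpha-1}=(1-1/\alpha)^\alpha/(\alpha-1)$ obtained by elementary calculus, and the change-of-measure identity $\mathbb{E}_P[e^{(\alpha-1)Z}]=e^{(\alpha-1)D_\alpha(P\|Q)}$ followed by the zCDP bound $D_\alpha\le\alpha\rho$---are exactly the right ones, and your algebra checks out. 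The only minor point worth flagging is to be explicit that the stationary point $s^\ast=\ln\bigl(\alpha/(\alpha-1)\bigr)$ is indeed a global maximizer on $[0,\infty)$ (the function vanishes at $0$ and at $\infty$ and has a single interior critical point), but you already anticipated this.
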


While AIM algorithm may only select a small subset of marginal queries to measure before termination, it provides upper bounds of the $\ell_1$ difference on both the selected marginals and non-selected marginals in $Q$. The former can be easily derived as the selected marginal are measured with Gaussian noise. For the latter, it utilizes the relation between the last selected marginal query and the remaining non-selected ones. In particular, as the marginal query is selected with probability proportional to the exponential of their marginal distance to the real ones, this provides a way to derive the upper bound on all remaining non-selected marginals.
More formally, for a marginal query $q\in Q$, let $n_q = |\Omega_q|$, and $w_q$ be a parameter that specifies the ``importance'' of $q$ among $Q$, which is larger if the average intersection size of $q$ with other sets in $Q$ is high).
At $i$-th iteration, let $\sigma_i, \epsilon_i$ be the hyperparameters that AIM automatically selected to determine the amount of noise, and $q_i$ be the marginal query selected at this iteration, and $Q_i \subseteq Q$ is the marginal queries that can be selected from, which only includes marginal queries that can be measured without significantly increase the junction tree size for Private-PGM. AIM paper proves the following theorem:

\begin{theorem}[Confidence Bound for Non-selected Marginal Query]Let $\Delta_i = \max_{q \in Q_i} w_q$. For all $q\in Q_i$, with probability at least $1-e^{-\lambda_1^2/2}-e^{-\lambda_2}$:
\begin{align*}
    \|\vec{h}_q^{(r)}-M_q(\mathcal{\hat{D}}_{i-1})\|_1 \leq w_q^{-1}(B_q + \lambda_1\sigma_i \sqrt{n_{q_i}} + \lambda_2 \frac{2\Delta_i}{\epsilon_i}),
\end{align*}
    where $B_q$ is equal to:
\begin{align*}
    w_{q_i}\|M_q(\mathcal{\hat{D}}_{t-1})-\vec{h}_{q_i}^{(r)}\|_1 + \sqrt{2/\pi} \sigma_i (w_qn_q-w_{q_i}n_{q_i}) + \frac{2\Delta_i}{\epsilon_i} \log(|Q_i|)
\end{align*}
\label{thm:AIM bound non-selected marginal}
\end{theorem}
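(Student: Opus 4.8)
The plan is to read the selection of the marginal query $q_i$ as a single invocation of the exponential mechanism and to extract from its utility guarantee a bound on the (weighted) error of \emph{every} non-selected query in $Q_i$ simultaneously. At iteration $i$, AIM draws $q_i$ from $Q_i$ with probability proportional to $\exp\!\big(\epsilon_i\,\tilde q(q)/2\Delta_i\big)$, where the quality score is the noise-corrected weighted error $\tilde q(q) = w_q\big(\|\vec{h}_q^{(r)} - M_q(\mathcal{\hat{D}}_{i-1})\|_1 - \sqrt{2/\pi}\,\sigma_i n_q\big)$. First I would verify the sensitivity: only $\vec{h}_q^{(r)}$ depends on the data, and changing one record perturbs the counts so that $\|\vec{h}_q^{(r)} - M_q(\mathcal{\hat{D}}_{i-1})\|_1$ moves by at most a fixed amount, whence $\tilde q$ has sensitivity at most $\Delta_i = \max_q w_q$. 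This is exactly what makes the exponential-mechanism constant come out as $\tfrac{2\Delta_i}{\epsilon_i}$.

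Step two applies the standard exponential-mechanism tail bound: with probability at least $1-e^{-\lambda_2}$ the selected query obeys $\tilde q(q_i) \ge \max_{q'\in Q_i}\tilde q(q') - \tfrac{2\Delta_i}{\epsilon_i}(\ln|Q_i|+\lambda_2)$. Because $\max_{q'}\tilde q(q') \ge \tilde q(q)$ for the fixed $q$ of interest, this inequality holds relative to \emph{every} $q \in Q_i$ under the same event, so no further union bound over $Q_i$ is needed. Expanding $\tilde q(q_i)\ge\tilde q(q)-\tfrac{2\Delta_i}{\epsilon_i}(\ln|Q_i|+\lambda_2)$ and moving the $q$-terms to the left gives
\begin{align*}
w_q\,\|\vec{h}_q^{(r)} - M_q(\mathcal{\hat{D}}_{i-1})\|_1 \le\ & w_{q_i}\|\vec{h}_{q_i}^{(r)} - M_{q_i}(\mathcal{\hat{D}}_{i-1})\|_1 + \sqrt{2/\pi}\,\sigma_i\big(w_q n_q - w_{q_i} n_{q_i}\big) \\
&{}+ \tfrac{2\Delta_i}{\epsilon_i}\ln|Q_i| + \lambda_2\,\tfrac{2\Delta_i}{\epsilon_i},
\end{align*}
which already reproduces $B_q$ together with the $\lambda_2\tfrac{2\Delta_i}{\epsilon_i}$ summand; note the cancellation of the expected-noise corrections is precisely what yields the $\sqrt{2/\pi}\,\sigma_i(w_q n_q - w_{q_i} n_{q_i})$ term.

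The remaining $\lambda_1\sigma_i\sqrt{n_{q_i}}$ term comes from a second, independent source of randomness, the Gaussian noise actually realized on the selected marginal. The quality score only charges $q_i$ its \emph{expected} $\ell_1$ noise $\sqrt{2/\pi}\,\sigma_i n_{q_i}$, so to control the quantity governing the downstream estimate I would invoke concentration of the realized noise $\sum_j |z_j|$, with $z\sim\mathcal{N}(0,\sigma_i^2 I_{n_{q_i}})$, about its mean $\sqrt{2/\pi}\,\sigma_i n_{q_i}$: each $|z_j|$ is sub-Gaussian of scale $O(\sigma_i)$, so a Chernoff/Bernstein estimate gives a deviation of at most $\lambda_1\sigma_i\sqrt{n_{q_i}}$ except with probability $e^{-\lambda_1^2/2}$. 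A union bound over the two failure events produces the stated probability $1-e^{-\lambda_1^2/2}-e^{-\lambda_2}$, and dividing the combined inequality through by $w_q$ yields the theorem.

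The main obstacle I anticipate is careful bookkeeping rather than a deep new idea. The delicate points are: pinning down the exact sensitivity of the noise-corrected score so the exponential-mechanism constant matches $\tfrac{2\Delta_i}{\epsilon_i}$ (including the adjacency convention that fixes the leading factor), and matching the expected-versus-realized noise so the concentration slack lands as $\lambda_1\sigma_i\sqrt{n_{q_i}}$ rather than as $\lambda_1\sigma_i n_{q_i}$ — this requires concentrating the \emph{sum of absolute values} of the noise coordinates around its mean, not bounding the raw $\ell_1$ noise. I would also double-check that the first term of $B_q$ is genuinely the error of $q_i$ under $\mathcal{\hat{D}}_{i-1}$ (so that it is produced directly by the rearrangement) and that the uniform-over-$Q_i$ guarantee is inherited for free from the single comparison against the maximizer.
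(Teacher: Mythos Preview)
The paper does not actually prove this theorem: it is quoted verbatim (with a few evident typos, e.g.\ $M_q$ for $M_{q_i}$ and $t-1$ for $i-1$) from the AIM paper of McKenna et al.\ and attributed there without any argument. So there is no ``paper's own proof'' to compare against.

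That said, your approach is essentially the proof that appears in the AIM paper: read the selection step as one call to the exponential mechanism with score $\tilde q(q)=w_q(\|\vec{h}_q^{(r)}-M_q(\mathcal{\hat D}_{i-1})\|_1-\sqrt{2/\pi}\,\sigma_i n_q)$ and sensitivity $\Delta_i$, apply its utility tail bound to get the $B_q+\lambda_2\tfrac{2\Delta_i}{\epsilon_i}$ part uniformly over $Q_i$, and then control the Gaussian noise on the \emph{measured} marginal of the selected $q_i$ to produce the $\lambda_1\sigma_i\sqrt{n_{q_i}}$ slack. One clarification worth making when you write it up: in AIM's original statement the first term of $B_q$ uses the \emph{noisy} measurement $\tilde y_{q_i}$, not the true $\vec{h}_{q_i}^{(r)}$; the $\lambda_1\sigma_i\sqrt{n_{q_i}}$ term is exactly the concentration of $\|\tilde y_{q_i}-\vec h_{q_i}^{(r)}\|_1$ (a sum of $n_{q_i}$ half-normals) about its mean $\sqrt{2/\pi}\,\sigma_i n_{q_i}$, which then lets you pass between the observable $\|M_{q_i}(\mathcal{\hat D}_{i-1})-\tilde y_{q_i}\|_1$ and the true error appearing in the exponential-mechanism inequality. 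With that bookkeeping in place your sketch goes through.
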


In McKenna et al’s empirical evaluation, it indicated that the marginal selection approach employed by AIM makes it consistently outperformed all other marginal-preserving mechanisms for preserving statistical properties. 
In Section
\ifsubmissionshort{~\ref{app: Assess AIM for Different Classifiers}}\else{~\ref{sec: Assess AIM for Different Classifiers}}\fi, we will show our experimental results that extend this advantage to consistently learning multiple models with different classifiers.

\section{Additional Experimental Results}
\label{app:experiments}
\subsection{Performance of synthetic data with various privacy budgets}
\label{app: Performance of synthetic data with various privacy budgets}
This section provides Table~\ref{tab:performance_in_diff_eps}, serving as a supplement to Section~\ref{sec:experiment Synthetic data in varied privacy budgets}, presenting numerical test results of performance for various evaluation metrics across six datasets, employing varied $\epsilon$, using
AIM synthetic data generation. Refer to Figure~\ref{fig:performance_in_diff_eps} for a visual representation.

\begin{table*} [tb]
\caption{Table presenting comprehensive performance results for various evaluation metrics across six datasets, employing varied $\epsilon$, using AIM synthetic data generation. Refer to Figure~\ref{fig:performance_in_diff_eps} for a visual representation.  }
\vskip 0.1in
 \centering
\begin{tabular}{lcccccccccc}
\toprule
\multicolumn{2}{l}{Dataset}  & \multicolumn{8}{c}{Synthetic data with varied epsilon} &Real data\\
\cmidrule(lr){3-10} \cmidrule(lr){11-11}
& & 0.25 & 0.5 & 0.75 & 1 & 1.25 & 1.5 & 1.75 & 2 & \\
\midrule
Adult & Accuracy & 0.832 & 0.836 & 0.838 & 0.84 & 0.84 & 0.841 & 0.842 & 0.842 & 0.843  \\
 &ROC score & 0.879 & 0.883 & 0.886 & 0.887 & 0.889 & 0.888 & 0.889 & 0.889 & 0.891 \\
 &Empirical Risk & 0.358 & 0.353 & 0.349 & 0.347 & 0.345 & 0.345 & 0.344 & 0.344 & 0.34 \\
\midrule
Churn & Accuracy & 0.719 & 0.742 & 0.748 & 0.748 & 0.748 & 0.751 & 0.756 & 0.757 & 0.766 \\
&ROC score & 0.762 & 0.789 & 0.801 & 0.801 & 0.801 & 0.804 & 0.809 & 0.81  & 0.826\\
&Empirical Risk & 0.546 & 0.52  & 0.506 & 0.505 & 0.498 & 0.501 & 0.493 & 0.491 & 0.47   \\
\midrule
Compas & Accuracy & 0.587 & 0.645 & 0.646 & 0.658 & 0.663 & 0.663 & 0.664 & 0.666 & 0.671 \\
& ROC score & 0.6   & 0.68  & 0.685 & 0.705 & 0.709 & 0.71  & 0.71  & 0.71  & 0.718  \\
& Empirical Risk & 0.674 & 0.638 & 0.636 & 0.624 & 0.625 & 0.624 & 0.623 & 0.622 & 0.617  \\
\midrule
Dutch & Accuracy & 0.812 & 0.815 & 0.817 & 0.816 & 0.817 & 0.817 & 0.817 & 0.817 & 0.817 \\
& ROC score &  0.884 & 0.885 & 0.887 & 0.886 & 0.886 & 0.886 & 0.886 & 0.886 & 0.886 \\
& Empirical Risk & 0.43  & 0.427 & 0.424 & 0.426 & 0.425 & 0.426 & 0.426 & 0.426 & 0.427\\
\midrule
Heart & Accuracy & 0.572 & 0.675 & 0.701 & 0.719 & 0.749 & 0.756 & 0.767 & 0.78  & 0.802 \\
& ROC score & 0.569 & 0.728 & 0.778 & 0.78  & 0.82  & 0.834 & 0.838 & 0.853 & 0.883 \\
& Empirical Risk & 0.986 & 0.699 & 0.617 & 0.593 & 0.536 & 0.508 & 0.496 & 0.484 & 0.452 \\
\midrule
Law & Accuracy & 0.892 & 0.894 & 0.895 & 0.895 & 0.895 & 0.894 & 0.895 & 0.896 & 0.901 \\
& ROC score & 0.829 & 0.854 & 0.856 & 0.857 & 0.859 & 0.859 & 0.857 & 0.858 & 0.869 \\
& Empirical Risk & 0.274 & 0.259 & 0.257 & 0.256 & 0.254 & 0.258 & 0.257 & 0.255 & 0.245 \\
\bottomrule
\end{tabular}
\label{tab:performance_in_diff_eps}
\vskip -0.1in
\end{table*}
}

\subsection{Comparison with Other DPML Techniques}
\label{app: Comparison with Other DPML Techniques}
To appraise the performance of AIM synthetic data in comparison to prevailing DP-ML approaches, we conducted training using two DP-ML methods. The first one is Differentially-Private Stochastic Gradient Descent (DP-SGD)~\citep{Abadi_2016}, which ensures differential privacy by introducing carefully calibrated noise to the gradients during the training process. Refer to Algorithm~\ref{alg:DP-SGD} for details. 
The second method, Private Aggregation of Teacher Ensembles (PATE) learning method~\citep{papernot2017semisupervised},
assumes a slightly different threat model, as we discuss next. 
The PATE method entails training multiple teacher models on sensitive training data and ensuring differential privacy by introducing noise to the counts of teacher predictions for each subsequent query made.
In addition, a \emph{public}, unlabeled training dataset is required, and differentially-private queries to the teachers are used to label the data.  Finally, a student 
model is trained using the newly labeled data, and this student model can then be released as the final DP-ML model.
Note that the model that is ultimately released does not preserve the privacy of the unlabeled training dataset.
Thus, this mechanism crucially assumes existence of public, unlabeled training data. Therefore, in order to compare against PATE we construct 3 datasets, a private-labeled-training dataset for teacher models, a public-unlabeled-training dataset for student model, and a testing dataset to assess the performance of the student model. We provide additional details below.

\ifsubmissionshort{
\begin{algorithm}
   \caption{Differentially-Private Stochastic Gradient Descent (DP-SGD)~\citep{iyengar2019towards} Algorithm 2}
    \label{alg:DP-SGD}
\begin{algorithmic}
   \STATE {\bfseries Input:} Training dataset: $(X, Y)\in D$, where features $X\in \mathbb{R}^{n\times m}$, labels $Y\in \mathbb{R}^n$, Lipschitz constant: L,
privacy parameters: $(\epsilon, \delta)$, number of
iterations: T, minibatch size: B, learning rate: $\eta$, gradient norm bound $C$. 
  \STATE {\bfseries Output:} Logistic Regression Model with weights $\vec{w}$
  \STATE Initialize weights $\mathbf{w}=\{0\}^m$;
  \STATE $\sigma^2=\frac{16L^2T \log{(1/\delta)}}{n^2\epsilon^2}$;
   \FOR{$t\in T$}
   \STATE Sample $B$ samples uniformly with replacement from $D$: $(x_1, y_1), ..., (x_B, y_B)$;
   \STATE {\bfseries Clip gradient:} $\hat{\nabla} L(x_i,y_i)=\nabla L(x_i,y_i)/\max{(1, \frac{\Norm{\nabla L(x_i,y_i)}_2}{C})}$
   \STATE {\bfseries Add noise:} $\nabla L_t(\vec{w})=\frac{1}{B}\sum_{i=1}^B \hat{\nabla} L(x_i,y_i)+\mathcal{N}(0, \sigma)$;
    \STATE Update weights: $\mathbf{w} = \mathbf{w} - \eta \cdot \nabla L_{t}(\vec{w})$;
   \ENDFOR
      
\end{algorithmic}
\end{algorithm}
}

\else{
\begin{algorithm}
\caption{Differentially-Private Stochastic Gradient Descent (DP-SGD)}
\label{alg:DPSGD}
\DontPrintSemicolon
  \KwIn{Training dataset: $(X, Y)\in D$, where features $X\in \mathbb{R}^{n\times m}$, labels $Y\in \mathbb{R}^n$, Lipschitz constant: L,
privacy parameters: $(\epsilon, \delta)$, number of
iterations: T, minibatch size: B, learning rate: $\eta$,  gradient norm bound $C$. }

\BlankLine
Initialize weights $\mathbf{w}={0}^m$ \;
$\sigma^2=\frac{16L^2T \log{(1/\delta)}}{n^2\epsilon^2}$\;
\For{$t\in T$}{
Sample $B$ samples uniformly with replacement from $D$: $(x_1, y_1), ..., (x_B, y_B)$\;
\textbf{Clip gradient:} $\hat{\nabla} L(x_i,y_i)=\nabla L(x_i,y_i)/\max{(1, \frac{\Norm{\nabla L(x_i,y_i)}_2}{C})}$\;
\textbf{Add noise:} $\nabla L_t(\vec{w})=\frac{1}{B}\sum_{i=1}^B \hat{\nabla} L(x_i,y_i)+\mathcal{N}(0, \sigma)$\;
Update weights: $\mathbf{w} = \mathbf{w} - \eta \cdot \nabla L_{t}(\vec{w})$\;
}
  
\end{algorithm}
}
\fi

In our experiments, we retained the standard procedure of splitting the real data into 80\% training set and 20\% testing set, a consistent approach across all three DP-ML methods. For PATE-learning, we additionally sampled 100 data points from the training data (20 data points for the Heart Data,  due to its small dataset size), corresponding to the public, unlabeled data, and set those aside for later training of the student model.  The teachers models were trained on the remaining training set using the scikit-learn's logistic regression model with the LBFGS solver, the same algorithm used for training the student model. In sum, all three methods end up outputting a DP-ML model, and they all preserve the DP of training data, while PATE has 100 data points less in its training data, and we evaluate performance for all models using the testing data.

Refer to Figure~\ref{fig: aim vs dpgd vs pate} for the detailed parameters setup, which also displays the accuracy comparison among the three methods across six datasets. 
We note that the AIM and PATE models were trained using second-order methods such as Newton's method, converge faster, as opposed to gradient descent used by DP-SGD.
Secondly, we notes that the quality of the model obtained from DP-SGD for some dataset, i.e. Heart and Dutch datasets, is less competitive.
We believe it may be possible to further improve the quality of the model outputted by DP-SGD but it would require a 
considerable amount of effort in tuning its essential hyperparameters, such as learning rate, iterations and decay rate.  We further note that such fine-tuning incurs its own privacy leakage resulting from either running multiple differentially-private training runs to set the hyperparameters, or from setting hyperparameters based on \textit{non-private} training runs~\citep{tunning_hyperparameter}.

In summary, in our experiments, under identical privacy budgets, $\epsilon$, the Pre-DPML approach with AIM-generated synthetic data yielded a model that performs as well as or better than the models generated via the two Training-DPML methods, with the added benefit that with the Pre-DPML approach subsequent training can be performed on the synthetic data without increasing the privacy budget.

\begin{figure*}[tb]
\centering
\vskip 0.1in
\centering
\hspace{-3em}
\begin{minipage}{.2\textwidth}
\begin{subfigure}[Adult Dataset]
{\includegraphics[scale=0.25]{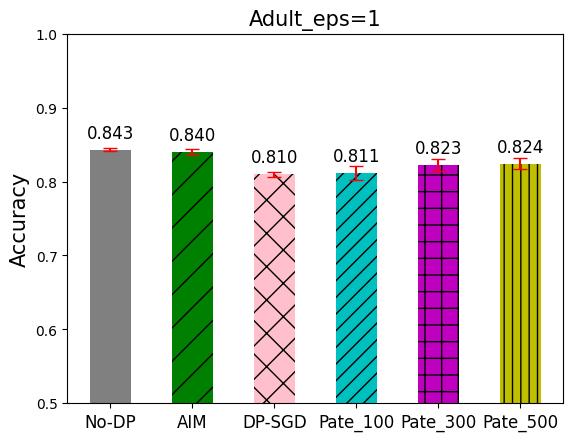}}
\label{fig:adult_DPMLs}
\end{subfigure}
\end{minipage}
\hspace{6em}
\begin{minipage}{.22\textwidth}
\begin{subfigure}[Compas Dataset]
{\includegraphics[scale=0.25]{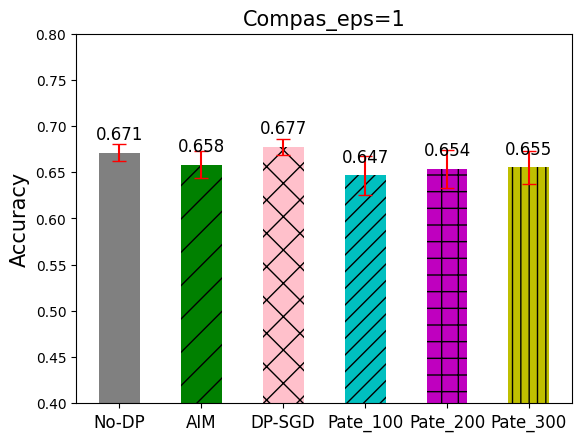}}
\label{fig:compas_DPMLs}
\end{subfigure}
\end{minipage}
\hspace{5em}
\begin{minipage}{.22\textwidth}
\begin{subfigure}[Law Dataset]
{\includegraphics[scale=0.25]{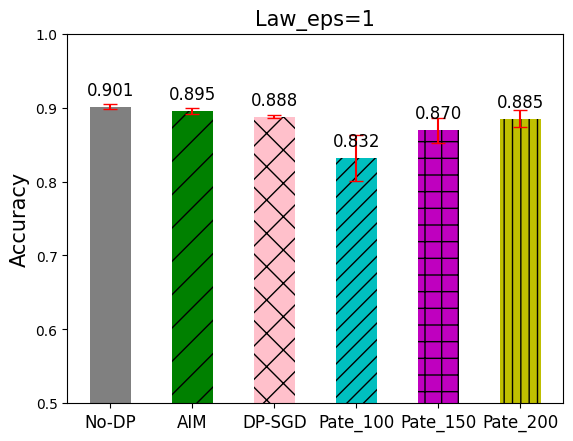}}
\label{fig:law_DPMLs}
\end{subfigure}
\end{minipage}

\vspace{0.5cm} 
\hspace{-3em}
\begin{minipage}{.22\textwidth}
\begin{subfigure}[Heart Dataset]
{\includegraphics[scale=0.25]{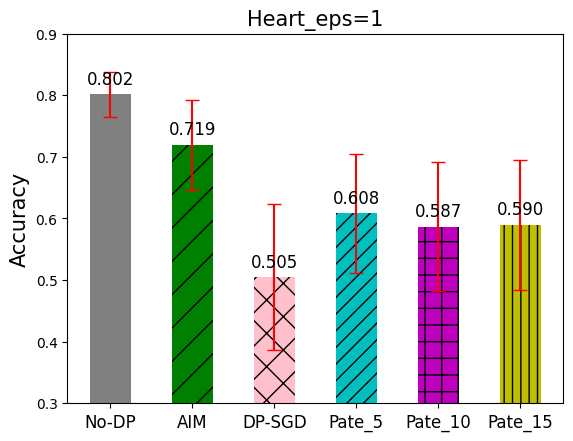}}
\label{fig:heart_DPMLs}
\end{subfigure}
\end{minipage}
\hspace{5em}
\begin{minipage}{.22\textwidth}
\begin{subfigure}[Dutch Dataset]
{\includegraphics[scale=0.25]{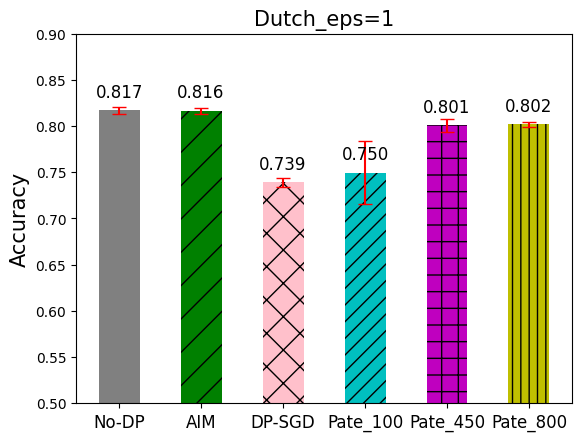}}
\label{fig:dutch_DPMLs}
\end{subfigure}
\end{minipage}
\hspace{5em}
\begin{minipage}{.22\textwidth}
\begin{subfigure}[Churn Dataset]
{\includegraphics[scale=0.25]{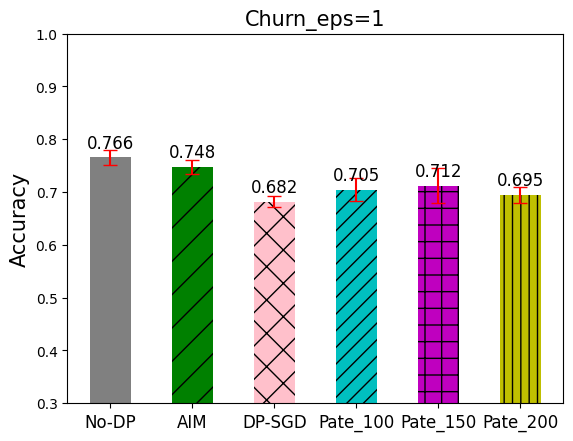}}
\label{fig:churn_DPMLs}
\end{subfigure}
\end{minipage}
\caption{We train the six dataset with DP-SGD approach that was described as Algorithm~\ref{alg:DP-SGD}, incorporating a gradient norm clipping threshold as 1, and differential privacy budget, epsilon=1. Specifically, we select the
learning rate from \{1, 5\},
running step T from \{300, 500, 1000\}, decay rate from \{0.1, 0.5\}, and batch size from\{20, 100, 200, 500, 1000, 3000\}.  Additionally, we train another DP method, PATE-learning, based on \citep{papernot2017semisupervised}. For each dataset, we consider three different teacher numbers chosen from  \{10, 15, 20, 100, 150, 200, 300, 450, 800\}. The figure illustrates a comparison of accuracy using various differential privacy methods, which includes Non-DP, AIM (generated DP synthetic data), DP-SGD, PATE learning (with 3 teacher numbers), respectively.}
\label{fig: aim vs dpgd vs pate}
\vskip -0.1in
\end{figure*}

\subsection{Assess AIM for Different Classifiers}
\label{app: Assess AIM for Different Classifiers}
We proposed AIM as the tool to generate synthetic data. Here we would show why select smartly marginal using AIM mechanism is beneficial for generating synthetic data. Figure \ref{fig: private-pgm vs aim} shows the experiments we conducted on three(3) datasets. For each dataset, we generated synthetic data with $\epsilon=1$, using AIM that using exponential mechanism to select the most useful marginals. 
We trained three classification models with two different target labels, \{$y_1, y_2, y_3$\}. 
The result reveals that, the performance of classifiers  trained on real data and AIM data are comparable.   
This suggests that AIM is effective even without prior knowledge and maintains its performance across various classifier. 
            
\begin{figure*}[tb]
\vskip 0.1in
\centering
\hspace{-5em}
\begin{minipage}{.22\textwidth}
\begin{subfigure}[Dutch Dataset]
{\includegraphics[scale=0.33]{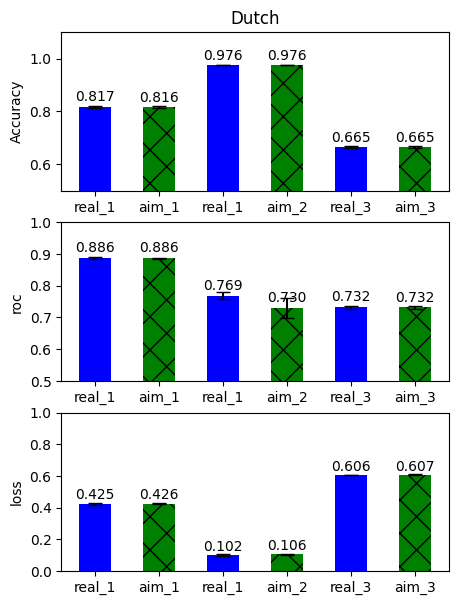}}
\label{fig:dutch_3_classifer}
\end{subfigure}
\end{minipage}
\hspace{5em}
\begin{minipage}{.22\textwidth}
\begin{subfigure}[Adult Dataset]
{\includegraphics[scale=0.33]{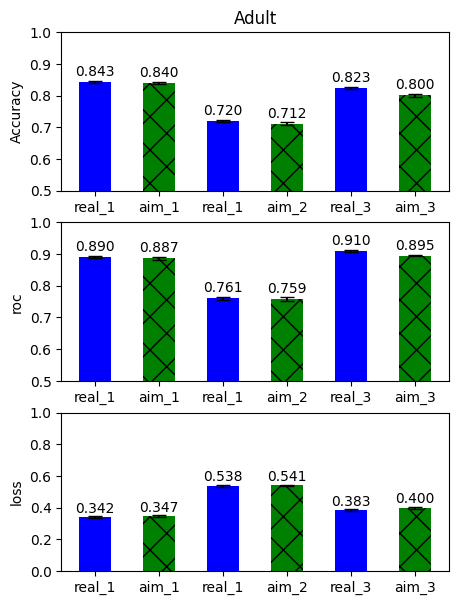}}
\label{fig:adult_3_classifer}
\end{subfigure}
\end{minipage}
\hspace{5em}
\begin{minipage}{.22\textwidth}
\begin{subfigure}[Law Dataset]
{\includegraphics[scale=0.33]{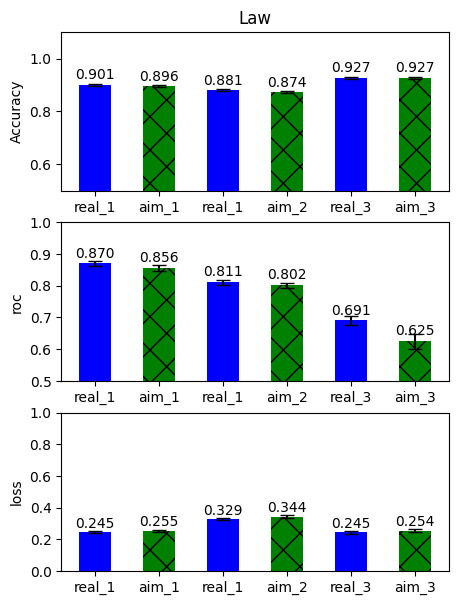}}
\label{fig:law_3_classifer}
\end{subfigure}
\end{minipage}
\caption{We train the three classifier models on each dataset and their synthetic data generated by AIM with privacy budget, epsilon=1. Dataset \{Adult, Churn, Law\}, three models are trained to classify three different target features: Dutch: \{'occupation', 'prev\_residence\_place', 'sex'\}, Adult: \{'income>50K', 'sex', 'relationship\},   Law: \{'pass\_bar', 'race', 'fulltime'\}. real\_1 and aim\_1 show results when classifying the first feature, and trained on real data, synthetic data from AIM, respectively; real\_2 and aim\_2 show results when classifying the 2nd feature, and trained on real data, synthetic data from AIM, respectively; real\_3 and aim\_3 show results when classifying the 3rd feature, and trained on real data, synthetic data from AIM, respectively.}
\label{fig: private-pgm vs aim}
\vskip -0.1in
\end{figure*}

\else{}
\fi

\end{document}